\documentclass{article} 
\usepackage{iclr2025_conference,times}


\usepackage{amsmath,amsfonts,bm}









\def\eqref#1{equation~\ref{#1}}









\def\1{\bm{1}}

\def\eps{{\epsilon}}










\DeclareMathAlphabet{\mathsfit}{\encodingdefault}{\sfdefault}{m}{sl}
\SetMathAlphabet{\mathsfit}{bold}{\encodingdefault}{\sfdefault}{bx}{n}













\DeclareMathOperator*{\argmin}{arg\,min}

\usepackage{hyperref}
\usepackage{url}
\usepackage{eqparbox}

\makeatletter
\NewDocumentCommand{\eqmathbox}{o O{c} m}{%
  \IfValueTF{#1}
    {\def\eqmathbox@##1##2{\eqmakebox[#1][#2]{$##1##2$}}}
    {\def\eqmathbox@##1##2{\eqmakebox{$##1##2$}}}
  \mathpalette\eqmathbox@{#3}
}
\makeatother

\title{InstaSHAP: Interpretable Additive Models Explain Shapley Values Instantly}


\author{James Enouen  \\
Department of Computer Science\\
University of Southern California\\
Los Angeles, CA \\
\texttt{enouen@usc.edu} \\
\And
Yan Liu \\
Department of Computer Science\\
University of Southern California\\
Los Angeles, CA \\
\texttt{yanliu.cs@usc.edu}
}

%

\DeclareSymbolFont{bbold}{U}{bbold}{m}{n}
\DeclareSymbolFontAlphabet{\mathbbold}{bbold}
\newcommand{\ind}{\mathbbold{1}}
\newcommand{\oct}{\hspace{0.5em}}

\let\epsilon\varepsilon
\let\eps\epsilon
\let\circphi\phi
\let\phi\varphi
\usepackage{comment}

\usepackage{relsize}

\usepackage{amsmath, amsthm, amssymb}
\theoremstyle{definition}

\usepackage{ifpdf}
\ifpdf
\usepackage[pdftex]{graphicx}
\else
\usepackage[dvips]{graphicx}
\fi
\newcommand{\makeheading}[1]%
        {\hspace*{-\marginparsep minus \marginparwidth}%
         \begin{minipage}[t]{\textwidth}%
                {\large \bfseries #1}\\[-0.15\baselineskip]%
                 \rule{\columnwidth}{1pt}%
         \end{minipage}}
\def\semicolon{;}
\def\applytolist#1{
    \expandafter\def\csname multi#1\endcsname##1{
        \def\multiack{##1}\ifx\multiack\semicolon
            \def\next{\relax}
        \else
            \csname #1\endcsname{##1}
            \def\next{\csname multi#1\endcsname}
        \fi
        \next}
    \csname multi#1\endcsname}

\def\calc#1{\expandafter\def\csname c#1\endcsname{{\mathcal #1}}}
\applytolist{calc}QWERTYUIOPLKJHGFDSAZXCVBNM;
\def\bbc#1{\expandafter\def\csname bb#1\endcsname{{\mathbb #1}}}
\applytolist{bbc}QWERTYUIOPLKJHGFDSAZXCVBNM1;



\theoremstyle{definition}
\newtheorem{theorem}{Theorem}
\newtheorem{claim}{Claim}


\usepackage{amsthm}
\usepackage{amsmath}
\usepackage{amssymb}
\usepackage{graphicx}
\usepackage{mathtools}
\usepackage{enumerate}
\usepackage{enumitem}
\usepackage{footnote}
\usepackage{float}
\usepackage{xspace}
\usepackage{multirow}
\usepackage{nicefrac}
\usepackage{wrapfig}
\usepackage{framed}
\usepackage{url}











\usepackage{color}
\definecolor{deepblue}{rgb}{0,0,0.5}
\definecolor{deepred}{rgb}{0.6,0,0}
\definecolor{deepgreen}{rgb}{0,0.5,0}
\definecolor{deeporange}{rgb}{0.8,0.4,0}

\newboolean{showcomments}
\setboolean{showcomments}{true} 
\ifthenelse{\boolean{showcomments}}{
  \newcommand{\nbc}[3]{
    {\colorbox{#3}{\bfseries\sffamily\scriptsize\textcolor{white}{#1}}}%
    {\textcolor{#3}{\textsf\small$\blacktriangleright$\textit{#2}$\blacktriangleleft$}}}
  \newcommand{\todo}[1]{\nbc{TODO}{#1}{blue}\xspace}
}{
  \newcommand{\nbc}[3]{}
  \renewcommand{\todo}[1]{}
}

\newcommand{\blue}[1]{\textcolor{blue}{#1}}

\usepackage{xfrac}
\usepackage{tabularx}
\usepackage{adjustbox}
\usepackage{subcaption}

\iclrfinalcopy 
\begin{document}

\maketitle

\begin{abstract}
In recent years,
the Shapley value and SHAP explanations have emerged as one of the most dominant paradigms for providing post-hoc explanations of black-box
models.
Despite their well-founded theoretical properties,
many recent works have focused on the limitations 
in both their computational efficiency and their representation power.
The underlying connection with additive models, however, is left critically under-emphasized in the current literature.
In this work, we find that a variational perspective linking GAM models and SHAP explanations is able to provide deep insights into nearly all recent developments. 
In light of this connection, we borrow in the other direction to develop a new method to train interpretable GAM models which are automatically purified to compute the Shapley value in a single forward pass.
Finally, we provide theoretical results showing the limited representation power of GAM models is the same Achilles' heel existing in SHAP and discuss the implications for SHAP's modern usage in CV and NLP.
\end{abstract}

\section{Introduction}

Since their introduction into machine learning,
Shapley values have had a meteoric rise within the space of model explanation.
The principled axioms of Shapley~\citep{shapley1953shapley} and the easy-to-use framework of SHAP~\citep{lundberg2017shapleySHAP} have led to their widespread adoption when compared with alternatives in gradient-based and black-box explanation methods.
The developments which then followed quickly pushed beyond tabular datasets into
higher dimensional data like {computer vision and natural language},
with abundant research investigating how to improve the speed and efficiency of Shapley values across these various high-dimensional domains 
\citep{covert2021explainingByRemoving,mosca2022shapForNLPInterpretability,jethani2022fastSHAP,covert2023shapleyForVIT,enouen2024textGenSHAP}.
In recent years, however,
some lines of work have identified specific application scenarios where the Shapley value is provably guaranteed to fail, perhaps begging the question of whether such works improving on the Shapley value are instead done in vain
\citep{bilodeau2022impossibilityTheoremsForFeatureAttribution,huang2023inadequacyOfSHAP}. 
Unfortunately, many of these critiques have been made piecemeal without an overall sense of their underlying causes.
In contrast, this work takes the perspective that SHAP's issues of explanation speed and explanation power can all be viewed under the same lens through the underlying connection with additive models and feature interactions.


In this work,
we find that all of the most recent development in the SHAP value, like the practical improvements of FastSHAP \citep{jethani2022fastSHAP} and the theoretical advancements of FaithSHAP \citep{tsai2023faithSHAP},
%
 can be unified and more easily understood by using a functional perspective using additive models.
%
%
In particular, 
the amortization scheme introduced by FastSHAP \citep{jethani2022fastSHAP}
builds on the least squares formulation of the Shapley value
\citep{charnes1988extremalLeastSquaresShapleyCoreCheby,lundberg2017shapleySHAP} by training a global approximator for the Shapley value,
which can each be viewed as fitting a pointwise and a global additive model, respectively. 
\cite{tsai2023faithSHAP} instead extends this least squares characterization to the bivariate and higher-order interactions 
to yield a richer understanding of the model to explain,
drawing the same parallels with higher-order additive models.
%
%
%
%
%

All of these extensions and more can be though of as special cases of training additive models when considered from the functional ANOVA perspective \citep{sobol2001globalSensitivity},
allowing not only for their simple combination but a greater understanding of the underlying mechanisms overall.
We further find that there is a deep variational connection which underscores all such recent developments between the fundamentally interrelated SHAP explanation and GAM model,
paralleling the algebraic case studied in \cite{bordt2023shapleyToGAMandBack} and building on various works in the GAM literature studying purification of additive models \citep{hooker2007generalizedFunctionalANOVA,hart2018sobolCovariancesDependentVariables,lengerich2020purifyingInteractionEffects,xingzhi2022pureGAM}.

Overall, we set out to emphasize these fundamental connections between SHAP and GAM (as well as Faith-SHAP-k and GAM-k) across all possible correlated input distributions, where previous work only addressed the case of independent input variables \citep{bordt2023shapleyToGAMandBack}.
We use these theoretical advances in the variational equation to provide a simple check of whether SHAP is trustworthy by requiring a GAM model can be trained to the same accuracy
(as depicted in Figure \ref{fig:teaser_GAM_SHAP_DNN_comparison_of_DNNSHAP_vs_GAMSHAP}.)
This development allows for deeper insights and practical checks into SHAP's application in important ML domains like CV and NLP where input features are heavily correlated.
Further, by developing a new technique to automatically purify additive models to return the corresponding SHAP values, we simultaneously solve a longstanding problem from the GAM literature 
as well as
demonstrate practical advantages of our method InstaSHAP over the existing FastSHAP.
%
%
We envision our major contributions as follows:

\begin{itemize}
    \item Establishment of the variational formulation of Shapley additive models alongside the existing variational formulations of GAM and functional ANOVA, solving the case of dependent variables which was left open in \cite{bordt2023shapleyToGAMandBack}.
    \item Introduction of a practical training method for GAM models 
    via an alternative loss function with output masking,
    automatically solving the problem of GAM purification and allowing for `instant' access to the Shapley values.
    \item Theoretical insights into the real-world application of SHAP and many comparative experiments across synthetic and real-world domains of interest, helping to lay bare the question of whether Shapley values are trustworthy for ML in practical environments.
\end{itemize}

\begin{figure}[t]
    \centering
%
    \includegraphics[width=0.43\columnwidth]{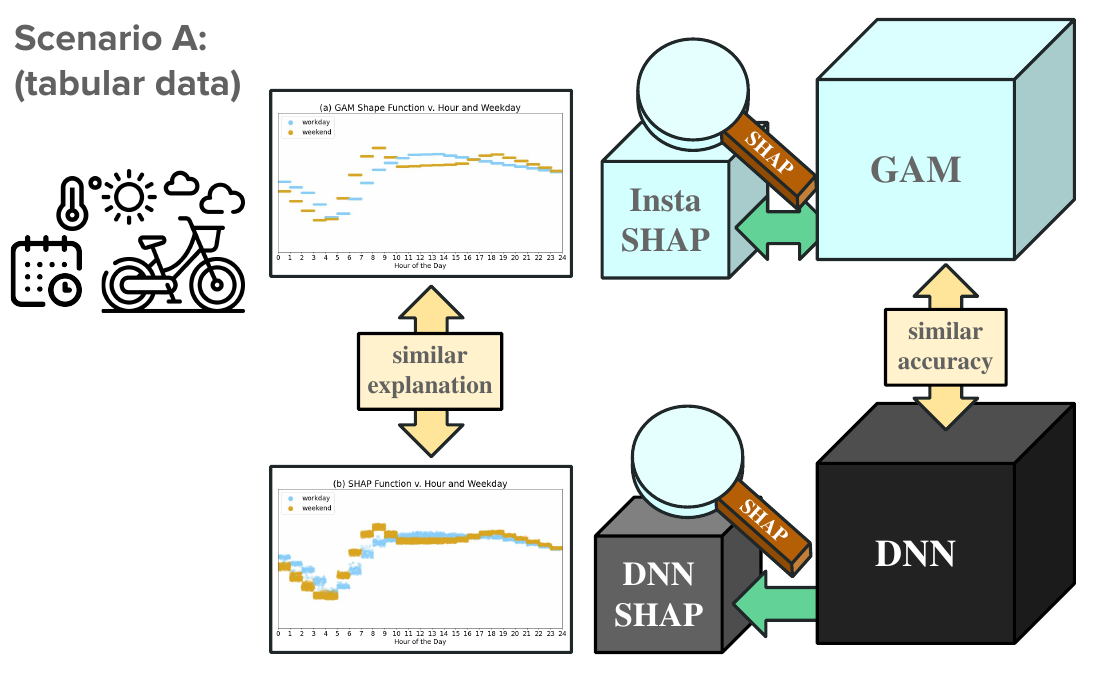}
    \quad\quad
    \includegraphics[width=0.495\columnwidth]{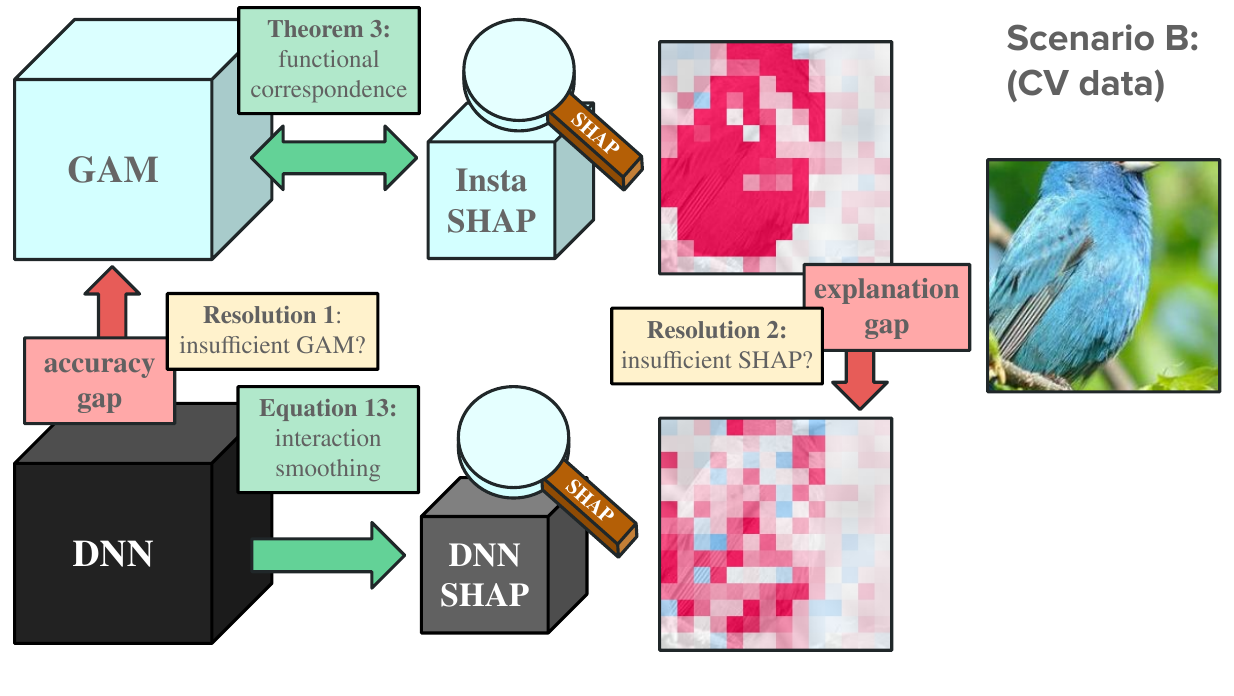}    
    \caption{ 
    The fundamental correspondence between SHAP and GAM is used practically to distinguish two unique scenarios.
    In scenario A, such as simpler tabular data, GAM models can achieve SOTA performance and their SHAP explanations align with SHAP explanations of blackbox models.
    In modern ML applications like computer vision, we have scenario B, where there is a gap between GAM and DNN performance in practice.  This means that either: (\#1) we cannot train GAMs as well as other deep neural networks; or (\#2) there is an overcredulous trust of SHAP in these domains.}
\label{fig:teaser_GAM_SHAP_DNN_comparison_of_DNNSHAP_vs_GAMSHAP}
\end{figure}


\section{Background}

Let $F:\cX \to \cY$ be a function representing a machine learning model which maps from input space $\cX \subseteq \bbR^d$ to output space $\cY \subseteq \bbR^c$, where there are $d$ input features and $c$ output features.
We will use $[d]:=\{1,\dots,d\}$ to represent the set of input features and $S\subseteq[d]$ to represent a subset of the input features.
We also write the set of all such subsets, the powerset, as $\cP([d]) \cong \{0,1\}^d$.

\subsection{Explaining by Removing}
\label{sec:background_explaining_by_removing}

A very important aspect of removal-based explanations like LIME or SHAP \citep{ribeiro2017lime,lundberg2017shapleySHAP}
is the method of feature removal \citep{sundararajan2020theManyShapleyValues,covert2021explainingByRemoving}.
We review the three most popular removal approaches: replacing the feature with a reference value (baseline), integrating over the marginal distribution of the feature (marginal), or integrating over the conditional distribution of the feature (conditional).
When applied to an explanation method like the Shapley value, these result in the corresponding: baseline Shapley, marginal Shapley, or conditional Shapley \citep{sundararajan2020theManyShapleyValues,janzing2020interventionalShapleyValue,frye2021shapleyOnTheManifold}.

First, consider an input example one would like to explain $x\in\cX$ and a subset of the features which one would like to keep $S\subseteq[d]$ as part of the model.
We may then compare against a baseline value $\Bar{x}\in\bbR^d$ and define the baseline value as $\cB_{\Bar{x}}$ as below.
We may also choose a distribution of baselines $p(x)$ over the input space $\cX$,
which allows us to define both the marginal projection and the conditional projection, $\cN_p$ and $\cM_p$.
\begin{alignat}{3}
[\cB_{\Bar{x}}\circ F](x,S) &:=  
& &
F(x_S,\Bar{x}_{-S})
\label{eqn:baseline_removal_equation}
\\
[\cN_p\circ F](x,S) &:= 
\bbE_{\Bar{X}_{-S} \sim p(X_{-S})} &
\Big[ &
F(x_S,\Bar{X}_{-S})
\Big]
\label{eqn:marginal_expectation_removal_equation}
\\
[\cM_p\circ F](x,S) &:=
\bbE_{\Bar{X}_{-S} \sim p(X_{-S}|X_S=x_S)} &
\Big[ &
F(x_S,\Bar{X}_{-S})
\Big]
\label{eqn:conditional_expectation_removal_equation}
\end{alignat}
We write these three operators as functionals mapping $F$ to a new function $f$ to support our analysis from a functional perspective.
Historically, the baseline value and marginal value have been the easiest to use in practice because we may directly explain our blackbox $F$ without significant modifications.
However,
since the highlighting of the `off-the-manifold' problem by \cite{frye2021shapleyOnTheManifold}, it has been shown that
baseline methods $\cB_{\Bar{x}}$ and marginal methods $\cN_{p}$ significantly overemphasize the algebraic structure of the model instead of the statistical structure.
If one is exclusively interested in the algebraic dependencies of their ML model,
the correspondence highlighted herein has already been established in \cite{bordt2023shapleyToGAMandBack}.
Otherwise we hereafter restrict our attention to the conditional expectation using $\cM_p$
and provide details on further considerations in Appendix \ref{app_sec:explain_by_removing}.

We define a feature attribution method $\Phi$ as taking $F(x)$ and returning a local explanation function $[\Phi_i\circ F](x)$ for each feature $i\in[d]$ on each local input $x\in\cX$.
Similarly, we define a blackbox feature attribution method as instead taking a masked function $f(x,S)$ and returning a local explanation function for each feature, $[\circphi_i\circ f](x)$.

We now provide one of the typical definitions of the Shapley value as follows;
however, we recommend the unfamiliar reader instead waits until the more intuitive Equation \ref{eqn:shap_via_unanimity_or_synergy_functions}.
\begin{align}
[\circphi^{\text{SHAP}}_i\circ f](x) =  \mathlarger{\sum}_{S\subseteq [d]}
p^{\text{SHAP-unif}}(S) 
\cdot
\Big[
f(x,S+i) - f(x,S-i)
\Big]
\label{eqn:shap_original_definition}    
\end{align}
%
%
%
%
\vspace{-7.1pt}
\begin{align}
    p^{\text{SHAP-unif}}(S) \propto {d \choose s}^{-1} \frac{1}{d+1} 
    \label{eqn_defn:SHAP_uniform}
\end{align}
Here, the Shapley value is defined as the addition and removal of a single feature $i\in[d]$ across many contexts $S\subseteq[d]$ according to the distribution $p^{\text{SHAP-unif}}(S)$ where the shorthand $s=|S|$ is used.
Following the discussion in the previous section, we will in this work always consider the conditional Shapley $\Phi^{\text{cond-SHAP}} := \circphi^{\text{SHAP}} \circ \cM_p$.
Alternative black-box explanations to the Shapley value are discussed further in {Appendix} \ref{app_sec:post_hoc_feature_attribution_and_interaction_attribution}.
%


\subsection{Interpreting by Adding} 
We now introduce the interpretable generalized additive model (GAM) of 
\citet{hastie1990originalGAM}.
This model is seen as interpretable because each of the input features have a simple 1D relationship with their effect on the output.
In this work, we also include the `zero dimensional' normalizing constant $f_\emptyset$ and often use the term GAM1 to emphasize a GAM that only has 1D functions.
\vspace{4pt}
\begin{alignat}{4}
    \nonumber
    F^{\scalebox{0.55}{$\leq 1$}}(x_1,\dots,x_d) 
    &\oct =\oct &
    f_\emptyset 
    &\oct +\oct &
    \eqmathbox[gam1gam1]{\underbrace{ f_1(x_1) + \dots + f_d(x_d)}}
    \\
    &\oct =\oct &
    f_\emptyset 
    &\oct +\oct &
    \eqmathbox[gam1gam1]{\sum_{i\in[d]} f_i(x_i)}
\label{eqn:fnl_GAM1}
\end{alignat}
This can further be generalized to a GAM2 model \citep{wahba1994ssanova,lou2012intelligible,lou2013accurate,chang2022nodegam} which is still seen as an interpretable model because its 2D functions can still be visualized using a heatmap plot.
\begin{alignat}{7}
    \nonumber
    F^{\scalebox{0.55}{$\leq 2$}}(x_1,\dots,x_d) 
    &\oct =\oct &
    f_\emptyset 
    &\oct +\oct &
    \eqmathbox[gam2gam1]{\underbrace{ f_1(x_1) + \dots + f_d(x_d)}}
    &\oct +\oct &
    \eqmathbox[gam2gam2]{\underbrace{ f_{\scriptscriptstyle   1,2}(x_1,x_2) + \dots + f_{\scriptscriptstyle   d-1,d}(x_{d-1},x_d) }}
    \\
    &\oct =\oct &
    f_\emptyset 
    &\oct +\oct &
    \eqmathbox[gam2gam1]{\sum_{i\in[d]} f_i(x_i)}
    &\oct +\oct &
    \eqmathbox[gam2gam2]{\sum_{\{i,j\}\subseteq[d]} f_{i,j}(x_i,x_j)}
\label{eqn:fnl_GAM2}
\end{alignat}

Recent research has additionally focused on addressing the practical considerations associated with training increasingly high-order GAMs
\citep{yang2020gamiNet,dubey2022scalablePolynomials,enouen2022sian}.
For some order $k \geq 3$, we may define the higher-order GAM-k as:
\begin{alignat}{7}
    F^{\scalebox{0.55}{$\leq k$}}(x_1,\dots,x_d) 
    &\oct =\oct &
    f_\emptyset 
    &\oct +\oct &
    \sum_{i\in[d]} f_i(x_i)
    &+\oct\dots\oct +&
    \sum_{S\subseteq[d], |S|=k} f_S(x_S)
    &\oct =\oct &
    \sum_{S\in\cI_{\leq k}} f_S(x_S)
    \quad\quad
\label{eqn:fnl_GAM_k}
\end{alignat}
where we write $\cI_{\leq k} := \{ S\subseteq[d] : |S|\leq k\}$.

This might immediately raise the question of when to stop adding higher-order functions to our GAM model.
Multiple practical works have shown that for tabular data, $k$ does not have to be too large: GAM-1 and GAM-2 are often sufficient to fit the complexities of the data and achieve state-of-the-art performance across many datasets \citep{chang2022nodegam,enouen2022sian}.
The same question for CV or NLP, however, has faced little exploration in previous works. 
In order to answer this question precisely, however, we instead turn to the functional ANOVA decomposition coming from the field of sensitivity analysis.

\subsection{Functional ANOVA}
\label{sec:background_functional_ANOVA}
In the literature on sensitivity analysis,
we may take any function and completely decompose it via its \textbf{\emph{functional ANOVA decomposition}}
\citep{sobol2001globalSensitivity,hooker2004discovering}:
\begin{alignat}{3}
    F(x_1,\dots,x_d) 
    &\oct =\oct &
    \sum_{S\subseteq[d]} \tilde{f}_S(x_S)
    \label{eqn:fnl_ANOVA_decomp}
\end{alignat}
Although there are many possible choices of $\tilde{f}_S$ which could obey this equation, we may define a unique decomposition via the conditional projection from Section \ref{sec:background_explaining_by_removing}:
\begin{align}
    \tilde{f}_S(x_S) 
    :=
    \sum_{T\subseteq S} (-1)^{|S|-|T|} f(x,T)
    =
    \sum_{T\subseteq S} (-1)^{|S|-|T|} [\cM_p\circ F](x,T)
\end{align}
Hereafter, we often follow the sensitivity analysis notation of writing $\tilde{f}_S(x_S)$ rather than $\tilde{f}(x,S)$.

This specific choice using conditional projection is often called the `Sobol-Hoeffding' decomposition.
In the case of independent input variables, \cite{sobol2001globalSensitivity} provides us a complete understanding of what happens to the variational solution of training any additive model.
In particular, the variance or the mean squared error is able to decompose completely via the \textbf{\emph{decomposition of variance}} formula:
\begin{align}
    \bbV
    :=
    \bbV\text{ar}_{X}[  F(X)  ]
    =
    \sum_{S\subseteq[d]} \bbV\text{ar}_{X_S}[  \Tilde{f}_S(X_S)  ]
    =
    \sum_{S\subseteq[d]} \bbV_S
\label{eqn:sobel_decomp_of_var}
\end{align}
where the orthogonal contributions, $\bbV_S := \bbV\text{ar}_{X_S}[  \Tilde{f}_S(X_S)  ]$, are called the Sobol indices and measure the variance which can be uniquely ascribed to each feature interaction $S$.

Unfortunately, this variational formulation for additive models breaks down for the case of correlated input variables.
The best existing alternative in the literature is the Sobol covariances \citep{rabitz2010correlatedSobolIndices,hart2018sobolCovariancesDependentVariables} which is instead defined as $\bbC_S := \bbC\text{ov}_{X}[ F(X), \Tilde{f}_S(X_S)  ]$.
However, these covariances may result in values which are both positive and negative, conflating the synergistic effects between a set of features $S$ and the redundant effects of shared information amongst the same set of features $S$.

We will still say that a statistical \textbf{\emph{feature interaction}},
$S\subseteq[d]$, 
is said to exist whenever
\begin{align}
\bbV_S 
:= 
\bbV\text{ar}_{X_S}[  \Tilde{f}_S(X_S)  ]
> 0
\oct,
\end{align}
however, for the case of correlated input features, we importantly need to distinguish between the two major types of feature interactions:
\begin{enumerate}
    \item feature interaction \emph{synergy}, where $\bbV_S>0$ and $\bbC_S > 0$.
    \item feature interaction \emph{redundancy}, where $\bbV_S>0$ but $\bbC_S < 0$.
\end{enumerate}

\begin{figure}
    \centering
    \includegraphics[width=0.78\linewidth]{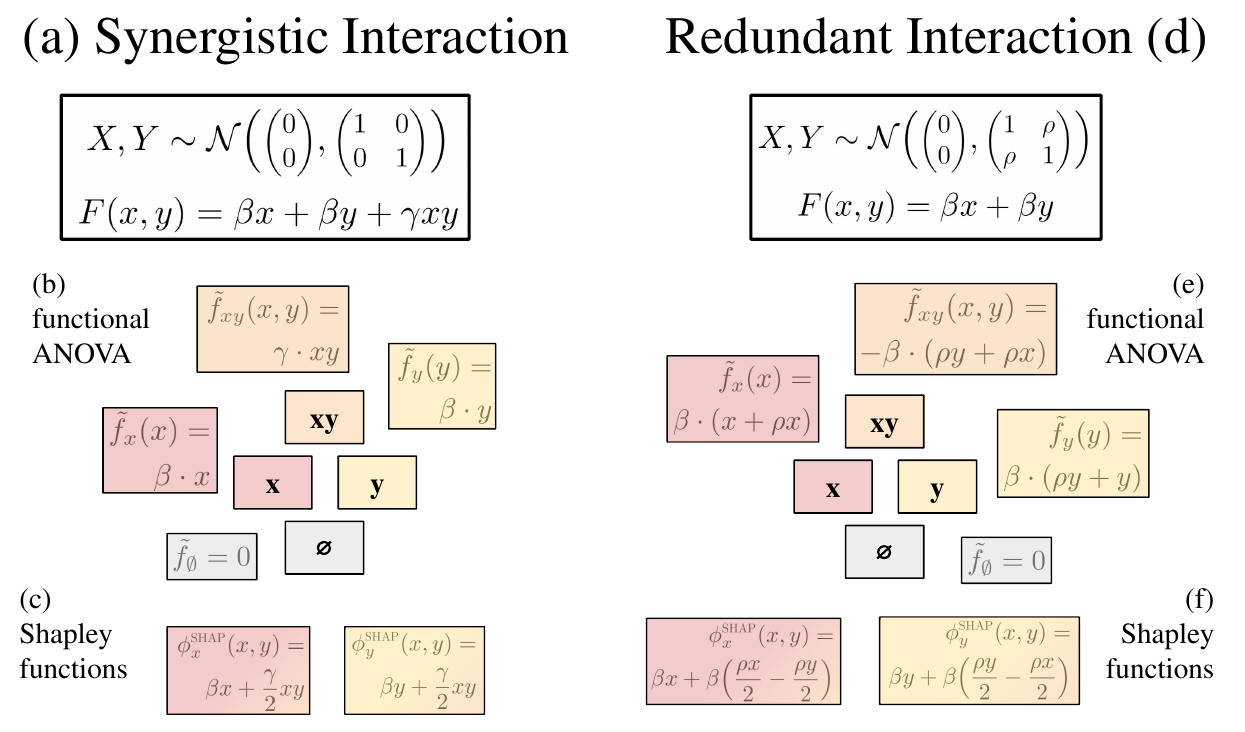}
    \caption{Simple examples (Gaussian input variables and multilinear response variables) which demonstrate each of the two major types of feature interactions: synergistic interactions and redundant interactions.
    Their full functional ANOVA and exact Shapley functions are additionally calculated and shown.
    Colored by relevant variable.
    Note we use $x$ and $y$ instead of $x_1$ and $x_2$.}
    \label{fig:simple_2D_example_Shapley_fnl_calculation_synergy_and_redundancy}
\end{figure}

Using this functional ANOVA perspective,
we can now write the Shapley value as a complete function for each variable $i\in[d]$ using the well known alternative via the synergy or unanimity functions \citep{shapley1953shapley}:
\begin{align}
[\circphi^{\text{Sh}}_i\circ f] (x) = \sum_{S\supseteq \{i\}} \frac{\Tilde{f}_S(x_S)}{|S|}
\label{eqn:shap_via_unanimity_or_synergy_functions}
\end{align}
Intuitively,
the value which is ascribed to each feature interaction $S$ is uniformly divided amongst each of its constituents $i\in S$.
In Figure \ref{fig:simple_2D_example_Shapley_fnl_calculation_synergy_and_redundancy},
we can see the easily computed Shapley functions coming from computing the functional ANOVA decomposition, dividing the interaction effects in both the synergistic and the redundant setting.
In real-world datasets and in the presence of higher-order interactions, it easy to imagine how quickly such effects will compound and conflate one another.

\section{Representation Power of Additive Models}

Before proceeding further with the variational GAM methods we introduce,
we find it is important to characterize the behavior of SHAP in terms of the functional ANOVA decomposition.
In particular, we will do this in the form of an ``impossibility theorem'' to help cement the correspondence which exists between GAM and SHAP. 
However, unlike previous works focusing on the flaws of SHAP,
we not only exactly characterize all negative results showing when hypothesis tests are impossible, but consequently characterize all positive results showing exactly when hypothesis tests are possible.

\subsection{SHAP Function Space}

\begin{theorem}
\label{thm:shitty_theorem_SHAP_GAM}
    \textbf{(SHAP$\cong$ANOVA-1)}
    SHAP will succeed on any hypothesis for some hypothesis space $\cH$ if and only if $\cH$ is completely free of feature interactions ($\cH \subseteq \cH_{\text{ANOVA}}^{\leq 1}$).

\end{theorem}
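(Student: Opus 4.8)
The plan is to read both directions off Equation~(\ref{eqn:shap_via_unanimity_or_synergy_functions}), which expresses the (conditional) Shapley function of a model $h$ as $[\circphi^{\text{Sh}}_i\circ\cM_p\circ h](x)=\sum_{S\ni i}\tilde f_S(x_S)/|S|$ in terms of its conditional Sobol--Hoeffding components $\tilde f_S$. Fix the input law $p$, so that $\cM_p$, the components $\tilde f_S$, and $\Phi^{\text{cond-SHAP}}=\circphi^{\text{Sh}}\circ\cM_p$ are all determined; recall that $\cH_{\text{ANOVA}}^{\leq 1}$ is exactly the class of models with $\tilde f_S\equiv 0$ for every $|S|\ge 2$. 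Unwinding the definition of ``SHAP succeeds on a hypothesis $h$'' (given in full in the appendix), the content to establish is that, for each $h\in\cH$, SHAP succeeds on $h$ precisely when every reported per-feature attribution $[\circphi^{\text{Sh}}_i\circ\cM_p\circ h]$ is a faithful attribution for feature $i$ --- a function of $x_i$ alone, coinciding with the genuine main effect $\tilde f_{\{i\}}$ --- so the theorem reduces to: this holds for all $h\in\cH$ if and only if no $h\in\cH$ carries a higher-order component.

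For the $(\Leftarrow)$ (``possibility'') direction, take $h\in\cH\subseteq\cH_{\text{ANOVA}}^{\leq 1}$. Then $\tilde f_S\equiv 0$ for all $|S|\ge 2$, so in Equation~(\ref{eqn:shap_via_unanimity_or_synergy_functions}) only the $S=\{i\}$ term survives and $[\circphi^{\text{Sh}}_i\circ\cM_p\circ h](x)=\tilde f_{\{i\}}(x_i)$: the SHAP output is literally the unique purified $1$D functional-ANOVA decomposition of $h$. Together with the base value this even recovers $h=\tilde f_\emptyset+\sum_i[\circphi^{\text{Sh}}_i\circ\cM_p\circ h]$, so the explanation is a lossless, faithful summary of $h$ and SHAP succeeds on it; this is simultaneously the ``all positive results'' half promised in this section.

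For the $(\Rightarrow)$ (``impossibility'') direction I argue the contrapositive. Suppose some $h^\star\in\cH$ has $\tilde f^\star_S\not\equiv 0$ with $|S|=:k\ge 2$. For $i\in S$, Equation~(\ref{eqn:shap_via_unanimity_or_synergy_functions}) gives $[\circphi^{\text{Sh}}_i\circ\cM_p\circ h^\star]=\tilde f^\star_{\{i\}}+r_i$ with interaction remainder $r_i:=\sum_{T\ni i,\,|T|\ge 2}\tilde f^\star_T/|T|$, which contains the nonzero term $\tilde f^\star_S/k$. The key step is to show the linear ``leakage'' map $(\tilde f^\star_T)_{|T|\ge 2}\mapsto(r_i)_i$ is injective, so that $r_i\not\equiv 0$ for some $i$ --- i.e.\ SHAP's attribution for that feature is contaminated by the interaction (it depends on foreign coordinates and differs from $\tilde f^\star_{\{i\}}$), so SHAP fails on the hypothesis $h^\star$. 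To turn this into an honestly failed hypothesis \emph{test} one also builds a decoy $h'\in\cH_{\text{ANOVA}}^{\leq 1}$ with the same SHAP output as $h^\star$ on the queried input region --- obtained by pushing the (symmetrized) interaction term $\tilde f^\star_S$ down onto the singleton components --- so that no rule on SHAP outputs can separate ``$h^\star$ interacts'' from ``$h'$ does not''.

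I expect the crux to be the $(\Rightarrow)$ direction in the \emph{correlated} setting. For independent inputs the $\tilde f_S$ are hierarchically orthogonal, which makes the injectivity of the leakage map (and the validity of the decoy construction) immediate; for dependent inputs the conditional components satisfy only a marginal mean-zero condition and can even ``look low-order'', so one must rule out accidental cancellation among the $r_i$ and verify that the redistributed decoy genuinely has no order-$\ge 2$ conditional component, by tracking how $\cM_p$ fails to factor across coordinates. This is exactly the dependent-variable regime the paper sets out to handle, and it carries essentially all of the technical weight; the $(\Leftarrow)$ direction and the reduction through Equation~(\ref{eqn:shap_via_unanimity_or_synergy_functions}) are otherwise routine.
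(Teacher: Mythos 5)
Your $(\Leftarrow)$ direction matches the paper's: when $\cH \subseteq \cH_{\text{ANOVA}}^{\leq 1}$, only the singleton terms survive in Equation~\ref{eqn:shap_via_unanimity_or_synergy_functions}, the attributions are the purified main effects $\tilde f_{\{i\}}$, and (in the paper's language) the completion of the SHAP trace recovers the original function exactly, so every hypothesis test is answerable from the SHAP output.

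The problem is the $(\Rightarrow)$ direction, and it sits exactly where you flag the crux. First, the paper's success criterion is not faithfulness of the per-feature attributions; it is whether \emph{every} splitting of $\cH$ into $\cH_0$ vs.\ $\cH_1$ can be decided from the SHAP trace, i.e.\ identifiability of the function from SHAP outputs, and your reduction of ``success'' to ``$\circphi_i$ equals $\tilde f_{\{i\}}$ and depends on $x_i$ alone'' is a different notion that the paper neither states nor needs. Second, your main tool for the impossibility half --- injectivity of the leakage map $(\tilde f_T)_{|T|\ge 2}\mapsto (r_i)_i$ --- is left unproven and is precisely what is fragile under dependence: the conditional Sobol--Hoeffding components are not orthogonal, so the cancellation you would have to exclude is the whole difficulty, as you concede. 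The paper's proof never touches that question. Arguing the contrapositive, it uses the standing richness assumption that $\cH$ contains (a shift of) $\cH_{\text{ANOVA}}^{\leq 1}$ to exhibit two \emph{distinct functions inside $\cH$} whose SHAP traces coincide on the region where $\tilde f_S\not\equiv 0$, and then any hypothesis test separating that pair is unanswerable --- indistinguishability of a pair, not contamination of individual attributions. Your ``decoy'' $h'$ is the right instinct and is essentially this construction, but as written it is incomplete in two ways: $h'$ must lie in $\cH$ for ``$h^\star$ vs.\ $h'$'' to be a test inside the hypothesis space (this is what the paper's superset-of-ANOVA-1 assumption is for, and you only place $h'$ in $\cH_{\text{ANOVA}}^{\leq 1}$), and the properties you assign to it (identical SHAP output, no order-$\ge 2$ conditional components after pushing $\tilde f_S$ onto the singletons) are exactly the dependent-input verifications you defer. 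So the proposal identifies the right objects but leaves the load-bearing step of the impossibility half unestablished, whereas the paper's route bypasses the leakage-injectivity issue entirely.
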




We leave the details of the proof until Appendix \ref{app_sec:impossibility_theorems}; however, for one direction it is relatively straightforward to see from from Equation \ref{eqn:shap_via_unanimity_or_synergy_functions} that SHAP can succeed if all interaction terms are zero.
Conversely, 
if some true model $F\in\cH$ is not representable by an ANOVA-1 model,
then $F\notin\cH_{\text{ANOVA}}^{\leq 1}$ and hence SHAP is instead obscuring the feature interactions.
Importantly, we emphasize that this means SHAP cannot distinguish synergistic feature interactions nor can it distinguish redundant feature interactions.
%
We can additionally show the exact same type of relationship is true for Faith-SHAP-k.



\begin{theorem}
\label{thm:shitty_theorem_SHAP_k_higher_order}
    \textbf{(Faith-SHAP-k$\cong$ANOVA-k)}
    For any $k\in[d]$,
    Faith-SHAP-k will succeed on any hypothesis test in some hypothesis space $\cH$ if and only if $\cH$ is free of higher-order features interactions of size ($k+1$) or greater  
    ($\cH \subseteq \cH_{\text{ANOVA}}^{\leq k} $).
    
\end{theorem}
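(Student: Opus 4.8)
This is the natural higher-order generalization of Theorem \ref{thm:shitty_theorem_SHAP_GAM}, so the plan is to follow the same two-direction strategy, but now using the Faith-SHAP-$k$ analogue of the synergy-function formula (Equation \ref{eqn:shap_via_unanimity_or_synergy_functions}). The first step is to record the functional-ANOVA characterization of Faith-SHAP-$k$: just as SHAP assigns to feature $i$ the sum $\sum_{S\supseteq\{i\}} \tilde f_S(x_S)/|S|$, the Faith-SHAP-$k$ attribution assigns to each subset $T$ with $|T|\le k$ a combination of the higher-order terms $\tilde f_S(x_S)$ for $S\supseteq T$, with coefficients given by the appropriate Banzhaf/Shapley-type redistribution weights, together with an \emph{exact} recovery of $\tilde f_S$ itself whenever $|S|\le k$. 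The key algebraic fact I would establish (or cite from the least-squares characterization of Faith-SHAP-$k$) is that applying Faith-SHAP-$k$ and then reassembling the attributions reproduces the function $F$ exactly when $F$ has no interactions of order $>k$, i.e. when $\tilde f_S \equiv 0$ for all $|S|>k$; in that case the map $F \mapsto (\text{Faith-SHAP-}k\text{ attributions})$ is injective on $\cH_{\text{ANOVA}}^{\le k}$ and a hypothesis test on $\cH$ is realized by reading off the attributions.

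For the ``if'' direction ($\cH \subseteq \cH_{\text{ANOVA}}^{\le k} \Rightarrow$ success): I would argue that on this restricted class the Faith-SHAP-$k$ attributions are in bijective correspondence with the tuple $(\tilde f_S)_{|S|\le k}$, which by the Sobol-Hoeffding uniqueness (Section \ref{sec:background_functional_ANOVA}) determines $F$ itself; hence any hypothesis about $F$ in $\cH$ can be decided from its Faith-SHAP-$k$ explanation. For the ``only if'' direction ($\cH \not\subseteq \cH_{\text{ANOVA}}^{\le k} \Rightarrow$ failure): pick $F^\star \in \cH$ with some $\tilde f_S \not\equiv 0$ for $|S| = k+1$ (the minimal offending order suffices by picking $S$ of smallest violating size). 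I would then construct a companion function $F'$ — obtained by moving the mass of the offending interaction $\tilde f_S$ into lower-order terms $\tilde f_T$, $T\subsetneq S$ — that is genuinely different from $F^\star$ yet produces identical Faith-SHAP-$k$ attributions, because the redistribution weights collapse the order-$(k+1)$ synergy onto the same lower-order attributions that a purely lower-order function would yield. This exhibits two hypotheses indistinguishable by Faith-SHAP-$k$, so no test can succeed; to make the argument airtight I would also check that $F'$ can be taken inside $\cH$ (or, if $\cH$ is only required to contain $F^\star$, phrase the impossibility as: there exists $F'\notin\cH_{\text{ANOVA}}^{\le k}$-complement witnessing the collision), exactly paralleling the $k=1$ discussion after Theorem \ref{thm:shitty_theorem_SHAP_GAM}.

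The main obstacle is the bookkeeping in the second direction: unlike $k=1$, where ``SHAP obscures interactions'' is immediate from Equation \ref{eqn:shap_via_unanimity_or_synergy_functions}, for general $k$ one must verify that the Faith-SHAP-$k$ weights actually send an order-$(k+1)$ term to the \emph{same} image in attribution space as a suitable lower-order perturbation — i.e. that the kernel of the attribution map restricted to "order $\le k+1$" functions contains a nonzero difference of a genuine interaction and a lower-order surrogate. This amounts to checking that the linear system defining Faith-SHAP-$k$ (its least-squares normal equations on $\cI_{\le k}$) is rank-deficient in exactly the expected way on terms of order $k+1$. I expect this to reduce to a clean identity among binomial coefficients analogous to Equation \ref{eqn_defn:SHAP_uniform}, which I would verify directly; everything else is a routine lift of the $k=1$ proof and I would relegate the full computation to Appendix \ref{app_sec:impossibility_theorems}.
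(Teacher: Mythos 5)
Your overall plan (two directions, using the least-squares/M\"obius characterization of Faith-SHAP-$k$) is the same as the paper's, and your ``if'' direction is essentially the paper's argument: by the M\"obius form (Equation \ref{app_eqn:faith_SHAP_mobius_form}), when all purified terms of order $>k$ vanish the Faith-SHAP-$k$ attribution of each $T$ is exactly $\tilde f_T$, so the attributions are in bijection with $(\tilde f_T)_{|T|\le k}$ and identify $F$ within $\cH_{\text{ANOVA}}^{\leq k}$. No new binomial identity is needed for this; the cited formula already does the work.

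The gap is in your ``only if'' direction. The kernel you hope to exhibit does not exist at the level you are working at: Faith-SHAP-$k$ satisfies efficiency, so the attribution functions sum to $f(x,[d])-f_\emptyset = F(x)-\E[F(X)]$, which means the full functional attribution map is injective up to an additive constant and no companion $F'$ with \emph{identical} attribution functions can be built by pushing an order-$(k+1)$ term into lower orders. Concretely, by Equation \ref{app_eqn:faith_SHAP_mobius_form} the attribution $\circphi_T$ of your $F^\star$ contains a term proportional to $\tilde f_{S_0}(x_{S_0})$ with $|S_0|=k+1$, which genuinely depends on coordinates outside $T$, whereas the attribution of any function in $\cH_{\text{ANOVA}}^{\leq k}$ is a function of $x_T$ alone; these can never coincide as functions. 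If instead you read ``attributions'' as the values at the single test point, the collision becomes easy but then your ``if'' direction collapses, since a finite list of numbers at one point cannot decide arbitrary hypotheses on $\cH_{\text{ANOVA}}^{\leq k}$. The paper resolves exactly this tension by introducing an intermediate object, the (Faith-)SHAP \emph{trace} at $x^*$ --- the attributions along perturbations of at most $k$ coordinates around $x^*$ --- together with its additive completion (Appendix \ref{app_sec:impossibility_theorems}): the trace is rich enough to reconstruct any ANOVA-$k$ function, yet two functions differing by a purified order-$(k+1)$ interaction produce indistinguishable traces. Your proof needs this restriction on what the hypothesis test may access (or an equivalent one); without it, one of your two directions is false as stated.
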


This similarly implies that even indices measuring feature interactions will still be forced to blur out the higher-order interactions and hence remain limited in their representational capacity.
Once again, we emphasize that the reliance of these approaches on the functional ANOVA decomposition means it is not possible for them to distinguish between synergistic interactions and redundant interactions.

\subsection{GAM Function Space}

Let us now contrast these two results with the representation power of GAM models.
\begin{theorem}
\label{thm:ANOVA_1_vs_GAM_1_strict_inclusion}
    \textbf{(ANOVA-1$\subsetneq$GAM-1)}
    The functional space of ANOVA-1 representable functions is a strict subset of the functional space of GAM-1 representable functions ($\cH_{\text{ANOVA}}^{\leq 1} \subsetneq \cH_{\text{GAM}}^{\leq 1} $).
\end{theorem}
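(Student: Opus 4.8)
The plan is to prove the two inclusions separately: that $\cH_{\text{ANOVA}}^{\leq 1} \subseteq \cH_{\text{GAM}}^{\leq 1}$ holds for every input distribution, and that the containment is proper by exhibiting one explicit witness. The easy direction is immediate from unwinding definitions: if $F \in \cH_{\text{ANOVA}}^{\leq 1}$ then every ANOVA component $\tilde f_S$ with $|S|\geq 2$ is (a.s.) zero, so Equation~\ref{eqn:fnl_ANOVA_decomp} collapses to $F = \tilde f_\emptyset + \sum_{i\in[d]}\tilde f_i(x_i)$, which is literally the GAM-1 form of Equation~\ref{eqn:fnl_GAM1} with $f_\emptyset := \tilde f_\emptyset$ and $f_i := \tilde f_i$; hence $F\in\cH_{\text{GAM}}^{\leq 1}$.

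For properness I would produce a single-coordinate GAM-1 function that, under a correlated $p$, acquires a nonzero second-order ANOVA component — the redundancy effect already drawn in Figure~\ref{fig:simple_2D_example_Shapley_fnl_calculation_synergy_and_redundancy}. Concretely, take $d=2$ with $(X_1,X_2)$ jointly Gaussian of zero mean, unit variances, and $\Cov(X_1,X_2)=\rho\neq 0$, and set $F(x_1,x_2):=x_1$. Trivially $F\in\cH_{\text{GAM}}^{\leq 1}$ (take $f_\emptyset=0$, $f_1(t)=t$, $f_2\equiv 0$). The conditional projections are $f(x,\emptyset)=0$, $f(x,\{1\})=x_1$, $f(x,\{2\})=[\cM_p\circ F](x,\{2\})=\bbE[X_1\mid X_2=x_2]=\rho x_2$, and $f(x,\{1,2\})=x_1$, so the Möbius inversion defining $\tilde f_{\{1,2\}}$ gives
\[
\tilde f_{\{1,2\}}(x_1,x_2) = f(x,\emptyset) - f(x,\{1\}) - f(x,\{2\}) + f(x,\{1,2\}) = 0 - x_1 - \rho x_2 + x_1 = -\rho x_2 \not\equiv 0,
\]
equivalently $\bbV_{\{1,2\}}=\rho^2>0$. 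Thus $F$ exhibits a (redundant) feature interaction, so $F\notin\cH_{\text{ANOVA}}^{\leq 1}$, and the inclusion is strict.

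I would also add one clarifying remark, since it isolates the only genuinely subtle aspect. When the coordinates of $X$ are mutually independent the two spaces in fact coincide — centering each univariate piece of a GAM-1 representation turns it into its own Sobol–Hoeffding decomposition — so the strictness is a phenomenon of correlated $p$, consistent with the paper's overall thesis. The conceptual crux is therefore not any computation but the initially counterintuitive fact that a GAM-1 function which manifestly reads off a single coordinate can nonetheless carry variance on a higher-order ANOVA term once the inputs are dependent; after that, everything reduces to bookkeeping with the definitions of $\cM_p$ and $\tilde f_S$. If one wants the statement to read as an unconditional strict inclusion, I would phrase the ambient $p$ as an arbitrary distribution under which at least one pair of coordinates is dependent, which is precisely the regime of interest throughout the paper.
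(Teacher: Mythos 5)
Your proof is correct and takes essentially the same route as the paper: the inclusion is obtained by collapsing the Sobol--Hoeffding decomposition of an ANOVA-1 function into the GAM-1 form, and strictness is attributed precisely to feature correlation, which you substantiate with the explicit Gaussian witness $F(x_1,x_2)=x_1$ (the redundancy phenomenon of Figure~\ref{fig:simple_2D_example_Shapley_fnl_calculation_synergy_and_redundancy}), giving $\tilde f_{\{1,2\}}=-\rho x_2\not\equiv 0$. Your closing remark that the two spaces coincide under independent inputs matches the paper's qualification, so no gaps remain.
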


    Any function which is representable by a univariate ANOVA decomposition is automatically representable by a GAM model by the subset compliance of the ANOVA decomposition.
    This inclusion is strict in the other direction as soon as there is a feature correlation in the input data.

\begin{theorem}
\label{thm:ANOVA_k_vs_GAM_k_strict_inclusion}
    \textbf{(ANOVA-k$\subsetneq$GAM-k)}
    The functional space of ANOVA-k representable functions is a strict subset of the functional space of GAM-k representable functions ($\cH_{\text{ANOVA}}^{\leq k} \subsetneq \cH_{\text{GAM}}^{\leq k} $).
\end{theorem}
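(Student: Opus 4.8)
The inclusion $\cH_{\text{ANOVA}}^{\leq k}\subseteq\cH_{\text{GAM}}^{\leq k}$ is immediate and parallels the argument for Theorem~\ref{thm:ANOVA_1_vs_GAM_1_strict_inclusion}: if $F\in\cH_{\text{ANOVA}}^{\leq k}$, its unique Sobol-Hoeffding decomposition is $F=\sum_{|S|\leq k}\tilde{f}_S(x_S)$, and since each $\tilde{f}_S$ depends only on the at-most-$k$ coordinates in $S$ (subset compliance of the ANOVA components), setting $f_S:=\tilde{f}_S$ already exhibits $F$ in the GAM-$k$ form of Equation~\ref{eqn:fnl_GAM_k}. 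So the substance of the theorem is the strict direction, which I would establish by exhibiting a single witness in $\cH_{\text{GAM}}^{\leq k}\setminus\cH_{\text{ANOVA}}^{\leq k}$ for a suitably dependent input law.

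The plan for the witness: take $d=k+1$, let $X_1,\dots,X_k$ be i.i.d.\ Rademacher variables ($\pm1$, mean zero), and couple $X_{k+1}:=X_1X_2\cdots X_k$ (an independent mean-zero perturbation may be added to $X_{k+1}$ to make the joint law non-degenerate; it changes nothing below). Set $F(x):=x_{k+1}$. Because $F$ depends on the single coordinate $x_{k+1}$, it is trivially a GAM-$1$ and hence in $\cH_{\text{GAM}}^{\leq k}$; so the whole burden is to show $F\notin\cH_{\text{ANOVA}}^{\leq k}$, i.e.\ that the Sobol-Hoeffding decomposition of $F$ has a nonzero component of order strictly larger than $k$.

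I would check this by computing the top component $\tilde{f}_{[k+1]}$ directly from the inclusion--exclusion formula $\tilde{f}_{[k+1]}(x)=\sum_{T\subseteq[k+1]}(-1)^{(k+1)-|T|}[\cM_p\circ F](x,T)$, splitting on whether $k+1\in T$. For $T\ni k+1$ one has $[\cM_p\circ F](x,T)=x_{k+1}$, and the contribution of these terms is $x_{k+1}\sum_{T'\subseteq[k]}(-1)^{k-|T'|}=x_{k+1}(1-1)^k=0$. For $T\subseteq[k]$ one has $[\cM_p\circ F](x,T)=\bbE[X_1\cdots X_k\mid X_T=x_T]=\bigl(\prod_{i\in T}x_i\bigr)\prod_{i\in[k]\setminus T}\bbE[X_i]$ by conditional independence of the $X_i$, which vanishes unless $T=[k]$ and equals $x_1\cdots x_k$ when $T=[k]$. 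Hence $\tilde{f}_{[k+1]}(x)=-\,x_1x_2\cdots x_k\not\equiv0$, so $F$ carries a feature interaction of order $k+1>k$ and cannot lie in $\cH_{\text{ANOVA}}^{\leq k}$; together with the easy inclusion this proves strictness.

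I expect the main difficulty to be expository rather than computational. First, one should state the sharpest hypothesis under which strictness actually holds: the ``as soon as there is a feature correlation'' slogan from the $k=1$ case becomes, for general $k$, the existence of $k+1$ features whose joint law forces a nonvanishing order-$(k+1)$ term in the conditional-projection expansion, and one should flag the boundary case $k=d$ (where $\cH_{\text{GAM}}^{\leq d}$ is all of function space and the two spaces coincide). Second, the argument hinges on \emph{uniqueness} of the Sobol-Hoeffding decomposition: it is uniqueness that lets one conclude $F\notin\cH_{\text{ANOVA}}^{\leq k}$ from a single nonzero high-order component, without having to separately rule out some alternative GAM-$k$ rewriting of $F$.
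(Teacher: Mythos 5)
Your proposal is correct, and the inclusion half is exactly the paper's argument: subset compliance of the Sobol--Hoeffding components means any $F\in\cH_{\text{ANOVA}}^{\leq k}$ is already written in GAM-$k$ form. Where you genuinely diverge is the strictness half. The paper never constructs a general-$k$ witness: it simply asserts that the inclusion "is again strict as soon as there is a correlation between features," and the only worked example in its appendix is the $k=1$-flavored correlated-Gaussian computation for $f(x,y)=x+xy$, where a nonzero $\tilde{f}_{xy}$ is exhibited. Your Rademacher parity construction ($X_{k+1}=X_1\cdots X_k$, $F(x)=x_{k+1}$, with the optional independent perturbation) supplies an explicit witness for every $k\geq 1$, and your inclusion--exclusion computation of $\tilde{f}_{[k+1]}(x)=-x_1\cdots x_k$ is correct (the terms with $k+1\in T$ cancel by the binomial identity, and conditional independence kills every $T\subsetneq[k]$), so the argument is complete once one pads with independent coordinates to reach arbitrary $d>k+1$. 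This buys two things the paper's sketch does not: first, a fully rigorous proof rather than an appeal to the correlation slogan; second, a sharpening of that slogan, since for $k\geq 2$ your coordinates are pairwise \emph{uncorrelated} yet dependent, showing that statistical dependence (not linear correlation) is the operative hypothesis -- which dovetails with your own caveat about stating the sharpest condition and excluding the degenerate boundary case $k=d$, where the two spaces trivially coincide. Your witness being a GAM-$1$ even strengthens the statement, showing $\cH_{\text{GAM}}^{\leq 1}\not\subseteq\cH_{\text{ANOVA}}^{\leq k}$ under dependence. Your closing remark on uniqueness is also the right thing to emphasize: membership in $\cH_{\text{ANOVA}}^{\leq k}$ is defined through the canonical conditional-projection decomposition, so a single nonvanishing order-$(k+1)$ component settles non-membership without ruling out alternative additive rewritings.
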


Once again, the inclusion of k-dimensional ANOVA functions are automatically representable by an arbitrary GAM-k model by the definition of the decomposition.
The inclusion is again strict as soon as there is a correlation between features in the input data.
We save proofs and further discussions for Appendix \ref{app_sec:impossibility_theorems}.

\subsection{Practical Insights}
In conclusion, our results show that a practitioner may evaluate the trustworthiness of SHAP on a given dataset by simply training a GAM model on the same dataset.
If a GAM can easily match the same accuracy as a blackbox model or easily distill the same predictions as a blackbox model, 
then this is a dataset for which SHAP explanations can generally be trusted.
On the other hand, if a GAM cannot match the same accuracy as the blackbox model, this means that the practitioner needs to be wary about trusting SHAP values on this dataset.
In this second scenario, there are two possible resolutions.
For the GAM researcher, resolution 1 of Figure \ref{fig:teaser_GAM_SHAP_DNN_comparison_of_DNNSHAP_vs_GAMSHAP} is to train a better GAM through the use of more efficient training procedures or through an increase in capacity with additional feature interaction terms.
For the SHAP practitioner, resolution 2 of Figure \ref{fig:teaser_GAM_SHAP_DNN_comparison_of_DNNSHAP_vs_GAMSHAP} is to admit that SHAP is likely not a sufficient explanation for this model or dataset.

In many real-world scenarios, it is possible that neither of these extremes is completely true, with the lower bound GAM and upper bound SHAP meeting somewhere in the middle.
Nevertheless, in the current literature, this gap is extremely large for practical AI tasks including CV and NLP.
In our experiments,
we highlight this large gap on a high-dimensional CV task of bird classification.
Ultimately, the key contribution of this theory is that it provides a practical test for researchers to understand task-by-task what are the advantages as well as the limitations of SHAP and GAM approaches.

\section{Shapley via Least Squares}

\paragraph{Kernel SHAP}
The Shapley value was first given an optimization-based definition or `variational' characterization in \cite{charnes1988extremalLeastSquaresShapleyCoreCheby} as the solution to:
\begin{align}
\argmin_{\circphi\in\bbR^d} 
\bigg\{
\bbE_{S\sim p^{\text{SHAP}}(S)} \bigg[
\Big| 
f(S) - \sum_{i=1}^d \ind({i\in S})\cdot \circphi_i
\Big|^2
\bigg]
\bigg\}
\label{eqn:charnes_first_variational_SHAP_least_squares}
\end{align}
where the distribution is over the SHAP kernel
\begin{align}
    p^{\text{SHAP}}(S) \propto {d \choose s}^{-1} \frac{1}{s(d-s)}
    \label{eqn_defn:SHAP_kernel}
\end{align}
where once again $s=|S|$ (contrast this distribution with {Equation} \ref{eqn_defn:SHAP_uniform}).
This formulation was originally utilized by KernelSHAP \citep{lundberg2017shapleySHAP} to fit a local linear model according to the SHAP kernel distribution in order to attain sufficient speed to see use in ML applications.

\paragraph{Fast SHAP}
This was importantly used more recently by FastSHAP \citep{jethani2022fastSHAP} in order to create a functional amortization scheme which fits to the same SHAP kernel.
They then write the Shapley function as the solution to the following equation:%
\footnote{To keep the notation cleaner and more similar with other existing works, we assume throughout this section that $f_\emptyset=0$, which is equivalent to centering or normalizing the outputs.}
\begin{align}
\argmin_{\circphi:\cX\to\bbR^d} 
\bigg\{
\bbE_{x\sim p(x)} \bigg[
\bbE_{S\sim p^{\text{SHAP}}(S)} \bigg[
\Big| 
f(x;S) - \sum_{i=1}^d \ind({i\in S})\cdot \circphi_i(x)
\Big|^2
\bigg]
\bigg]
\bigg\}
\label{eqn:fastshap_functional_amortized_variational_SHAP}
\end{align}
where they then fit the functions ${\circphi_i:\cX\to\bbR}$ over the entire input space to automatically generate the Shapley values at test time.
This dramatically improves the test-time speed with which SHAP explanations can be generated, overcoming what is often the most major practical limitation to deployment.

The summing over multiple functions to create the predicted output should {remind the reader} of the structure of GAM-1 additive models.
As we saw in our impossibility theorems and as we will later show with InstaSHAP, this functional perspective indeed opens up the possibility to connect with training additive models.

\paragraph{Faith SHAP}
As we highlight in {Theorem \ref{thm:shitty_theorem_SHAP_GAM}},
it is well known that feature interactions are necessary for explaining more complex functions.
Accordingly, many works have tried to extend the Shapley value to be able to handle the effects of feature interactions
\citep{grabisch1999originalShapleyInteractionIndex,sundararajan2020shapleyTaylorInteractionIndex,bordt2023shapleyToGAMandBack,fumagalli2023shapIQ}.
Recently, there have been theoretical advancements which extend the Shapley value directly using the variational formulation in \citet{tsai2023faithSHAP}.
They write their higher-order solution, called Faith SHAP, as:
\begin{align}
\argmin_{\circphi\in\bbR^{\scalebox{0.55}{$\cI_{\leq k}$}}}
\bigg\{
\bbE_{S\sim p^{\text{SHAP}}(S)} \bigg[
\Big| 
f(S) - \sum_{\substack{T\subseteq[d], |T|\leq k}} \ind({T\subseteq S})\cdot \circphi_T 
\Big|^2
\bigg]
\bigg\}
\label{eqn:faithshap_variational_SHAP_k_index}
\end{align}
Once again, we can see parallels between the GAM-k model from Equation \ref{eqn:fnl_GAM_k}.
The case of $k=1$ indeed reduces to the original least squares solution in Equation \ref{eqn:charnes_first_variational_SHAP_least_squares}.

\section{Instant SHAP}

A simple combination of these two ideas (functional amortization and feature interaction) would lead to an explainer which automatically recovers the two-dimensional or higher-dimensional Faith-SHAP explanations of the target function, while maintaining the practical speedups of FastSHAP: 
\begin{align}
\argmin_{\circphi:\cX\to\bbR^{\scalebox{0.55}{$\cI_{\leq k}$}}}
\bigg\{
\bbE_{x\sim p(x)} \bigg[
\bbE_{S\sim p^{\text{SHAP}}(S)} \bigg[
\Big\| 
f(x;S) - \sum_{\substack{T\subseteq[d], |T|\leq k}} \ind({T\subseteq S})\cdot \circphi_T(x)
\Big\|^2
\bigg]
\bigg]
\bigg\}
\label{eqn:fastshap_plus_faithshap_functional_variational_SHAP_k_index}    
\end{align}
This functional amortization automatically recovers the Faith-SHAP-k values as defined in \cite{tsai2023faithSHAP},
but maintains the practical advantages and speedups of FastSHAP from \cite{jethani2022fastSHAP}.


Instead, however, we propose to adapt the typical variational equations used to fit GAM models to fall under this same variational Shapley framework.
We first make explicit the variational equation used to train GAM models as:
\begin{align}
\argmin_{\circphi:\cX\to\bbR^{\scalebox{0.55}{$\cI_{\leq k}$}}} 
\bigg\{
\bbE_{x\sim p(x)} \bigg[
\bbE_{S\sim p^{\text{GAM}}(S)} \bigg[
\Big\| 
f(x;S) - \sum_{\substack{T\subseteq[d], |T|\leq k}} \circphi_T(x_T)
\Big\|^2
\bigg]
\bigg]
\bigg\}
\label{eqn:GAM_variational_equation}
\end{align}

Accordingly, we define the Insta-SHAP-GAM as an additive model which is trained as:
\begin{align}
\argmin_{\circphi:\cX\to\bbR^{\scalebox{0.55}{$\cI_{\leq k}$}}}
\bigg\{
\bbE_{x\sim p(x)} \bigg[
\bbE_{S\sim p^{\text{SHAP}}(S)} \bigg[
\Big\| 
f(x;S) - \sum_{\substack{T\subseteq[d], |T|\leq k}} \ind({T\subseteq S})\cdot \circphi_T(x_T)
\Big\|^2
\bigg]
\bigg]
\bigg\}
\label{eqn:InstaSHAP_variational_equation}
\end{align}
where first we introduce the masked training objective where $S$ are drawn from the Shapley kernel and
second we add the masking on each additive component of the GAM to only be included if all features of that component are unmasked in $S$.

This new formulation is able to bring novel insights to both the literature on SHAP and the literature on GAM.
For SHAP, we incorporate the low-dimensional GAM bias which is able to more accurately model SHAP values when compared with FastSHAP.
Additionally, by the explicit modeling of interactions, we are able to improve on the practical feasability of methods like FaithSHAP, which have yet to develop a method for practical deployment.
For the GAM literature, we make progress towards the longstanding goal of purification of the shape functions of additive models.
In the appendix, we further detail how this extends on the existing GAM literature and its various applications towards the selection of feature interactions under correlated inputs.


\section{Tabular Experiments}

\subsection{Synthetic Experiments}

We construct a simple dataset to test the varying effects of synergistic interactions and redundant interactions in a similar {spirit to} Figure \ref{fig:simple_2D_example_Shapley_fnl_calculation_synergy_and_redundancy}.
We provide additional details in the Appendix, but we use a simple ten-dimensional feature space with an algebraic GAM rank of $k*$ and a correlation of $\rho$. 
In Figure \ref{fig:fast_shap_vs_insta_shap_error_curves},
we see the approximation results consistently showing InstaSHAP has a better inductive bias than FastSHAP for learning the Shapley values.

\begin{figure}[h]
    \centering
    \includegraphics[width=0.24\linewidth]{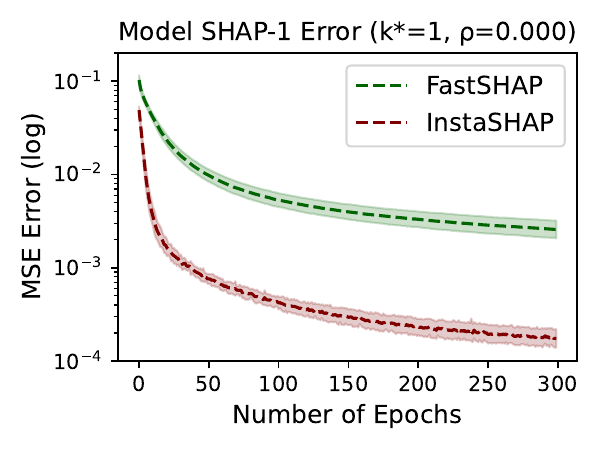}
    \includegraphics[width=0.24\linewidth]{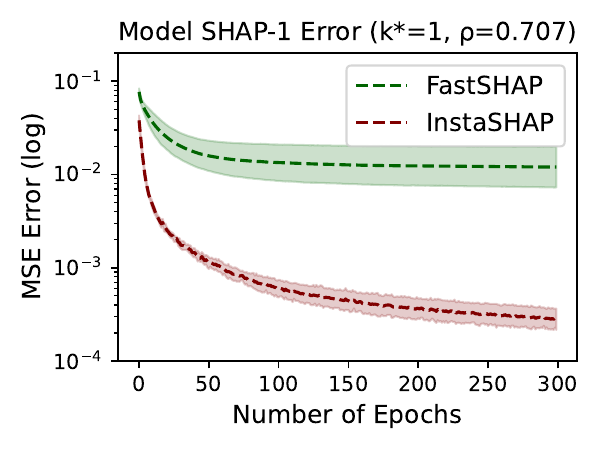}
    \includegraphics[width=0.24\linewidth]{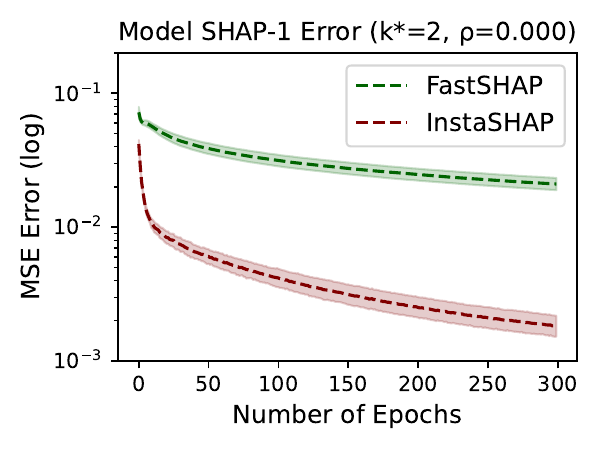}
    \includegraphics[width=0.24\linewidth]{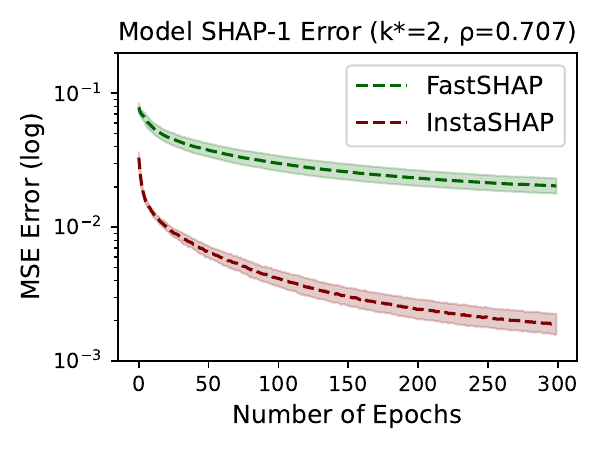}
    \caption{MSE error of approximations of the model's SHAP values.  Since both FastSHAP and InstaSHAP are functional approximations, we report the MSE errors across the epochs of training. }
    \label{fig:fast_shap_vs_insta_shap_error_curves}
\end{figure}


\subsection{Synergy in Bike Sharing}
In the bike sharing dataset, we find strong evidence of a synergistic interaction effect.
This dataset predicts the expected bike demand each hour given some relevant features like the day of the week, time of day, and current weather.
There is a total of thirteen different input features predicting a single continuous output variable.

In the case of a multi-layer perceptron trained to predict the bike demand, the normalized mean-squared error ($R^2$) results is $6.59\%$.
Although a GAM-1 can only achieve an $R^2$ of $17.4\%$, a low-dimensional GAM can achieve an $R^2$ of $6.23\%$.
It is well-known that on this dataset there is a strong interaction between the hour variable and workday variable (since people's schedules change on the weekend vs. a workday.)
In Figure \ref{fig:bikeshare_spectrum_of_interpretability}a,
we can see how this feature interaction is adequately captured.
In Figure \ref{fig:bikeshare_spectrum_of_interpretability}b,
we can see how if the SHAP value also makes this trend identifiable, but does not completely separate this bivariate interaction from other feature effects in the dataset.
Overall, Figure \ref{fig:bikeshare_spectrum_of_interpretability} supports the hypothesis that training interpretable models is a better path than explaining blackbox models, especially when the same accuracies can be achieved.


\begin{figure}[h]
    \centering
    \includegraphics[width=0.325\columnwidth]{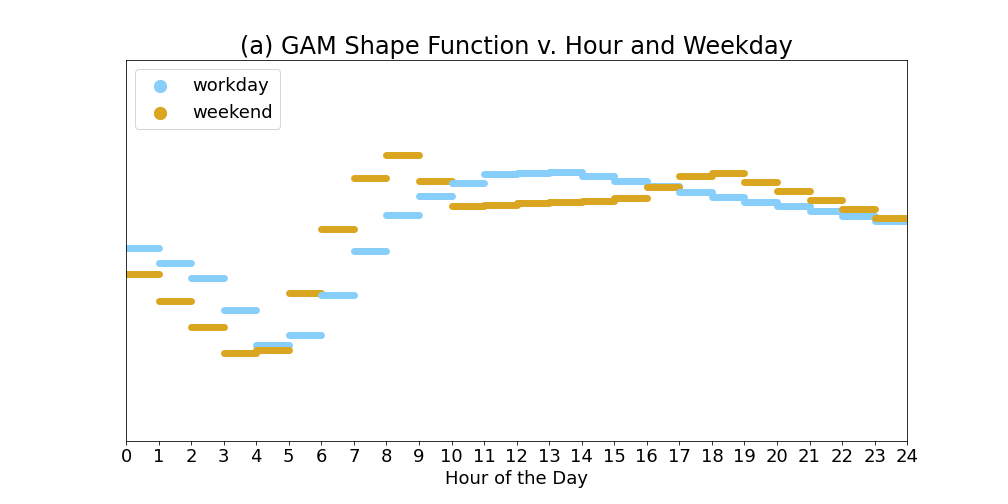}
    \includegraphics[width=0.325\columnwidth]{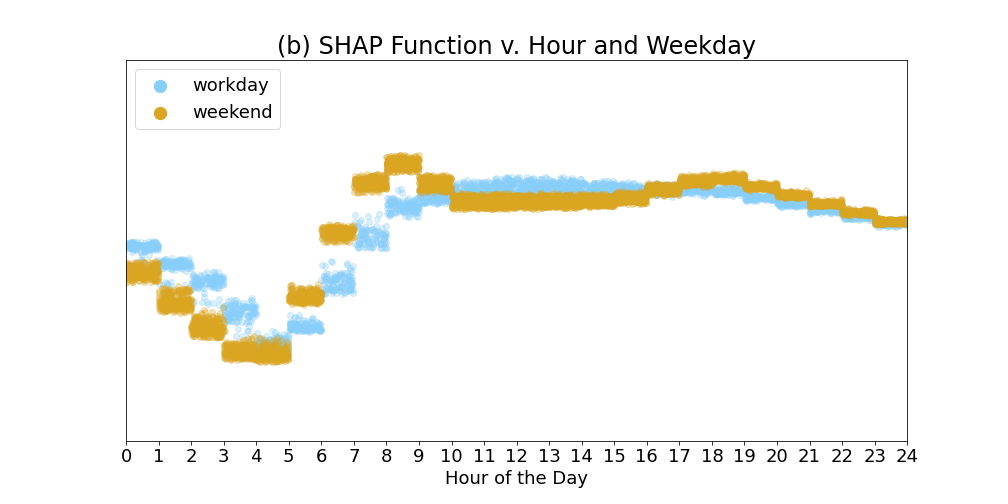} 
    \includegraphics[width=0.325\columnwidth]{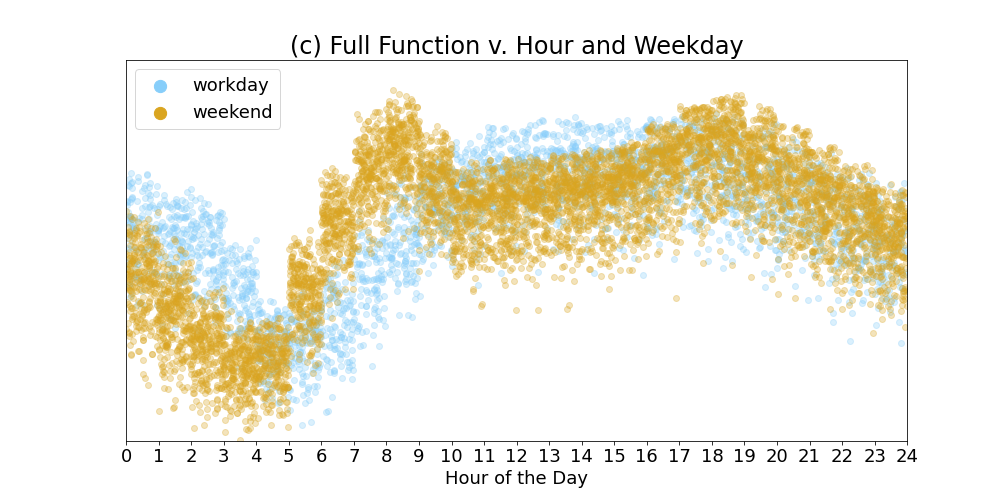} 
    \caption{The Spectrum of Interpretability to Uninterpretability. 
    We display the key \{hour, workday\} interaction for the interpretable GAM, explainable SHAP, and uninterpretable blackbox.
    }
    \label{fig:bikeshare_spectrum_of_interpretability}
\end{figure}

\subsection{Redundancy in Tree Cover}

In the treecover dataset, we find strong evidence of a redundant interaction effect. 
This dataset consists of predicting one of the seven types of tree species which are covering a given plot of land based on eleven input features describing the area.  
Simple investigation can determine the most important features for determining the species of tree are the 
altitude of the land and soil category of the land.
Accordingly, we provide their partial dependence plots in Figure \ref{fig:treecover_elev_and_or_soil_PDP}a and \ref{fig:treecover_elev_and_or_soil_PDP}b.


\begin{figure}[h]
\centering
    \includegraphics[width=0.22\textwidth]{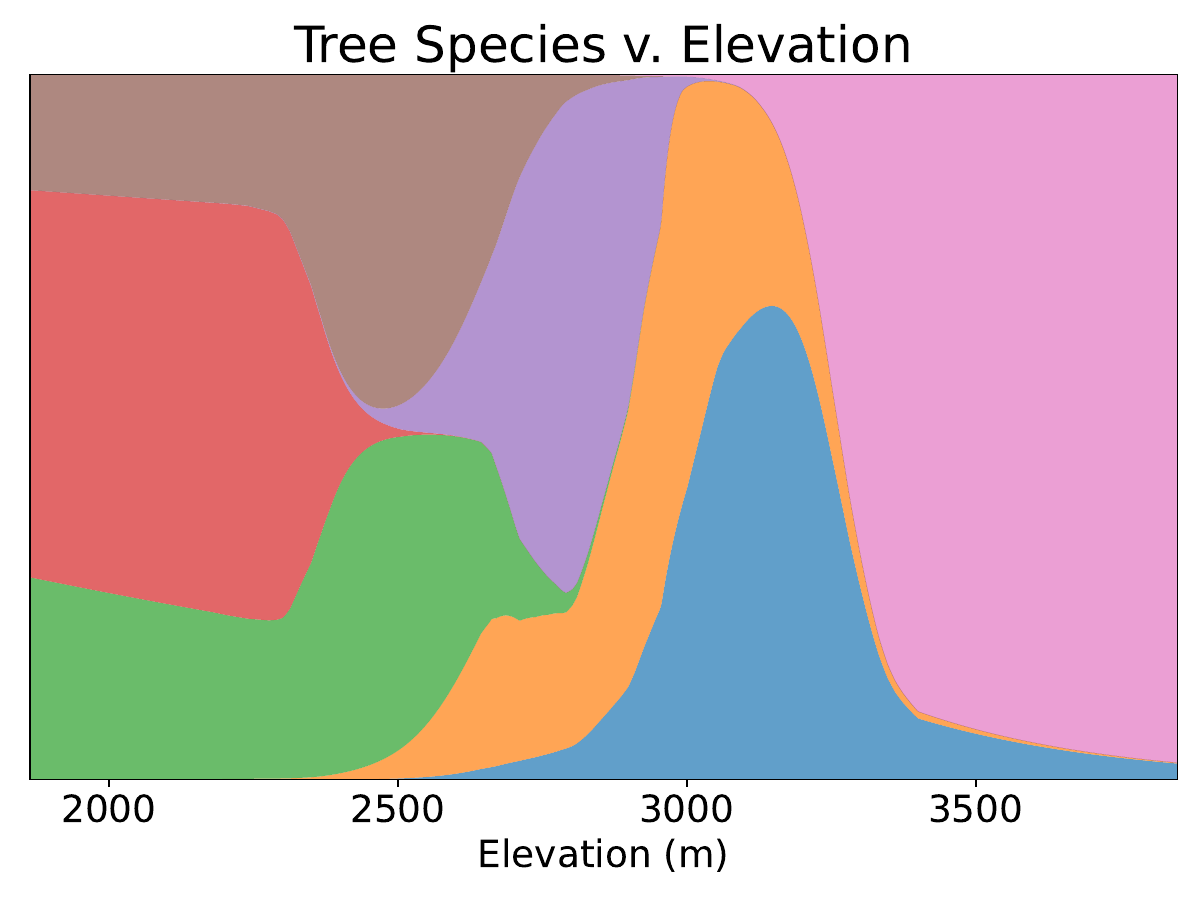} 
    \includegraphics[width=0.222\textwidth]{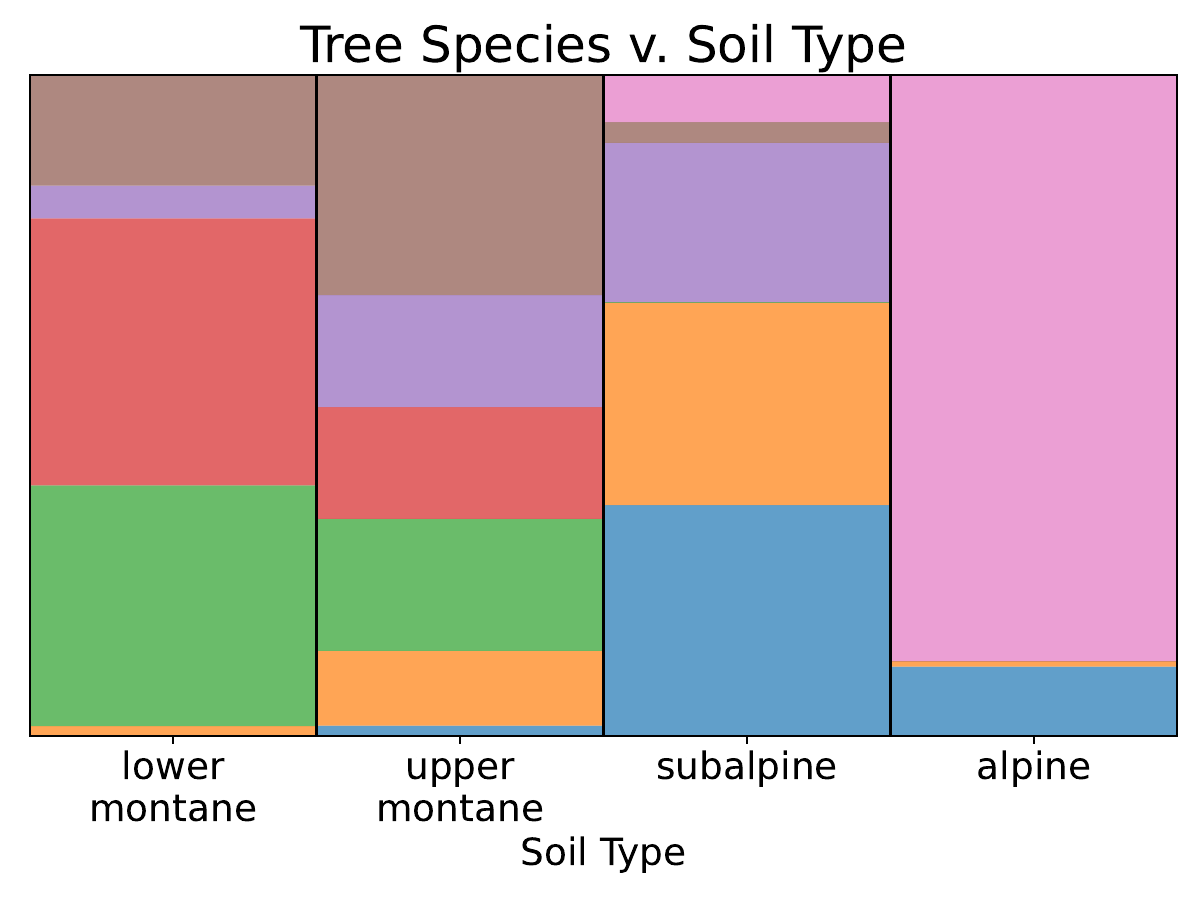} 
   \includegraphics[width=0.225\columnwidth]{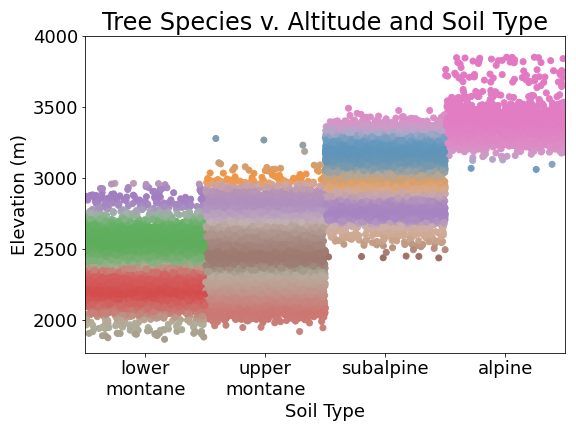}
    %
    \quad
    \includegraphics[width=0.135\columnwidth]{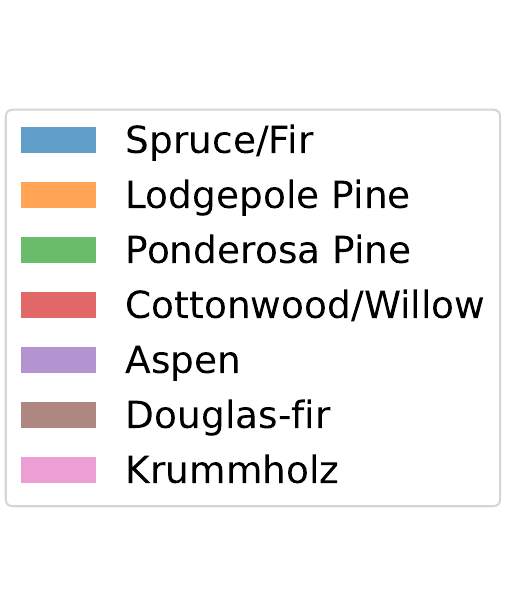} 
    \caption{1D and 2D Dependence of Tree Species on Altitude {and} Soil}
    \label{fig:treecover_elev_and_or_soil_PDP}
\end{figure}


However, a cursory look at the 1D dependence such as these plots or SHAP ignores the fact that both the elevation and soil type are additionally correlated with one another.
Indeed, some montane-type soils can only be found in lower altitudes and, equally, alpine-type soils can only be found at higher altitudes.
Looking at the 2D heatmap in Figure \ref{fig:treecover_elev_and_or_soil_PDP}c,
we can see that soil and altitude are correlated with one another and somewhat redundantly  predict the joint trend in the species of tree.

Training an MLP on this dataset is able to achieve validation accuracy of $80.4\%$ whereas a GAM-1 can only achieve $72.4\%$ accuracy.
Alternatively, a low-dimensional GAM is instead able to achieve $82.2\%$ accuracy.
This once again demonstrates that although the 1D SHAP is unable to accurately represent this tabular dataset,
a simple low-dimensional GAM is able to as well.


Beyond the datasets we study here, there are many existing works showing that both (a) feature interactions are necessary for real-world datasets, and (b) relatively low-dimensional GAM models can often achieve SOTA performance on tabular datasets \citep{chang2022nodegam,enouen2022sian}.
Accordingly, after the clear demonstration of both categories (synergy and redundancy) of feature interactions in practice, we move on to exploring the same phenomenon in higher-dimensional data.

\section{Higher Dimensional Experiments}

We additionally use our methods to explore a bird classification task on natural images.
We run experiments on the CUB dataset using a resnet CNN architecture, evaluating not only at the original, fine-grained 200 classes corresponding to each bird's species, but also with 37 coarse-grained classes corresponding to each bird's family.
For our GAM models we adapt the resnet architecture to only include the influence from a $1\times 1$, $2\times 2$, or $3\times 3$ set of patches, 
leading to a GAM-1, GAM-4, and GAM-9 model.
We fine-tune all models with procedures similar to \cite{covert2023shapleyForVIT}, see appendix for details.

\begin{figure}[h]
    \centering
    \includegraphics[width=0.88\linewidth]{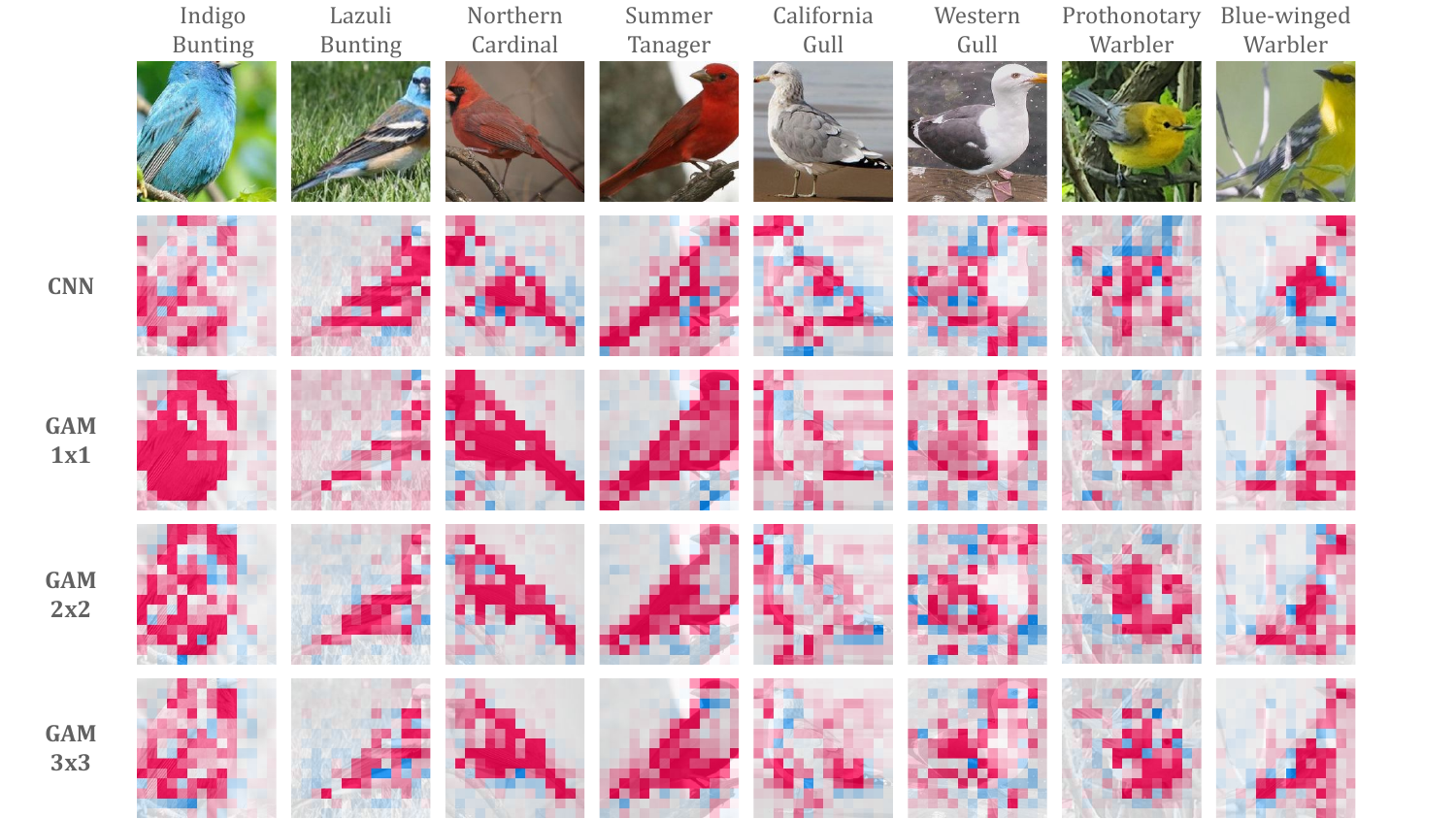}
    \caption{Display of SHAP explanations for multiple images inside of the CUB birds dataset.
    Explanations are provided for the CNN model as well as multiple GAM models.
    We can see that beyond the gap in accuracy between the CNN and GAM models, there are also discrepancies in the reasoning processes as explained by SHAP.
    This can be taken as strong evidence of oversmoothing of the interaction effects which are used by the original CNN model.}
    \label{fig:CUB_with_SHAP_across_multiple_different_species}
\end{figure}

Fine-tuning a resnet-50 model on the dataset, we are able to achieve a fine-grained accuracy of $65.0\%$ and a coarse-grained accuracy of $81.8\%$.
In contrast, the GAM-1 model is only able to achieve a $33.2\%$ fine and $53.7\%$ coarse accuracies.
This can be taken as strong evidence that SHAP is oversimplifying the behavior of the CNN on this dataset.
In Figure \ref{fig:CUB_with_SHAP_across_multiple_different_species},
we can see that this also manifests as sizable visual differences in the explanations between the two models.
In some sense, at least $20$ percentage points of accuracy are being completely thrown away when using the simplification of SHAP.
So although a useful first approximation of the understanding, SHAP is once again not sufficient for giving a complete understanding of the behavior of feature interactions.

Extension to the GAM-2x2 and GAM-3x3 models is able to give improvements to $45.8\%$ and $46.8\%$ fine as well as $66.3\%$ and $66.8\%$ coarse accuracies.
Even with some feature interactions, a convolutional GAM model is unable to achieve the same  accuracies as a resnet.
This points to the fact that either long-range or higher-order interactions are necessary to completely match the performance of the resnet.
It is also possible that an alternate GAM architecture would be able to further improve upon these accuracy results, helping close the gap between CNN and GAM.
%
Overall, we take this as significant evidence towards the oversimplification of SHAP explanations on high-dimensional data,
domains where the need for explainability also remains the highest.

\section{Conclusion}

We find that the study of SHAP and GAM from a joint functional perspective allows for a plethora of insights in both domains which were not previously possible.
We establish the theoretical correspondence between the two across all possible correlated input features and discuss the implications in terms of functional representation power.
In practical ML datasets where input correlations are abundant,
we provide a simple but theoretically grounded method of detecting whether SHAP is providing adeqeuate explanations by means of training a GAM model.
We extend on the GAM literature by means of rigorously studying the effect of training on a correlated input distribution,
as well as
introducing a novel masking technique which allows for the recovery of purified GAM models.
In multiple real-world datasets, we find that the existence of feature interactions as synergies and as redundancies is ubiquitous in practical settings.
We finally discuss the implications of this fact in the context of interpreting SHAP in high-dimensional data like natural images.
Although SHAP is a very useful approximation of the first-order effects, a more careful treatment of feature interactions will be required for a complete understanding of blackbox models.





\section{Broader Impact}
This work focuses on enhancing the interpretability of deep learning models.
Although, broadly, the work of interpretability can help inform all related stakeholders to the reasonings behind decisions made by AI systems to the benefit of everyone involved,
ultimately,
all interpretations and decisions are made by humans and can hence be used for unfavorable outcomes both intentionally and unintentionally.
Moreover, interpretability is only one piece of the larger puzzle which is transparency and trustworthiness in AI systems.

\section{Acknowledgements}

This work was supported in part by the Department of Defense under Cooperative Agreement Number W911NF-24-2-0133. 
The views and conclusions contained in this document are those of the authors and should not be interpreted as representing the official policies, either expressed or implied, of the Army Research Office or the U.S. Government. 
The U.S. Government is authorized to reproduce and distribute reprints for Government purposes notwithstanding any copyright notation herein.

\newpage
\bibliography{refs_instashap}
\bibliographystyle{iclr2025_conference}

\appendix

\newpage
\section{Post-Hoc Explainability}

\subsection{Further Discussion of Explanation Baseline Methods}
\label{app_sec:explain_by_removing}

\paragraph{Conditional/ Marginal/ Baseline}
We first reiterate the three main removal baselines which see widespread usage across all domains of machine learning explainability.
Those are the three methods introduced in the main text (baseline value, marginal value, and conditional value.)
It should be noted for instance that in the original SHAP paper (\citet{lundberg2017shapleySHAP}, Equations 9-12), each of the former two were considered as a simplification or approximation to the conditional value.
The first assumption of feature independence implies the equivalence of the conditional value and the marginal value.
The second assumption of model linearity implies the equivalence of the marginal value and the baseline value.
Accordingly, it is perhaps better to think of these two alternatives as practical simplifications whereas the conditional value is the value of theoretical interest.
Especially after the highlighting of the off-the-manifold problem \citep{frye2021shapleyOnTheManifold},
these two approaches have been under higher scrutiny in their application to typical ML pipelines where input data often have heavy correlations existing outside of the control of the ML practitioner.

\paragraph{Integrated Gradients}
Another common removal approach is Integrated Gradients which is equivalent to Aumann-Shapley value 
\citep{aumann1974aumannShapleyValue,sundararajan2017integratedGradients,sundararajan2020theManyShapleyValues}.
In this version, a line integral is taken from a baseline point $\Bar{x}$ to the target point $x$, rather than the original baseline method which simply takes the difference between the two.
Although it is a smoother approximation which has had empirical success, its interaction extensions \citep{janizek2021integratedHessians} cannot succeed on non-smooth functions like piecewise linear ReLU networks and it is nonetheless susceptible to the off-the-manifold problem.
Although incorporating a more general definition of line integrals could be of interest to solving the off-the-manifold and simultaneously integrating into the discrete masking framework we utilize, 
we envision this as out of scope for our focus on the Shapley value.

\paragraph{{Stone-Hooker} Decomposition} 
Of potentially the greatest interest besides the conditional case which we directly study is the alternative functional ANOVA decomposition proposed by \cite{hooker2007generalizedFunctionalANOVA}
and further investigated in 
\citep{hart2018sobolCovariancesDependentVariables,lengerich2020purifyingInteractionEffects,xingzhi2022pureGAM}.

\begin{alignat}{3}
    F(x_1,\dots,x_d) 
    &\oct =\oct &
    \sum_{S\subseteq[d]} \tilde{h}_S(x_S)
    \label{app_eqn:hooker_fnl_ANOVA_decomp}
\end{alignat}
where the functions are required to obey a set of `hierarchical orthogonality conditions'
\begin{align}
    \cM_{p,\emptyset} \circ (g_T \cdot \tilde{h}_S) = 0 
    \quad\quad
    \forall g_T,\oct \forall T\subsetneq S
    \label{app_eqn:hooker_decomp_hierarchical_orthogonality_conditions}
\end{align}
which is equivalent to the `integral conditions'
\begin{align}
    \cM_{p,(S-i)} \circ (\tilde{h}_S) \equiv 0  
    \quad\quad
    \forall i \in S
    .
    \label{app_eqn:hooker_decomp_integral_conditions}
\end{align}

Despite its relatively pleasant properties compared to the original Sobol-Hoeffding decomposition,
nearly two decades after its introduction it has received relatively little attention when compared with the Sobol-Hoeffding alternative defined by conditional projections.
(It should be briefly noted that in the case of independent variables, the same solution is recovered.)
Amongst its limitations, 
beyond a lack of intuitive meaning behind its prescribed functions,
the most severe is seemingly its computational intractability.
It is rare to see a calculation of the full decomposition beyond a small number of dimensions or for distributions which are not piecewise constant.
Unlike the conditional projection which can be more efficiently approximated from the bottom-up, the Hooker decomposition seems to endure the full exponential complexity of constructing a functional decomposition from the top-down (starting with the most complex $\tilde{h}_{[d]}$.)

Practical approaches to providing a solution to the full Hooker problem imitate Sinkhorn approximations via iterative refinement across the different variable axes \citep{lengerich2020purifyingInteractionEffects}.
Nonetheless, 
the Sinkhorn algorithm has itself escaped a general closed form solution for decades \citep{sinkhorn1967sinkhornAlgorithm,nathanson2019alternateminimizationdoublystochastic},
and
practical application of the original Hooker decomposition has remained extremely limited.



In the space of Generalized Additive Models, however,
recently some progress has been made.
By leveraging the GAM's ability to reduce the exponential complexity of the true function to a lower-dimensional representation,
works like \citet{lengerich2020purifyingInteractionEffects} and \citet{xingzhi2022pureGAM} have found success in purifying the terms of an additive models.
However, this success is still limited to two- or three- dimensional GAM models, being limited by: discrete variables assumptions requiring histogramming or kerneling of continuous variables; difficult transformations which do not easily scale to higher orders; and/or the previously discussed Sinkhorn-like approximations without clear guarantees.
{We later {revisit} these considerations more thoroughly in Appendix \ref{app_sec:additive_models_and_stuff} after the introduction of our novel results for additive models,
suggesting how our variational perspective potentially allows to unlock the same advantages for Hooker-type purified models.}

\paragraph{Further Alternatives}
We highly recommend the work of \citet{covert2021explainingByRemoving} for a very comprehensive 
review of potential methods for removal; however, 
we quickly review some of the major flavors for the counterfactual `removal' of a feature.
Some of the important methods yet unmentioned include the utilization of surrogate models to explicitly or implicitly mask out the features.
This can be done implicitly via the training of a masked surrogate predictor using the projection equations for mean-squared error or for KL divergence as used within this work \citep{covert2021explainingByRemoving,jethani2022fastSHAP,covert2023shapleyForVIT}.
There have also been pursuits through a more explicit approach via using a separate generative model (VAE or GAN) as a proxy for removal \citep{chang2018explainingWithCounterfactualGeneration}, 
additionally allowing for more domain-specific approaches like image blurring and infilling.
Another important set of alternatives is via the language of causality as introduced via \citet{janzing2020interventionalShapleyValue}.
Unfortunately, after the introduction of elegant causal notation,
the authors immediately use a simplifying assumption to reduce to the marginal Shapley, which has the aforementioned problems, only considering the engineer-level causality of `causing the model' to change its predictions.
This is significantly different from the scientist-level causality of `causing the output' and has only begun to be thoroughly addressed in recent works like \citet{biparva2024causalShapleyWithMarkovBlanketShapley}.





\subsection{Post-Hoc Feature Attribution and Interaction Attribution}
\label{app_sec:post_hoc_feature_attribution_and_interaction_attribution}
\paragraph{Notation}

Let $d\in\bbN$ and $c\in\bbN$ be the dimensions of the input and output spaces,$\cX \subseteq \bbR^d$ and $\cY \subseteq \bbR^c$.
Let $F:\cX \to \cY$ be a function representing a machine learning model which maps from inputs to outputs.
We will use $[d]:=\{1,\dots,d\}$ to represent the set of input features and $S\subseteq[d]$ to represent a subset of the input features.
We also write the set of all such subsets, the powerset, as $\cP([d]) \cong \{0,1\}^d$
and use slight abuse of notations including $(S+i) := S \cup \{i\}$ and $(S-i) := S \setminus \{i\}$.

We will write the function space as some $\cH = \{F : \cX \to \cY\}$ and a masked function space as $\cH' = \{ f : \cX \times \cP([d]) \to \bbR \}$.
For a general feature attribution method, we write  $\Phi : \cH \to \cH^d$, taking a function $F(x)$ as input and returning a local explanation function $[\Phi_i\circ F](x)$ for each feature $i\in[d]$ on each local input $x\in\cX$.
Similarly, we define a blackbox feature attribution method as $\circphi : \cH' \to \cH^d$, instead taking a masked function $f(x,S)$ as input and returning a local explanation function for each feature, $[\circphi_i\circ f](x)$.

In addition to the notation introduced in the main body,
we introduce some notation which are very useful in the domain of feature interactions.
We first define the discrete derivative operator:
\begin{align}
    [\delta_i \circ f](T) = f(T+i) - f(T-i),
\end{align}
and its higher-order counterpart:
\begin{align}
    [\delta_S \circ f](T) := \sum_{W\subseteq S} (-1)^{|S|-|W|} f(T-S+W).
\end{align}
We note that the decision to add and remove elements instead of only adding elements is not necessarily typical; however,
we find it beneficial to not need to restrict the domain of the discrete derivative operator.

We may now define the Mobius transformation or purification transformation as the one which replaces each function with its purified version, $\mu : \cH' \to \cH'$.
\begin{align}
    [\mu \circ f](x,S) := 
    \Tilde{f}(x,S) = 
    \sum_{W\subseteq S} (-1)^{|S|-|W|} f(x,W)
\end{align}
We can also see that the purified functions can additionally be written in terms of the discrete derivative operator.
\begin{align}
    \Tilde{f}(x,S) = 
    [\delta_S \circ f](x,\emptyset) = 
    \sum_{W\subseteq S} (-1)^{|S|-|W|} f(x,\emptyset+W)
\end{align}
Both the discrete derivative operator and the Mobius purification transformation are  important tools for being able to more easily study the case of feature interactions in blackbox explainers.
In the sections that follow we will define the major feature attribution and feature interaction attribution methods in terms of these operators.







\paragraph{The Shapley Value}
The most typical definition of the Shapley value is usually its closed form solution as the weighted average of 1D derivatives,
\[
[\circphi^{\text{SHAP}}\circ f]_i(x) =  \mathlarger{\sum}_{S\subseteq[d]-i} \bigg[ \frac{1}{d} {d-1 \choose |S|}^{-1} \cdot \big[\delta_{i} f \big](x,S) \bigg] ,
\]
although its definition as an expectation over random permutations,
\[
[\circphi^{\text{SHAP}}_i \circ f](x) =  \frac{1}{|\cS_d|}  \mathlarger{\sum}_{\pi \in \cS_d} \bigg[ \big[\delta_{i} f \big](x,S_{\pi,i}) \bigg] ,
\]
has gained popularity in practice due to its susceptibility to Monte-Carlo sampling.
We define $\cS_d$ as the symmetric group or set of permutations on $d$ elements, $\cS_d := \{ \pi : [d]\to[d] \oct\text{s.t.}\oct \pi \oct\text{is bijective}\}$, and we define $S_{\pi,i}$ as the set of predecessors to $i$ under the ordering $\pi$, $S_{\pi,i} := \{ j\in[d] \oct\text{s.t.}\oct \pi(j) < \pi(i) \}$.

We also state the alternative formulation in terms of `unanimity games' which the authors believe to yield a more intuitive understanding of the Shapley value.
\begin{align}
[\circphi^{\text{Sh}}\circ f]_i (x) = \sum_{S\supseteq \{i\}} \frac{\Tilde{f}_S(x_S)}{|S|}
    \label{app_eqn:shapley_unanimity}
\end{align}

In words, the Shapley value divides the purified interaction $\Tilde{f}_S(x_S)$ (which is the value created by $S$ and only $S$) amongst all of its constituent features, $i\in S$, completely uniformly between them, $\frac{1}{|S|}$.

We rewrite the four Shapley axioms in terms of the functional notation:
\begin{enumerate}
    \item \textbf{Dummy} \quad If $[\delta_{i}\circ f](x,S) = 0$ for all $S$, 
    
    \hspace{5em} then $[\circphi_i\circ f](x)=0$ (for that local $x$).
    
    
    

    
    \item \textbf{Symmetry} \quad $\pi^{-1}\circ \circphi_{\pi(i)} \circ \pi \circ f = \circphi_i \circ f$ $\quad\forall i\in[d], \forall \pi\in\cS_d$  
    
    \item \textbf{Efficiency} \quad $\sum_{i\in[d]} \circphi_i\circ f = f_{[d]} - f_\emptyset$
    
    \item \textbf{Linearity} \quad $\circphi\circ (f+g) = \circphi\circ f + \circphi\circ g$
\end{enumerate}
It should be emphasized that dummy is a truly local property whereas symmetry, efficiency, and linearity can all be realized as properties of the additive functions.
Thus, from the functional perspective, it is more appropriate to call this property `local dummy' to emphasize its distinction from `global dummy' functions which would be the case when $\circphi_i \circ f \equiv 0$.
We hope this would help eliminate the common confusion we discuss later in  {Appendix \ref{app_sec:common_fallacies}.}
%
Another point to briefly note is that linearity condition is about the linearity of the operator rather than the linearity of the function.

\paragraph{Other Common Explainers}
One of the original blackbox explainers is the LIME value \citep{ribeiro2017lime},
which can also be written in this functional notation as:
\begin{align}
    [\Phi^{\text{LIME}}_i \circ F](x) :=
\argmin_{\circphi\in\bbR^d} 
\bigg\{
\bbE_{S\sim p^{\text{LIME}}(S)} \bigg[
\Big| 
f^{\text{LIME}}(x,S) - \sum_{i=1}^d \ind({i\in S})\cdot \circphi_i
\Big|^2
\bigg]
\bigg\}
\label{eqn:variational_LIME_least_squares}
\end{align}
where the distribution is taken over a LIME distribution $p^{\text{LIME}}(S)$,
and the function $f^{\text{LIME}}(x,S)$ is taken as the semi-local average value according to a data-dependent LIME kernel which is the exponential of some distance function.
For further details see \cite{ribeiro2017lime} or \cite{lundberg2017shapleySHAP}.

Another common set of explainers are the extremely simple `leave-one-in' and `leave-one-out' values,
based on including a single feature or removing a single feature:
\begin{align}
    [\circphi^{\text{inc}}_i\circ f](x) &:= [\delta_i \circ f](x,\emptyset) \\
    [\circphi^{\text{rem}}_i\circ f](x) &:= [\delta_i \circ f](x,[d])
    \label{app_eqn:leave_one_in_leave_one_out}
\end{align}
These also have equivalent versions for measuring the interaction effect in the more general `inclusion value' or `removal value':
\begin{align}
    [\circphi^{\text{inc}}_S\circ f](x) &:= [\delta_S \circ f](x,\emptyset) \\
    [\circphi^{\text{rem}}_S\circ f](x) &:= [\delta_S \circ f](x,[d])
\end{align}
We note that it is also popular to refer to the difference, $f(x,S) - f(x,\emptyset)$, rather than the interaction effect, as the inclusion value.
Another important interaction explainer is the Archipelago value \citep{tsang2020archipelago} which is defined to be the average of these two $\circphi^{\text{arch}}_S := \frac{1}{2}\circphi^{\text{inc}}_S + \frac{1}{2}\circphi^{\text{rem}}_S$.
This simple estimator is surprisingly robust at detecting feature interactions that $\circphi^{\text{inc}}_S$ or $\circphi^{\text{rem}}_S$ would each individually miss.

There are also a few other game-theoretic approaches which have attracted attention recently such as the 
Banzhaf value \citep{banzhaf1965banzhaf,tsai2023faithSHAP,wang2023dataBanzhaf,enouen2024textGenSHAP}
and the Deegan-Packel index \citep{deegan1978deeganPackelIndex,biradar2024abudctiveExplanationsWithDeeganPackel},
especially in their application to classification tasks instead of regression tasks.

\subsection{Shapley Interaction Indices}

\paragraph{Shapley Interaction Indices}
The first definition extending the Shapley value to try handling feature interactions was already constructed in 1999, mainly by the removal of the efficiency axiom \citep{grabisch1999originalShapleyInteractionIndex}.
This allows for a relatively simple extension using the permutation symmetry axiom to define the interaction index as a random order value where both features must be present rather than the one.
From Table \ref{app_tab:SII_shap_coeffs_maxK_123} below, it can be seen how this index divides the higher-order interaction effects amongst their constituent lower-order subsets in the same way as the original Shapley value ($\sfrac{1}{t}$).

\begin{table}[h]
\scriptsize
    \caption{Shapley Interaction Indices for $k=1,2,3$}    
    \label{app_tab:SII_shap_coeffs_maxK_123} 
    \begin{center}
    \begin{adjustbox}{max width=\textwidth}
    \begin{tabular}{cc|c|cccccccccc|}
     & & Equation & $t=1$ & $t=2$ & $t=3$ & 4 & 5 & 6 & 7 & 8 & 9 & 10 \\ 
     \hline
$k=1$ & $s=1$ &   $\sfrac{1}{t}$   & $1$ &  $\sfrac{1}{2}$ &  $\sfrac{1}{3}$ &  $\sfrac{1}{4}$ &  $\sfrac{1}{5}$ &  $\sfrac{1}{6}$ &  $\sfrac{1}{7}$ &  $\sfrac{1}{8}$ &  $\sfrac{1}{9}$ &  $\sfrac{1}{10}$ \\
$k=2$ & $s=2$ &   $\sfrac{1}{(t-1)}$   & $0$ & $1$ &  $\sfrac{1}{2}$ &  $\sfrac{1}{3}$ &  $\sfrac{1}{4}$ &  $\sfrac{1}{5}$ &  $\sfrac{1}{6}$ &  $\sfrac{1}{7}$ &  $\sfrac{1}{8}$ &  $\sfrac{1}{9}$ \\
$k=3$ & $s=3$ &   $\sfrac{1}{(t-2)}$   & $0$ & $0$ & $1$ &  $\sfrac{1}{2}$ &  $\sfrac{1}{3}$ &  $\sfrac{1}{4}$ &  $\sfrac{1}{5}$ &  $\sfrac{1}{6}$ &  $\sfrac{1}{7}$ &  $\sfrac{1}{8}$ \\
         \hline
         \\
     & &  Equation & $t=11$ & $t=12$ & $t=13$ & 14 & 15 & 16 & 17 & 18 & 19 & 20 \\
         \hline

$k=1$ & $s=1$ &   $\sfrac{1}{t}$   &  $\sfrac{1}{11}$ &  $\sfrac{1}{12}$ &  $\sfrac{1}{13}$ &  $\sfrac{1}{14}$ &  $\sfrac{1}{15}$ &  $\sfrac{1}{16}$ &  $\sfrac{1}{17}$ &  $\sfrac{1}{18}$ &  $\sfrac{1}{19}$ &  $\sfrac{1}{20}$ \\
$k=2$ & $s=2$ &   $\sfrac{1}{(t-1)}$   &  $\sfrac{1}{10}$ &  $\sfrac{1}{11}$ &  $\sfrac{1}{12}$ &  $\sfrac{1}{13}$ &  $\sfrac{1}{14}$ &  $\sfrac{1}{15}$ &  $\sfrac{1}{16}$ &  $\sfrac{1}{17}$ &  $\sfrac{1}{18}$ &  $\sfrac{1}{19}$ \\
$k=3$ & $s=3$ &   $\sfrac{1}{(t-2)}$   &  $\sfrac{1}{9}$ &  $\sfrac{1}{10}$ &  $\sfrac{1}{11}$ &  $\sfrac{1}{12}$ &  $\sfrac{1}{13}$ &  $\sfrac{1}{14}$ &  $\sfrac{1}{15}$ &  $\sfrac{1}{16}$ &  $\sfrac{1}{17}$ &  $\sfrac{1}{18}$ \\
     \hline
    \end{tabular}
    \end{adjustbox}
    \end{center}
\end{table}

\newpage
\paragraph{Shapley-Taylor Interaction Indices}
The next major advancement to Shapley interaction indices came with the introduction of the Shapley-Taylor indices in 2019 \citep{sundararajan2020shapleyTaylorInteractionIndex}.
These interaction indices reintroduce the efficiency condition in a way which we now know reflects the additive model structure of summing to the full prediction.
However, they achieve this decomposition by treating the lower-order additive effects asymmetrically from the maximum rank effects.
In particular, they zero out the influence of everything except the purified effect and distribute the higher-order effects amongst the rank $k$ subsets.
This can be seen more clearly in Tables \ref{app_tab:taylor_shap_coeffs_maxK_1}, \ref{app_tab:taylor_shap_coeffs_maxK_2}, and \ref{app_tab:taylor_shap_coeffs_maxK_3}.

\begin{table}[h!]
\scriptsize
    \caption{Shapley-Taylor Coefficients for $k=1$}    
    \label{app_tab:taylor_shap_coeffs_maxK_1} 
    \begin{center}
    \begin{adjustbox}{max width=\textwidth}
    \begin{tabular}{cc|c|cccccccccc|}
     & & Equation & $t=1$ & $t=2$ & $t=3$ & 4 & 5 & 6 & 7 & 8 & 9 & 10 \\ 
     \hline
$k=1$ & $s=1$ &   $\sfrac{1}{t}$   & $1$ &  $\sfrac{1}{2}$ &  $\sfrac{1}{3}$ &  $\sfrac{1}{4}$ &  $\sfrac{1}{5}$ &  $\sfrac{1}{6}$ &  $\sfrac{1}{7}$ &  $\sfrac{1}{8}$ &  $\sfrac{1}{9}$ &  $\sfrac{1}{10}$ \\
         \hline
         \\
     & &  Equation & $t=11$ & $t=12$ & $t=13$ & 14 & 15 & 16 & 17 & 18 & 19 & 20 \\
         \hline
$k=1$ & $s=1$ &   $\sfrac{1}{t}$   &  $\sfrac{1}{11}$ &  $\sfrac{1}{12}$ &  $\sfrac{1}{13}$ &  $\sfrac{1}{14}$ &  $\sfrac{1}{15}$ &  $\sfrac{1}{16}$ &  $\sfrac{1}{17}$ &  $\sfrac{1}{18}$ &  $\sfrac{1}{19}$ &  $\sfrac{1}{20}$ \\
     \hline
    \end{tabular}
    \end{adjustbox}
    \end{center}
\end{table}

\begin{table}[h!]
\scriptsize
    \caption{Shapley-Taylor Coefficients for $k=2$}    
    \label{app_tab:taylor_shap_coeffs_maxK_2} 
    \begin{center}    
    \begin{adjustbox}{max width=\textwidth}
    \begin{tabular}{cc|c|cccccccccc|}
     & & Equation & $t=1$ & $t=2$ & $t=3$ & 4 & 5 & 6 & 7 & 8 & 9 & 10 \\ 
     \hline
$k=2$ & $s=1$ &      & $1$ & $0$ & $0$ & $0$ & $0$ & $0$ & $0$ & $0$ & $0$ & $0$ \\
      & $s=2$ &      & $0$ & $1$ &  $\sfrac{1}{3}$ &  $\sfrac{1}{6}$ &  $\sfrac{1}{10}$ &  $\sfrac{1}{15}$ &  $\sfrac{1}{21}$ &  $\sfrac{1}{28}$ &  $\sfrac{1}{36}$ &  $\sfrac{1}{45}$ \\
         \hline
         \\
     & &  Equation & $t=11$ & $t=12$ & $t=13$ & 14 & 15 & 16 & 17 & 18 & 19 & 20 \\
         \hline
$k=2$ & $s=1$ &      & $0$ & $0$ & $0$ & $0$ & $0$ & $0$ & $0$ & $0$ & $0$ & $0$ \\
      & $s=2$ &      &  $\sfrac{1}{55}$ &  $\sfrac{1}{66}$ &  $\sfrac{1}{78}$ &  $\sfrac{1}{91}$ &  $\sfrac{1}{105}$ &  $\sfrac{1}{120}$ &  $\sfrac{1}{136}$ &  $\sfrac{1}{153}$ &  $\sfrac{1}{171}$ &  $\sfrac{1}{190}$ \\
     \hline
    \end{tabular}
    \end{adjustbox}
    \end{center}
\end{table}

\begin{table}[h!]
\scriptsize
    \caption{Shapley-Taylor Coefficients for $k=3$}    
    \label{app_tab:taylor_shap_coeffs_maxK_3} 
    \begin{center}
    \begin{tabular}{cc|c|cccccccccc|}
     & & Equation & $t=1$ & $t=2$ & $t=3$ & 4 & 5 & 6 & 7 & 8 & 9 & 10 \\ 
     \hline
$k=3$ & $s=1$ &      & $1$ & $0$ & $0$ & $0$ & $0$ & $0$ & $0$ & $0$ & $0$ & $0$ \\
      & $s=2$ &      & $0$ & $1$ & $0$ & $0$ & $0$ & $0$ & $0$ & $0$ & $0$ & $0$ \\
      & $s=3$ &      & $0$ & $0$ & $1$ &  $\sfrac{1}{4}$ &  $\sfrac{1}{10}$ &  $\sfrac{1}{20}$ &  $\sfrac{1}{35}$ &  $\sfrac{1}{56}$ &  $\sfrac{1}{84}$ &  $\sfrac{1}{120}$ \\
         \hline
         \\
     & &  Equation & $t=11$ & $t=12$ & $t=13$ & 14 & 15 & 16 & 17 & 18 & 19 & 20 \\
         \hline
$k=3$ & $s=1$ &      & $0$ & $0$ & $0$ & $0$ & $0$ & $0$ & $0$ & $0$ & $0$ & $0$ \\
      & $s=2$ &      & $0$ & $0$ & $0$ & $0$ & $0$ & $0$ & $0$ & $0$ & $0$ & $0$ \\
      & $s=3$ &      &  $\sfrac{1}{165}$ &  $\sfrac{1}{220}$ &  $\sfrac{1}{286}$ &  $\sfrac{1}{364}$ &  $\sfrac{1}{455}$ &  $\sfrac{1}{560}$ &  $\sfrac{1}{680}$ &  $\sfrac{1}{816}$ &  $\sfrac{1}{969}$ &  $\sfrac{1}{1140}$ \\
     \hline
    \end{tabular}
    \end{center}
\end{table}

\newpage
\paragraph{n-Shapley Values}
The n-Shapley values are a more recent attempt to revitalize the original Shapley interaction indices to obey the efficiency axiom in an alternate way \citep{bordt2023shapleyToGAMandBack}.
They use a recursive form so that the maximum rank terms ($s=k$) are the same as the original interaction index \citep{grabisch1999originalShapleyInteractionIndex}; however,
the lower order terms ($s<k$) are chosen to exactly obey the efficiency terms.
This requires the use of the Beroulli numbers to balance these terms in a recursive expansion.
The first few orders can be seen in Tables \ref{app_tab:nShap_shap_coeffs_maxK_1}, \ref{app_tab:nShap_shap_coeffs_maxK_2}, and \ref{app_tab:nShap_shap_coeffs_maxK_3}.
Note the similarities and differences with Table \ref{app_tab:SII_shap_coeffs_maxK_123}.

\begin{table}[h]
\scriptsize
    \caption{n-Shapley Coefficients for $n=k=1$}    
    \label{app_tab:nShap_shap_coeffs_maxK_1} 
    \begin{center}
    \begin{adjustbox}{max width=\textwidth}
    \begin{tabular}{cc|c|cccccccccc|}
     & & Equation & $t=1$ & $t=2$ & $t=3$ & 4 & 5 & 6 & 7 & 8 & 9 & 10 \\ 
     \hline
$k=1$ & $s=1$ &   $\sfrac{1}{t}$   & $1$ &  $\sfrac{1}{2}$ &  $\sfrac{1}{3}$ &  $\sfrac{1}{4}$ &  $\sfrac{1}{5}$ &  $\sfrac{1}{6}$ &  $\sfrac{1}{7}$ &  $\sfrac{1}{8}$ &  $\sfrac{1}{9}$ &  $\sfrac{1}{10}$ \\
         \hline
         \\
     & &  Equation & $t=11$ & $t=12$ & $t=13$ & 14 & 15 & 16 & 17 & 18 & 19 & 20 \\
         \hline
$k=1$ & $s=1$ &   $\sfrac{1}{t}$   &  $\sfrac{1}{11}$ &  $\sfrac{1}{12}$ &  $\sfrac{1}{13}$ &  $\sfrac{1}{14}$ &  $\sfrac{1}{15}$ &  $\sfrac{1}{16}$ &  $\sfrac{1}{17}$ &  $\sfrac{1}{18}$ &  $\sfrac{1}{19}$ &  $\sfrac{1}{20}$ \\
     \hline
    \end{tabular}
    \end{adjustbox}
    \end{center}
\end{table}

\begin{table}[h]
\scriptsize
    \caption{n-Shapley Coefficients for $n=k=2$}    
    \label{app_tab:nShap_shap_coeffs_maxK_2} 
    \begin{center}    
    \begin{adjustbox}{max width=\textwidth}
    \begin{tabular}{cc|c|cccccccccc|}
     & & Equation & $t=1$ & $t=2$ & $t=3$ & 4 & 5 & 6 & 7 & 8 & 9 & 10 \\ 
     \hline
$k=2$ & $s=1$ &   $\frac{-(t-2)}{2t}$   & $1$ & $0$ &  $\sfrac{-1}{6}$ &  $\sfrac{-1}{4}$ &  $\sfrac{-3}{10}$ &  $\sfrac{-1}{3}$ &  $\sfrac{-5}{14}$ &  $\sfrac{-3}{8}$ &  $\sfrac{-7}{18}$ &  $\sfrac{-2}{5}$ \\
      & $s=2$ &   $\frac{1}{(t-1)}$   & $0$ & $1$ &  $\sfrac{1}{2}$ &  $\sfrac{1}{3}$ &  $\sfrac{1}{4}$ &  $\sfrac{1}{5}$ &  $\sfrac{1}{6}$ &  $\sfrac{1}{7}$ &  $\sfrac{1}{8}$ &  $\sfrac{1}{9}$ \\
         \hline
         \\
     & &  Equation & $t=11$ & $t=12$ & $t=13$ & 14 & 15 & 16 & 17 & 18 & 19 & 20 \\
         \hline
$k=2$ & $s=1$ &   $\frac{-(t-2)}{2t}$   &  $\sfrac{-9}{22}$ &  $\sfrac{-5}{12}$ &  $\sfrac{-11}{26}$ &  $\sfrac{-3}{7}$ &  $\sfrac{-13}{30}$ &  $\sfrac{-7}{16}$ &  $\sfrac{-15}{34}$ &  $\sfrac{-4}{9}$ &  $\sfrac{-17}{38}$ &  $\sfrac{-9}{20}$ \\
      & $s=2$ &   $\frac{1}{(t-1)}$   &  $\sfrac{1}{10}$ &  $\sfrac{1}{11}$ &  $\sfrac{1}{12}$ &  $\sfrac{1}{13}$ &  $\sfrac{1}{14}$ &  $\sfrac{1}{15}$ &  $\sfrac{1}{16}$ &  $\sfrac{1}{17}$ &  $\sfrac{1}{18}$ &  $\sfrac{1}{19}$ \\
     \hline
    \end{tabular}
    \end{adjustbox}
    \end{center}
\end{table}

\begin{table}[h]
\scriptsize
    \caption{n-Shapley Coefficients for $n=k=3$}    
    \label{app_tab:nShap_shap_coeffs_maxK_3} 
    \begin{center}
    \begin{adjustbox}{max width=\textwidth}
    \begin{tabular}{cc|c|cccccccccc|}
     & & Equation & $t=1$ & $t=2$ & $t=3$ & 4 & 5 & 6 & 7 & 8 & 9 & 10 \\ 
     \hline
$k=3$ & $s=1$ &   $\frac{(t-3)(t-4)}{12 t}$   & $1$ & $0$ & $0$ & $0$ &  $\sfrac{1}{30}$ &  $\sfrac{1}{12}$ &  $\sfrac{1}{7}$ &  $\sfrac{5}{24}$ &  $\sfrac{5}{18}$ &  $\sfrac{7}{20}$ \\
      & $s=2$ &   $\frac{-(t-3)}{2(t-1)}$   & $0$ & $1$ & $0$ &  $\sfrac{-1}{6}$ &  $\sfrac{-1}{4}$ &  $\sfrac{-3}{10}$ &  $\sfrac{-1}{3}$ &  $\sfrac{-5}{14}$ &  $\sfrac{-3}{8}$ &  $\sfrac{-7}{18}$ \\
      & $s=3$ &   $\frac{1}{(t-2)}$   & $0$ & $0$ & $1$ &  $\sfrac{1}{2}$ &  $\sfrac{1}{3}$ &  $\sfrac{1}{4}$ &  $\sfrac{1}{5}$ &  $\sfrac{1}{6}$ &  $\sfrac{1}{7}$ &  $\sfrac{1}{8}$ \\
         \hline
         \\
     & &  Equation & $t=11$ & $t=12$ & $t=13$ & 14 & 15 & 16 & 17 & 18 & 19 & 20 \\
         \hline
$k=3$ & $s=1$ &   $\frac{(t-3)(t-4)}{12 t}$   &  $\sfrac{14}{33}$ &  $\sfrac{1}{2}$ &  $\sfrac{15}{26}$ &  $\sfrac{55}{84}$ &  $\sfrac{11}{15}$ &  $\sfrac{13}{16}$ &  $\sfrac{91}{102}$ &  $\sfrac{35}{36}$ &  $\sfrac{20}{19}$ &  $\sfrac{17}{15}$ \\
      & $s=2$ &   $\frac{-(t-3)}{2(t-1)}$   &  $\sfrac{-2}{5}$ &  $\sfrac{-9}{22}$ &  $\sfrac{-5}{12}$ &  $\sfrac{-11}{26}$ &  $\sfrac{-3}{7}$ &  $\sfrac{-13}{30}$ &  $\sfrac{-7}{16}$ &  $\sfrac{-15}{34}$ &  $\sfrac{-4}{9}$ &  $\sfrac{-17}{38}$ \\
      & $s=3$ &   $\frac{1}{(t-2)}$   &  $\sfrac{1}{9}$ &  $\sfrac{1}{10}$ &  $\sfrac{1}{11}$ &  $\sfrac{1}{12}$ &  $\sfrac{1}{13}$ &  $\sfrac{1}{14}$ &  $\sfrac{1}{15}$ &  $\sfrac{1}{16}$ &  $\sfrac{1}{17}$ &  $\sfrac{1}{18}$ \\
     \hline
    \end{tabular}
    \end{adjustbox}
    \end{center}
\end{table}

\newpage
\paragraph{Faith SHAP, the Faithful Shapley Index}
The other most recent attempt at a Shapley interaction index is the faithful Shapley interaction index.
Instead of leveraging the permutation sampling symmetry of the original Shapley value, this work instead extends the Shapley value by means of its least-squares characterization.
As we discuss extensively in this work, this can be seen as further utilizing the characteriztaion of Shapley values as an additive model approximation.

\cite{tsai2023faithSHAP}'s Equation (16) solves for the faithful Shapey interaction indices from the perspective of the Mobius/purified functions:
\begin{align}
\circphi^{\text{Faith-SHAP-k}}_S \circ f = \Tilde{f}_S + (-1)^{k-|S|} \frac{|S|}{k + |S|} {k \choose |S|} \sum_{T\supsetneq S, |T|> k} \frac{ {|T|-1 \choose k} }{ {|T|+k-1 \choose k+|S|} } \Tilde{f}_T
\label{app_eqn:faith_SHAP_mobius_form}
\end{align}

In Tables \ref{app_tab:faith_shap_coeffs_maxK_1}, \ref{app_tab:faith_shap_coeffs_maxK_2}, and \ref{app_tab:faith_shap_coeffs_maxK_3} below,
we display what these coefficients are for the Mobius purified interaction effects.
We additionally calculate a slightly simpler form of these coefficients in the claim below.

\begin{table}[h]
\scriptsize
    \caption{FaithSHAP Coefficients for $k=1$}    
    \label{app_tab:faith_shap_coeffs_maxK_1} 
    \begin{center}
    \begin{adjustbox}{max width=\textwidth}
    \begin{tabular}{cc|c|cccccccccc|}
     & & Equation & $t=1$ & $t=2$ & $t=3$ & 4 & 5 & 6 & 7 & 8 & 9 & 10 \\ 
     \hline
$k=1$ & $s=1$ &   $\sfrac{1}{t}$   & $1$ &  $\sfrac{1}{2}$ &  $\sfrac{1}{3}$ &  $\sfrac{1}{4}$ &  $\sfrac{1}{5}$ &  $\sfrac{1}{6}$ &  $\sfrac{1}{7}$ &  $\sfrac{1}{8}$ &  $\sfrac{1}{9}$ &  $\sfrac{1}{10}$ \\
         \hline
         \\
     & &  Equation & $t=11$ & $t=12$ & $t=13$ & 14 & 15 & 16 & 17 & 18 & 19 & 20 \\
         \hline
$k=1$ & $s=1$ &   $\sfrac{1}{t}$   &  $\sfrac{1}{11}$ &  $\sfrac{1}{12}$ &  $\sfrac{1}{13}$ &  $\sfrac{1}{14}$ &  $\sfrac{1}{15}$ &  $\sfrac{1}{16}$ &  $\sfrac{1}{17}$ &  $\sfrac{1}{18}$ &  $\sfrac{1}{19}$ &  $\sfrac{1}{20}$ \\
     \hline
    \end{tabular}
    \end{adjustbox}
    \end{center}
\end{table}

\begin{table}[h]
\scriptsize
    \caption{FaithSHAP Coefficients for $k=2$}
    \label{app_tab:faith_shap_coeffs_maxK_2}
    \begin{center}    
    \begin{adjustbox}{max width=\textwidth}
    \begin{tabular}{cc|c|cccccccccc|}
     & & Equation & $t=1$ & $t=2$ & $t=3$ & 4 & 5 & 6 & 7 & 8 & 9 & 10 \\ 
     \hline
$k=2$ & $s=1$ &  $\frac{-2(t-2)}{t(t+1)}$     & $1$ & $0$ &  $\sfrac{-1}{6}$ &  $\sfrac{-1}{5}$ &  $\sfrac{-1}{5}$ &  $\sfrac{-4}{21}$ &  $\sfrac{-5}{28}$ &  $\sfrac{-1}{6}$ &  $\sfrac{-7}{45}$ &  $\sfrac{-8}{55}$ \\
      & $s=2$ &   $\frac{6}{t(t+1)}$   & $0$ & $1$ &  $\sfrac{1}{2}$ &  $\sfrac{3}{10}$ &  $\sfrac{1}{5}$ &  $\sfrac{1}{7}$ &  $\sfrac{3}{28}$ &  $\sfrac{1}{12}$ &  $\sfrac{1}{15}$ &  $\sfrac{3}{55}$ \\
         \hline
         \\
     & &  Equation & $t=11$ & $t=12$ & $t=13$ & 14 & 15 & 16 & 17 & 18 & 19 & 20 \\
         \hline
$k=2$ & $s=1$ &  $\frac{-2(t-2)}{t(t+1)}$     &  $\sfrac{-3}{22}$ &  $\sfrac{-5}{39}$ &  $\sfrac{-11}{91}$ &  $\sfrac{-4}{35}$ &  $\sfrac{-13}{120}$ &  $\sfrac{-7}{68}$ &  $\sfrac{-5}{51}$ &  $\sfrac{-16}{171}$ &  $\sfrac{-17}{190}$ &  $\sfrac{-3}{35}$ \\
      & $s=2$ &   $\frac{6}{t(t+1)}$   &  $\sfrac{1}{22}$ &  $\sfrac{1}{26}$ &  $\sfrac{3}{91}$ &  $\sfrac{1}{35}$ &  $\sfrac{1}{40}$ &  $\sfrac{3}{136}$ &  $\sfrac{1}{51}$ &  $\sfrac{1}{57}$ &  $\sfrac{3}{190}$ &  $\sfrac{1}{70}$ \\
     \hline
    \end{tabular}
    \end{adjustbox}
    \end{center}
\end{table}

\begin{table}[h]
\scriptsize
    \caption{FaithSHAP Coefficients for $k=3$}    
    \label{app_tab:faith_shap_coeffs_maxK_3} 
    \begin{center}
    \begin{adjustbox}{max width=\textwidth}
    \begin{tabular}{cc|c|cccccccccc|}
     & & Equation & $t=1$ & $t=2$ & $t=3$ & 4 & 5 & 6 & 7 & 8 & 9 & 10 \\ 
     \hline
$k=3$ & $s=1$ &   $\frac{3(t-3)(t-2)}{t(t+1)(t+2)}$   & $1$ & $0$ & $0$ &  $\sfrac{1}{20}$ &  $\sfrac{3}{35}$ &  $\sfrac{3}{28}$ &  $\sfrac{5}{42}$ &  $\sfrac{1}{8}$ &  $\sfrac{7}{55}$ &  $\sfrac{7}{55}$ \\
      & $s=2$ &   $\frac{24(t-3)}{t(t+1)(t+2)}$   & $0$ & $1$ & $0$ &  $\sfrac{-1}{5}$ &  $\sfrac{-8}{35}$ &  $\sfrac{-3}{14}$ &  $\sfrac{-4}{21}$ &  $\sfrac{-1}{6}$ &  $\sfrac{-8}{55}$ &  $\sfrac{-7}{55}$ \\
      & $s=3$ &   $\frac{60}{t(t+1)(t+2)}$   & $0$ & $0$ & $1$ &  $\sfrac{1}{2}$ &  $\sfrac{2}{7}$ &  $\sfrac{5}{28}$ &  $\sfrac{5}{42}$ &  $\sfrac{1}{12}$ &  $\sfrac{2}{33}$ &  $\sfrac{1}{22}$ \\
         \hline
         \\
     & &  Equation & $t=11$ & $t=12$ & $t=13$ & 14 & 15 & 16 & 17 & 18 & 19 & 20 \\
         \hline
$k=3$ & $s=1$ &   $\frac{3(t-3)(t-2)}{t(t+1)(t+2)}$   &  $\sfrac{18}{143}$ &  $\sfrac{45}{364}$ &  $\sfrac{11}{91}$ &  $\sfrac{33}{280}$ &  $\sfrac{39}{340}$ &  $\sfrac{91}{816}$ &  $\sfrac{35}{323}$ &  $\sfrac{2}{19}$ &  $\sfrac{68}{665}$ &  $\sfrac{153}{1540}$ \\
      & $s=2$ &   $\frac{24(t-3)}{t(t+1)(t+2)}$   &  $\sfrac{-16}{143}$ &  $\sfrac{-9}{91}$ &  $\sfrac{-8}{91}$ &  $\sfrac{-11}{140}$ &  $\sfrac{-6}{85}$ &  $\sfrac{-13}{204}$ &  $\sfrac{-56}{969}$ &  $\sfrac{-1}{19}$ &  $\sfrac{-32}{665}$ &  $\sfrac{-17}{385}$ \\
      & $s=3$ &   $\frac{60}{t(t+1)(t+2)}$   &  $\sfrac{5}{143}$ &  $\sfrac{5}{182}$ &  $\sfrac{2}{91}$ &  $\sfrac{1}{56}$ &  $\sfrac{1}{68}$ &  $\sfrac{5}{408}$ &  $\sfrac{10}{969}$ &  $\sfrac{1}{114}$ &  $\sfrac{1}{133}$ &  $\sfrac{1}{154}$ \\
     \hline
    \end{tabular}
    \end{adjustbox}
    \end{center}
\end{table}

\begin{claim}
    The Faithful Shapley coefficients can be written with the alternative formula:
    
    \begin{align}        
\circphi^{\text{Faith-SHAP-k}}_S \circ f = \Tilde{f}_S + \sum_{T\supsetneq S, |T|> k} 
        (-1)^{k-s} { \scalebox{1.15}{${k+s-1 \choose s-1}$} } \frac{{t-s-1\choose k-s}}{{t+k-1\choose k}} \Tilde{f}_T
    \end{align}
\end{claim}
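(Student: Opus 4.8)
The plan is to show that the two stated formulas for $\circphi^{\text{Faith-SHAP-k}}_S \circ f$ agree coefficient-by-coefficient on the Mobius basis $\{\Tilde{f}_T\}$. Since both expressions leave the coefficient of $\Tilde{f}_S$ equal to $1$ and both vanish for $T$ not strictly containing $S$ or with $|T| \le k$, it suffices to fix $S$ with $s := |S|$ and $T \supsetneq S$ with $t := |T| > k$, and to verify the identity of scalar coefficients
\begin{align}
(-1)^{k-s} \, \frac{|S|}{k+|S|} {k \choose |S|} \cdot \frac{{t-1 \choose k}}{{t+k-1 \choose k+s}}
= (-1)^{k-s} {k+s-1 \choose s-1} \frac{{t-s-1 \choose k-s}}{{t+k-1 \choose k}}.
\end{align}
The signs match trivially, so the core task is the binomial identity obtained by cancelling $(-1)^{k-s}$ from both sides. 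First I would clear denominators and rewrite all binomials as ratios of factorials, turning the claim into an equality of products of factorials in the parameters $s$, $k$, $t$.

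The key algebraic steps, in order: (i) expand $\frac{s}{k+s}{k \choose s} = \frac{s \cdot k!}{(k+s)\, s!\, (k-s)!}$ and note $\frac{s}{(k+s)(k-s)!}$ can be folded into a cleaner factorial expression — indeed $\frac{s}{k+s}{k\choose s}$ should simplify toward ${k+s-1\choose s-1}$ times a ratio of the remaining $t$-dependent factors; (ii) handle the $t$-dependent ratio $\frac{{t-1\choose k}}{{t+k-1\choose k+s}}$ versus $\frac{{t-s-1\choose k-s}}{{t+k-1\choose k}}$ by writing each as a quotient of falling factorials in $t$ and checking that the ratio of these two quotients is exactly the constant $\frac{{k+s-1\choose s-1}}{\frac{s}{k+s}{k\choose s}}$, independent of $t$; (iii) verify that remaining constant identity, which reduces to a statement purely about $s$ and $k$. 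Throughout I would use the standard reflection and absorption identities ${a\choose b}{b\choose c} = {a\choose c}{a-c\choose b-c}$ and $\frac{a}{b}{b\choose a} = {b-1\choose a-1}$.

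The main obstacle I expect is bookkeeping in step (ii): the factor ${t+k-1\choose k+s}$ in the denominator of the first form versus ${t+k-1\choose k}$ in the denominator of the second do not share the same top-to-bottom gap, so the cancellation of $t$-dependence is not immediate — one must carefully track that ${t-1\choose k}{t+k-1\choose k+s}^{-1}$ and ${t-s-1\choose k-s}{t+k-1\choose k}^{-1}$ each, after expansion, have the same $t$-dependent factor $\frac{(t-1)!\,(t-k-s-1)! \text{ or similar}}{(\cdots)!}$ up to the $t$-free constant. A clean way to sidestep brute force is to observe that both sides, as functions of $t$, are rational with the same numerator and denominator degrees and the same roots (at $t = s, s+1, \dots$ forced by the vanishing of ${t-s-1\choose k-s}$ when $t-s-1 < k-s$, i.e. $t \le k$), so they agree up to a multiplicative constant; then evaluating at a single convenient value such as $t = k+1$ (where ${t-s-1\choose k-s} = {k-s\choose k-s} = 1$) pins down the constant and closes the argument. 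As a final sanity check I would confirm the $k=1,2,3$ columns of Tables \ref{app_tab:faith_shap_coeffs_maxK_1}, \ref{app_tab:faith_shap_coeffs_maxK_2}, \ref{app_tab:faith_shap_coeffs_maxK_3} against the new formula.
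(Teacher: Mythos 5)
Your proposal is correct and follows essentially the same route as the paper: reduce the claim to the term-by-term coefficient identity
$\frac{s}{k+s}\binom{k}{s}\binom{t-1}{k}\binom{t+k-1}{k+s}^{-1}=\binom{k+s-1}{s-1}\binom{t-s-1}{k-s}\binom{t+k-1}{k}^{-1}$
and verify it by factorial expansion and cancellation, which is exactly the computation carried out in the paper's proof. Your zeros-and-poles shortcut (both sides reduce, after cancelling the common factors $(t-1)\cdots(t-s)$, to the same rational function of $t$, so one evaluation at $t=k+1$ pins the constant) is a valid alternative organization of the same verification, modulo the minor slip that the uncancelled numerator/denominator degrees differ and the RHS numerator's roots are at $t=s+1,\dots,k$ rather than starting at $t=s$.
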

\begin{proof}
    We would like to show that:

    \[
    (-1)^{k-s} \frac{s}{k + s} { \scalebox{1.4}{${k \choose s}$} }\frac{ {t-1 \choose k} }{ {t+k-1 \choose k+s} }
    =
    (-1)^{k-s} { \scalebox{1.15}{${k+s-1 \choose s-1}$} } \frac{{t-s-1\choose k-s}}{{t+k-1\choose k}}
    \]

    Let us write 
    \[
    \bigg[
    \frac{s}{k + s}{k \choose s}  {t-1 \choose k}  {t+k-1 \choose k+s}^{-1}
    \bigg]
    \cdot
    \bigg[
    {k+s-1 \choose s-1}  {t-s-1\choose k-s} {t+k-1\choose k}^{-1}
    \bigg]^{-1}
    =
    \]

    \[
    \bigg[
    \frac{s}{k + s} \cdot \frac{k!}{s! (k-s)!} \frac{(t-1)!}{k! (t-k-1)!} {t+k-1 \choose k+s}^{-1}
    \bigg]
    \cdot
    \bigg[
    \ \frac{(s-1)! k!}{(k+s-1)!}  {t-s-1\choose k-s}^{-1}  \frac{(t+k-1)!}{k!(t-1)!}
    \bigg] =
    \]
    \[
    \bigg[
    \frac{1}{k + s} \cdot \frac{(t-1)!}{(s-1)! (k-s)! (t-k-1)!} {t+k-1 \choose k+s}^{-1}
    \bigg]
    \cdot
    \bigg[
    \ \frac{(s-1)! (t+k-1)!}{(k+s-1)! (t-1)!}  {t-s-1\choose k-s}^{-1}   
    \bigg] =
    \]
    \[
    \bigg[
    \frac{1}{k + s} \cdot  \frac{1}{(k-s)! (t-k-1)!} {t+k-1 \choose k+s}^{-1}
    \bigg]
    \cdot
    \bigg[
    \ \frac{(t+k-1)!}{(k+s-1)!}  {t-s-1\choose k-s}^{-1}  
    \bigg] =
    \]
    \[
    \bigg[
    \frac{1}{k + s} \cdot  \frac{1}{(k-s)! (t-k-1)!} \frac{(t-s-1)!(k+s)!}{(t+k-1)!} 
    \bigg]
    \cdot
    \bigg[
    \ \frac{(t+k-1)!}{(k+s-1)!}  \frac{(k-s)!(t-k-1)!}{(t-s-1)!}
    \bigg] =
    \]
    \[
    \bigg[
    \frac{1}{k + s} \cdot   \frac{(k+s)!}{(t+k-1)!} 
    \bigg]
    \cdot
    \bigg[
    \ \frac{(t+k-1)!}{(k+s-1)!}  
    \bigg] =
    \]
    \[
    \bigg[
    \frac{(k+s-1)!}{(t+k-1)!} 
    \bigg]
    \cdot
    \bigg[
    \ \frac{(t+k-1)!}{(k+s-1)!}  
    \bigg] =
    \]
    \[
    \big[  1  \big] 
    \cdot 
    \big[  1  \big] = 1
    \]
\end{proof}

Due to the fact that $s\leq k < t$, we find this to be a slightly nicer formulation when written as a function of $t$.

\newpage
\newpage

\newpage

\section{Common Pitfalls in the Current Literature}
\label{app_sec:common_fallacies}

\subsection{Locally Zero vs. Globally Zero}
Consider the simple function $f(x_1,x_2) = \cos{(2\pi x_1)} + \sin{(2\pi x_2)}$.
Assume the input features are uniformly distributed across the space $[-1,1]^2$.
Based on our correspondence, we can see see that the Shapley value functions decompose the same as an additive model with $\circphi_1(x_1,x_2) = \cos{(2\pi x_1)}$ and $\circphi_2(x_1,x_2) = \sin{(2\pi x_2)}$.

When we set $x_1=\frac{1}{4}$, we have that $\circphi_1(\frac{1}{4},x_2)=\cos{\frac{\pi}{2}}=0$.
Does this suddenly mean that our function $\circphi_1(x_1,x_2)$ is not a function of $x_1$ because it is zero at one value?
No, it does not.

\begin{figure}[h!]
    \centering
    \includegraphics[width=0.24\linewidth]{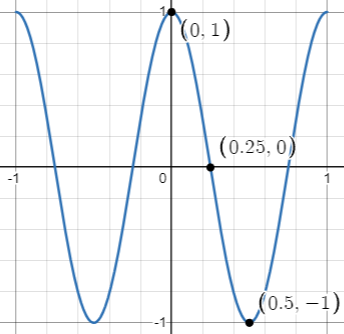}
    \quad\quad
    \includegraphics[width=0.24\linewidth]{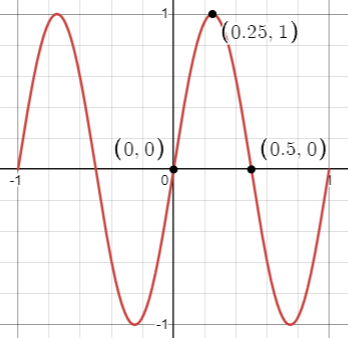}
    \caption{Cosine and sine functions.}
    \label{app_fig:basic_sine_cosine}
\end{figure}

It is hoped that after the clarification of the functional perspective on the Shapley value, it can be made clear that the exact same question is being asked when the Shapley value is equal to zero for a single input point.
If one is interested in the global importance of a feature, then one should check for being zero or nonzero as an entire function.
This can for instance be checked with the variance of the Shapley function:
\begin{align}
\bbV_i^\text{SHAP}
:= 
\bbV\text{ar}_{X}[  \circphi^\text{SHAP}_i(X)  ]
> 0
\label{app_eqn:variance_of_shapley_functions}
\end{align}

\subsection{Beeswarm Plots instead of Shape Functions}

It is common to see beeswarm plots of the SHAP values used as an aggregate summary over an entire dataset.
Although it is an information dense representation of the SHAP values across the entire dataset,
we feel the additive model perspective brings some insights into their limitations and what can be improved about them.

\begin{figure}[h]
    \centering
    \begin{subfigure}[b]{0.5\textwidth}
        \centering
        \includegraphics[width=1.0\linewidth]{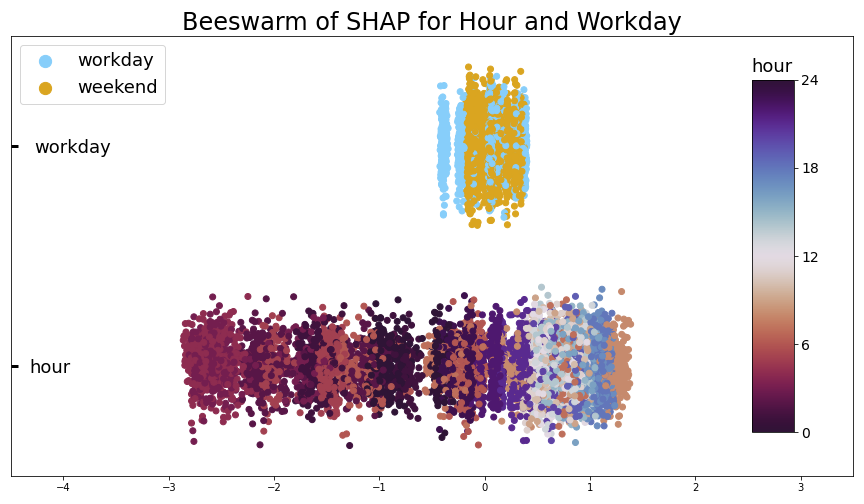}
        \caption{Beeswarm Representation of SHAP}
    \end{subfigure}%
    ~ 
    \begin{subfigure}[b]{0.5\textwidth}
        \centering
        \includegraphics[height=1.2in]{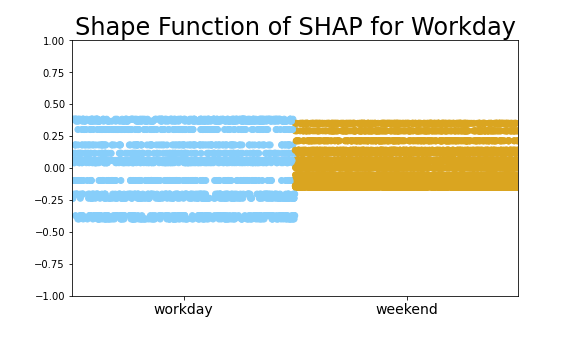}
        \includegraphics[height=1.2in]{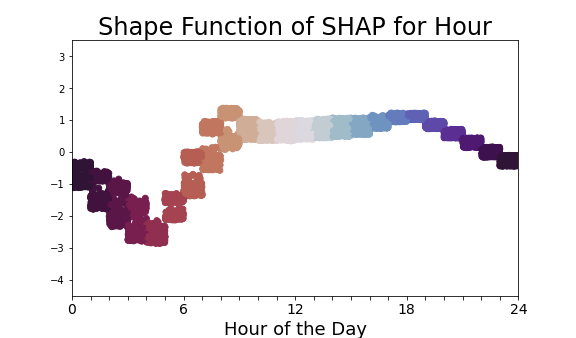}
        \caption{Shape Function Representation of SHAP}
    \end{subfigure}
    \caption{Alternate representations of the SHAP values aggregated over an entire dataset.}
    \label{app_fig:beeswarm_vs_shape_function}
\end{figure}

First of all, the major limitation when compared to the shape function or  additive model representation is simply the compression of information.
Each of the shape functions in Figure \ref{app_fig:beeswarm_vs_shape_function}b is compressed horizontally and then its flattened version is rotated and stacked amongst the SHAP values for all other features.
Although this information about the feature value is theoretically preserved via the colormap,
it is well known from data visualization that this is insufficient to adequately preserve the information.
This compression effect is especially pronounced in cyclic or heavily oscillating shape functions.

Surprisingly, these plots are also commonly used with the same red-blue diverging colormap which is used in the output space for SHAP.
This colormap makes sense for the SHAP values since they are referring to the positive or negative influence on the output prediction; however, when also applied to the feature value, we are conflating the y-axis and the x-axis in the shape functions of Figure \ref{app_fig:beeswarm_vs_shape_function}b.
%
Additionally, this is using a diverging colormap to represent what are usually sequential features (meaning a sequential colormap should be used instead).
Although we again recommend to look at the shape functions to get a clearer overall picture,
if one is not willing to inspect all of the shape functions,
it is perhaps more effective to remove the color from the beeswarm plot entirely and/ or to just use an aggregate statistic such as the variance in Equation \ref{app_eqn:variance_of_shapley_functions} to measure the spread of the Shapley values depicted by the beeswarm plot.


\subsection{Locally Linear or Locally Additive}

The interpretation of a locally linear model has two key interpretations which are unfortunately conflated in much of the work on explainability.
The first is the semi-local interpretation of the coefficients as gradient-like coefficients telling the direction of greatest influence.
Usually this is done similar to LIME \citep{ribeiro2017lime} where a linear function is fit to a weighted neighborhood of points.
This is in contrast with the gradient which fits the same linear function to an arbitrarily small neighborhood of points.
These are both different from the second interpretation of the linear model as a structural assumption onto the functional space.
In particular, the linearity and efficiency axioms of the Shapley value are made on the Shapley operator itself.
The local linearity assumptions do not translate to a global linearity assumption, but rather to a global additive assumption where the height function is dictated to respect the additive structure of the original function.

Hopefully after the clearer connections we make with additive models,
it is clear why the height interpretation is the correct one for SHAP and that the gradient-like tests for SHAP are ill-posed after a contextual understanding of the goal of SHAP as the height function rather than a measurement of the local sensitivity similar to the gradient.

Let us also briefly recall that in the case of a GAM, there is still an interesting correspondence between the gradient of SHAP and the gradient of the GAM.

Recall the GAM-1 equation. 
\begin{align}
    F^{\scalebox{0.55}{$\leq 1$}}(x_1,\dots,x_d) = { f_1(x_1) + \dots + f_d(x_d)}
    \nonumber
\end{align}
It follows that the gradient obeys the similar 
\begin{align}
\nabla F^{\scalebox{0.55}{$\leq 1$}}(x_1,\dots,x_d) &= \langle \partial_1 f_1(x_1) , \dots, \partial_d f_d(x_d) \rangle \nonumber \\
 &= \langle \partial_1 \circphi_1(x) , \dots, \partial_d \circphi_d(x) \rangle \nonumber
\end{align}

If one is interested in a local sensitivity test, then something like the gradient of the GAM or gradient of the SHAP should instead be used,
but it is gently reminded that the vanilla gradient will ignore the statistical structure of the manifold \citep{frye2021shapleyOnTheManifold} and could face alternate issues like the shattered gradients problem \citep{balduzzi2017shatteredGradientsProblem}.

\subsection{Baseline Method on Discrete Inputs}

If you have a finite set of discrete, categorical inputs, then the (mis)usage of the baseline method becomes of great importance for reasons beyond the off-the-manifold problem.
In particular, it is common to replace the current input variable by the baseline input variable only to realize they had the same value (e.g. zero).
It follows that the counterfactual removal will have no effect,
Although extremely rare in the continuous case, when using categorical input variables it is easy to mask out bivariate and even higher-order effects.
It is especially easy to make this mistake on boolean input variables. 
%
Accordingly, it is perhaps recommended to those with lesser familiarity with boolean functions or logical functions to use the $\{\pm 1\}$ or one-hot encoding instead of $\{0,1\}$ to avoid making such mistakes  
\citep{odonnel2014booleanFunctionBook}.




\newpage
\newpage

\section{Impossibility Theorems for Feature Interactions}
\label{app_sec:impossibility_theorems}

This section will broadly prove the representational power of SHAP and related black-box explainers in the form of `impossibility theorems' or `possibility theorems'.
The main method of proof technique is simply via the proof of functional correspondence between the two spaces, the functional space of interest to be taken for hypothesis tests, $\cH$, and the functional space of additive models for some sufficiently large order $k\in\bbN$, $\cH^{\leq k}$.

Let us first proceed by defining the trace of the SHAP function as the object of interest so that we may thus allow a direct functional comparison between the two of GAM and SHAP.
For simplicity,
we will assume a rectangular feature space $\cX = \cX_1 \times \dots \times \cX_d$ for some subspaces $\cX_i$ for each $i\in[d]$.
It is simple to extend some subset of $\bbR^d$ to a rectangular subset in this way by taking the product of the marginal spaces.
It is also necessary to extend the distribution $p(x)$ which has been assumed on the original space $\cX$ to our rectangular space.
However, the obvious extension of zero probability on any additional coordinate will be sufficient for our purposes, since it will already be required to make all of our statements modulo differences on a null set (set of measure zero).
Accordingly, we will simply restrict our focus to a rectangular feature space and ignore any concerns regarding differences on a null set.
Any hypothesis test which is testing on a set of measure zero, although completely ill-posed, will not be answerable under this statistical framework.

For a given test point $x^*\in\cX$,
we define the SHAP trace at $x^*$ to be the object:
\begin{align}
    \cT_F^{\text{SHAP}}(x^*) = \bigg\{
    \Big(x, [\Phi^{\text{SHAP}} \circ F](x)\Big) : x\in
    \bigcup_{i\in[d]} \{ x^*_1 \} \times \dots \times \{ x^*_{i-1} \} \times \cX_i \times \{ x^*_{i+1} \} \times \dots \times \{ x^*_d \}
    \bigg\}
    \nonumber
\end{align}
In words, the trace of the SHAP value is the image under the Shapley function of the set which includes all possible 1D perturbations of a single feature value.

To enable a direct comparison with the original functional space $\cH$,
we will define the completion of the trace as the function which is extended to the entire rectangular feature space $\cX$ in the obvious way by assuming that the SHAP trace is the true additive model representing the underlying function $F$.
\begin{align}
    {\cC\cT}_F^{\text{SHAP}}(x^*) = \bigg\{
    \Big(x, \sum_{i=1}^d [\Phi^{\text{SHAP}}_i \circ F](x^*_1,\dots,x^*_{i-1},x_i,x^*_{i+1},\dots,x^*_d )\Big) : x\in\cX
    \bigg\}
    \nonumber
\end{align}

\begin{theorem}
    The SHAP trace of a function $F$ will satisfy any hypothesis test $\cH_0$ v. $\cH_1$ inside of the functional space $\cH$ if and only if the functional space $\cH$ is equivalent to a shift which is not a superset of $\cH^{\leq 1}_{\text{ANOVA}}$.
\end{theorem}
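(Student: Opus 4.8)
The plan is to prove the equivalence by passing through the functional correspondence between a SHAP trace and its first-order (ANOVA-$1$) reconstruction, exactly as the rest of this appendix does, and then reading off the set-theoretic condition from the fibers of that reconstruction. First I would fix the meaning of ``the SHAP trace satisfies a test'': a test $\cH_0$ versus $\cH_1$ is decided by the trace of $F$ precisely when the completion ${\cC\cT}_F^{\text{SHAP}}$ lands on the correct side for every competing pair, so that ``satisfies every test in $\cH$'' becomes the statement that the assignment $F\mapsto{\cC\cT}_F^{\text{SHAP}}$ is a complete invariant of $\cH$ once one discards the structure the trace provably cannot resolve. Using Equation \ref{eqn:shap_via_unanimity_or_synergy_functions} together with efficiency, $\sum_i\circphi_i^{\text{SHAP}}=f_{[d]}-f_\emptyset$, and the conditional projection $\cM_p$ defining $\tilde f_S$, I would show that the completion recovers exactly the first-order Sobol--Hoeffding component $\sum_i\tilde f_i$ and nothing else; its kernel is therefore generated by the two blind spots of SHAP, namely additive constants and the pure interaction terms $\tilde f_S$ with $|S|\ge 2$.

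With the invariant and its kernel identified, the second step is to translate ``$\cH$ is equivalent to a shift that is not a superset of $\cH_{\text{ANOVA}}^{\leq 1}$'' into a statement about those fibers. The clause ``equivalent to a shift'' I would read as the reduction of $\cH$ modulo the constant ambiguity that is forced because the trace is identically blind to $f_\emptyset$, and the clause ``not a superset of $\cH_{\text{ANOVA}}^{\leq 1}$'' as the requirement that this reduced space does not already exhaust every admissible first-order profile. The forward direction is then the clean one, by contraposition: once $\cH$ reduced modulo shift contains a full copy of $\cH_{\text{ANOVA}}^{\leq 1}$, the constant direction is unavoidably present, so $\cH$ contains a pair $F,F+c$ that the completion collapses to the same invariant, exhibiting a test the trace cannot decide.

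The converse, together with the correlated-input bookkeeping, is where I expect the real difficulty to lie. The forward direction needs only a single collapsed pair, whereas the converse must rule out \emph{all} collapses whenever $\cH$ modulo shift misses some ANOVA-$1$ function; and because the trace is also blind to pure interactions, a naive reading admits spurious collisions (for instance a space spanned by a single higher-order term). Making the ``not a superset'' clause tight therefore requires pinning down the admissible test family so that interaction-only differences, which SHAP is designed not to see, are not charged against it while constant differences are, and I expect reconciling these two asymmetric blind spots with the shift quotient to be the crux of the argument. A secondary obstacle is feature correlation: since $\tilde f_S$ is built from the conditional projection $\cM_p$ and the decomposition of variance in Equation \ref{eqn:sobel_decomp_of_var} no longer holds, I would have to run the Sobol-covariance bookkeeping to confirm that the completion still isolates exactly the first-order component, so that the fiber description, and hence the superset characterization, survives the move to correlated inputs.
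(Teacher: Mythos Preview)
Your plan rests on a technical claim that is false, and the error propagates through the rest of the argument. You assert that the completion ${\cC\cT}_F^{\text{SHAP}}$ ``recovers exactly the first-order Sobol--Hoeffding component $\sum_i\tilde f_i$ and nothing else,'' so that the kernel is generated by constants and pure interaction terms. But Equation~\ref{eqn:shap_via_unanimity_or_synergy_functions} gives $\circphi_i(x)=\sum_{S\ni i}\tilde f_S(x_S)/|S|$, so along the $i$-th ray through the anchor $x^*$ the trace is $x_i\mapsto\tilde f_i(x_i)+\sum_{S\ni i,\,|S|\ge 2}\tfrac{1}{|S|}\tilde f_S(x^*_{S\setminus i},x_i)$. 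The interaction pieces do not vanish; they are frozen at $x^*$ in the off-coordinates. Concretely, for $F(x_1,x_2)=x_1x_2$ one has $\tilde f_1=\tilde f_2\equiv 0$ yet the completion at $x^*=(a,b)$ is $\tfrac12 b\,x_1+\tfrac12 a\,x_2$, not zero. So the fibers of the completion are not ``modulo constants and interactions,'' and your kernel bookkeeping does not describe the right equivalence.

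This misidentification also inverts what ``shift'' means and which direction carries the content. In the paper, ``equivalent to a shift'' means subtracting a fixed reference \emph{function} $F_0$ from every member of $\cH$ so that $\cH-F_0$ may land inside $\cH^{\le 1}_{\text{ANOVA}}$ even when all members share a common interaction block; it is not the quotient by additive constants. The success direction is then immediate: if the shifted $\cH$ lies in $\cH^{\le 1}_{\text{ANOVA}}$, every $\circphi_i$ collapses to $\tilde f_i(x_i)$ and the trace completion equals $F-f_\emptyset$ exactly, so $F$ is identifiable and every test is decidable. The failure direction is driven by interactions, not constants: once some difference in $\cH$ carries $\tilde f_S\not\equiv 0$ with $|S|\ge 2$, and $\cH$ is assumed rich enough to contain the additive mimic $\sum_{i\in S}g_i$ with $g_i(x_i)=\tfrac{1}{|S|}\tilde f_S(x^*_{S\setminus i},x_i)$ (this is precisely where the superset assumption $\cH\supseteq\cH^{\le 1}_{\text{ANOVA}}$ is used), the two distinct functions have identical SHAP traces at $x^*$ and the separating test fails. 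Your $F$ versus $F+c$ collision is the wrong mechanism and does not touch the interaction content the theorem is characterizing. Finally, the correlated-input worry you raise is not an obstacle: the argument runs entirely through the conditional decomposition defining $\tilde f_S$, and Equation~\ref{eqn:shap_via_unanimity_or_synergy_functions} holds for that decomposition without any appeal to Equation~\ref{eqn:sobel_decomp_of_var} or Sobol covariances.
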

\begin{proof}
Since we are making a claim across all possible splittings of the functional space $\cH$ into two possibilites of the null hypothesis $\cH_0$ and the alternative hypothesis $\cH_1$,
it is required that we actually show the exact identifiability of the individual function from the SHAP trace alone.

Suppose first the $\cH$ truly is a subset of the ANOVA-1 space (or a shift thereof).
We simply take the difference $(F_1-F_2)$ for some arbtirary $F_1,F_2\in\cH$ or otherwise assume we know the functional shift required to center our functional space to be a subset of the ANOVA-1 space.
It follows from the exact formula of the SHAP function that for any $F\in\cH$, we will have that $\circphi_i(x) = \circphi_i(x_i) = \tilde{f}_i(x_i)$, which implies that the completion of this trace will immediately recover the original GAM-1 function.

In the other direction, to prove the contrapositive,
assume instead that $\cH$ truly has some feature interaction, which can be represented by $(F_1-F_2)$ and $(F_1-F_3)$ being some different shifts for some $F_1,F_2,F_3\in\cH$.
For simplicity, let us shift by $(F_1-F_3)$ so that one difference is zero and one difference is nonzero.
It follows that for the nonzero interaction effect, there is some $S\subseteq[d]$ with $|S|>1$ such that $\tilde{f}_S \not\equiv 0$.
Since this is true in the statistical sense, there is a region of sufficient difference.
Via the assumption that our space $\cH$ is at least as representative as $\cH^{\leq 1}_{\text{ANOVA}}$, this means we can find two distinct functions which map to the same SHAP trace in the local region of this nonzero measure region.
Accordingly, if we take a hypothesis test which identifies these two distinct functions, their SHAP traces will still look indistinguishable and there will be no succesful hypothesis test based on the SHAP trace.
\end{proof}

Of course, practically speaking, we likely do not have access to a priori knowledge about the global feature interactions of some hypothesis space $\cH$ which would allow for the construction of the isomporhism between our given $\cH$ and some $\tilde{\cH}$ which is actually a subset of the ANOVA-1 space $\cH^{\leq 1}_{\text{ANOVA}}$

Proceeding by defining the trace of Faith-SHAP-k and GAM-k in the obvious way, we may find a similar theorem for the higher-order interactions of $k\in\bbN$.
Once again, it is practically more useful to say we cannot directly assume the existence of feature interactions across the entire hypothesis space and hence again asssume $\cH\subseteq \cH^{\leq k}_{\text{ANOVA}}$ without the caveat of allowing a shift by some oracle assumption.
It is also perhaps more interesting that we can make the same statement for any arbitrary frontier $\cI \subseteq \cP([d])$ as we will introduce in the general study of additive models in Appendix \ref{app_sec:additive_models_and_stuff}.

\subsection{Specific Examples}

Although we have shown exact functional equivalence from which the ability to do hypothesis tests follows, we restate some of the tests from \cite{bilodeau2022impossibilityTheoremsForFeatureAttribution} to make a clearer comparison.

The first hypothesis test is coming from their Proposition 3.5,
and although they do not name this hypothesis test, we call it the ``almost $\delta$-local Lipschitz'' because of its resemblance to the slightly more typical ``$\delta$-local Lipschitz'' test.
\begin{align}
    \cL^{\delta,x,i}_\text{almost}(F) := \sup_{x_i' \in [x_i-\delta,x_i+\delta]} \bigg\{
    \frac{| F(x') - F(x) |}{\delta} 
    \bigg\}
    \label{app_eqn:almost_delta_local_lipschitz}
\end{align}
\begin{align}
    \cL^{\delta,x,i}(F) := \sup_{x_i' \in [x_i-\delta,x_i+\delta]} \bigg\{
    \frac{| F(x') - F(x) |}{ |x' - x| } 
    \label{app_eqn:delta_local_lipschitz}
    \bigg\}
\end{align}
They then create the hypothesis tests:
\begin{align}
    \cH_0 = \bigg\{ F \in \cH : \cL^{\delta,x,i}_\text{almost}(F) \leq \frac{\eps}{2} \bigg\} \\
    \cH_1 = \bigg\{ F \in \cH : \cL^{\delta,x,i}_\text{almost}(F) > \eps \bigg\}
    \label{app_eqn:locally_lipschitz_hypothesis_tests}
\end{align}
And then identify that the gradient can successfully distinguish these two hypotheses.
This is desired since it is well known that on a compact interval, continuously differentiable functions are automatically Lipschitz functions.

The next two major hypothesis tests they build are for `local recourse' in Definition 3.7 and `locally spurious' in Definition 3.8.
For recourse, we first assume some counterfactual distribution $\nu(x)$ which they implicitly assume to have nonzero measure across the left-local and right-local regions $[x_i-\delta,x_i]$ and $[x_i,x_i+\delta]$.
We define the `value of moving to the left' and the `value of moving to the right' under the counterfactual distribution, $\nu(x)$, as the $\delta$-local left and right recourse values:
\begin{align}
    \cV^{\delta,x,i,-} (F) := \bbE_{X\sim \nu}\bigg[
    f(x_1,\dots,X_i,\dots,x_d)
    \oct|\oct
    X_i \in [x_i-\delta,x_i]
    \bigg] \\
    \cV^{\delta,x,i,+} (F) := \bbE_{X\sim \nu}\bigg[
    f(x_1,\dots,X_i,\dots,x_d)
    \oct|\oct 
    X_i \in [x_i,x_i+\delta]
    \bigg] 
    \label{app_eqn:delta_local_recourse}
\end{align}
They then create the hypothesis tests:
\begin{align}
    \cH_0 = \bigg\{ F \in \cH : \cV^{\delta,x,i,+} (F) > \cV^{\delta,x,i,-} (F) \bigg\} \\
    \cH_1 = \bigg\{ F \in \cH : \cV^{\delta,x,i,+} (F) \leq \cV^{\delta,x,i,-} (F) \bigg\} 
    \label{app_eqn:local_recourse_hypothesis_tests}
\end{align}

The infinity norm over a local interval (from the left and from the right) is defined as:
\begin{align}
    \|F\|_\infty^{\delta,x,i,-} := \sup_{x_i' \in [x_i-\delta,x_i]} \bigg\{
    {\Big| F(x') \Big|}
    \bigg\} \\
    \|F\|_\infty^{\delta,x,i,+} := \sup_{x_i' \in [x_i,x_i+\delta]} \bigg\{
    {\Big| F(x') \Big|}
    \bigg\}
    \label{app_eqn:local_infty_norm}
\end{align}
The final major hypothesis tests introduced in that work are the tests for if a feature is local spurious:
\begin{align}
    \cH_0 = \bigg\{ F \in \cH : \|F\|_\infty^{\delta,x,i,+} = 0 \bigg\} \\
    \cH_1 = \bigg\{ F \in \cH : \|F\|_\infty^{\delta,x,i,+} \geq \eps \bigg\} 
    \label{app_eqn:local_spurious_hypothesis_tests}
\end{align}

To reiterate,
all of these tests are easily answered via the SHAP value using the additive model trace.
All of these hypothesis tests focus on the behavior inside of a local neighborhood for some arbitrary $\delta$,
whereas the SHAP value evaluated at a single point only describes the behavior at a single point.
Hopefully, from the functional perspective on Shapley,
it is now clear how easily all of these questions can be answered by using the additive model equivalent of the Shapley value.
This does not conflict with previous works showing negative results on these same hypothesis test since their focus was on the ability of a pointwise indicator's ability to perform these hyptohesis tests, and is related to our discussion in Section \ref{app_sec:common_fallacies}.1 and Section \ref{app_sec:common_fallacies}.3.
%
Moreover, it is hoped that the true limitation of SHAP, its inability to adequately handle feature interactions, is now emphasized as something that cannot be shown by any of these existing impossibility tests due to their focus on a single perturbed feature at a time.



\newpage

\section{Experiment Details}

\subsection{2D Synthetic}

First, we look at a particularly simple example of synthetic data to highlight the two important aspects which the Shapley value alone is unable to capture:
feature interaction and feature correlation.
Hopefully this example will help develop intuition for the Shapley value and highlight its unique challenges in the setting where input variables are correlated.

For some  $\rho\in[-1,1]$,
we consider the data generated by 
\[
f(x,y) = x + xy
\quad\quad
X,Y \sim \cN\Big(\Vec{0}, 
{\scriptsize \setlength\arraycolsep{2pt} \begin{pmatrix}
1&\rho\\ \rho&1 \end{pmatrix}}
\Big)
\]

It is relatively straightforward to calculate that:
\begin{figure}[!htb]
\vspace{-1.0em}
    \centering
    \begin{minipage}{.5\textwidth}
\begin{align}
f_\emptyset &= \rho \nonumber \\
f_x &= x + \rho x^2 \nonumber \\
f_y &= \rho y + \rho y^2  \nonumber \\
f_{xy} &= x + xy \nonumber 
\end{align}    
    \end{minipage}%
    \begin{minipage}{0.5\textwidth}
\begin{align}   
\Tilde{f}_\emptyset &= \rho \nonumber \\
\Tilde{f}_x &= x + \rho x^2 - \rho \nonumber \\
\Tilde{f}_y &= \rho y + \rho y^2 - \rho  \nonumber \\
\Tilde{f}_{xy} &= -\rho y + xy - \rho x^2 - \rho y^2 + \rho \nonumber
\end{align}
    \end{minipage}
\end{figure}

\begin{figure*}[h]
    \centering
    \includegraphics[width=0.9\textwidth]{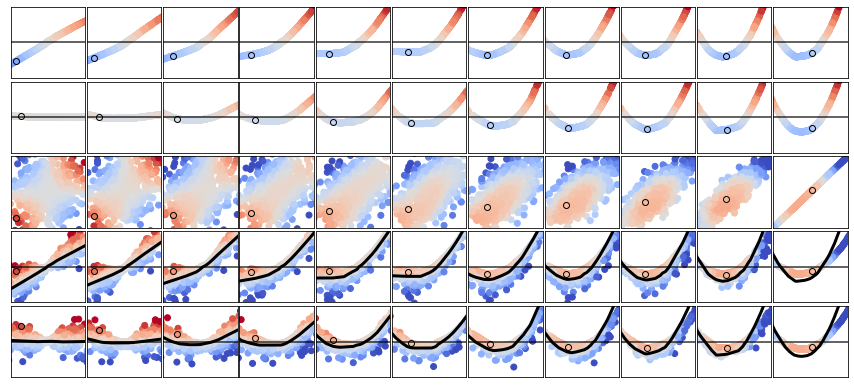}
    \caption{Simple Synthetic Dataset using various $\rho\in\{0.0,0.1,0.2,\dots,1.0\}$.  Each of the five rows corresponds to the learned $\Tilde{f}_x$, $\Tilde{f}_{y}$, $\Tilde{f}_{xy}$, $\circphi_x$, and $\circphi_y$.  
    The third row is hence `top-down' from the z-axis whereas all others have the output f as the vertical axis.
    Visually constrained to $x,y\in[-2,2]^2$ with colors/ outputs in $[-3,3]$.  
    A single point is highlighted to emphasize how the Shapley value in the bottom two rows is constructed from the top three rows. }
    \label{app_fig:baby_synthetic_learned_GAM}
\end{figure*}

The Sobol covariances are hence:
\begin{align}
    \bbE[f \cdot \Tilde{f}_\emptyset] = \rho^2 \nonumber\\
    \bbE[f \cdot \Tilde{f}_{x}] = \bbE[x^2 + \rho x^3 + x^2y + \rho x^3y -\rho x -\rho xy] = 1 + 0 + 0 + 3\rho^2 +0 -\rho^2 \nonumber\\
    \bbE[f \cdot \Tilde{f}_y] = \bbE[\rho xy + \rho xy^2 + \rho xy^2+ \rho xy^3 -\rho x -\rho xy] = \rho^2 + 0 + 0 + \rho(3\rho)+0 -\rho^2 \nonumber \\
    \bbE[f \cdot \Tilde{f}_{xy}] = \bbE[f^2] - (\rho^2) - (1+2\rho^2) - (3\rho^2) = 1 + (1+2\rho^2) - (1+6\rho^2) \nonumber
\end{align}

Hence $\bbC_\emptyset = \rho^2$, $\bbC_x = 1+2\rho^2$, $\bbC_y = 3\rho^2$, $\bbC_{xy} = 1-4\rho^2$.

The Shapley functions are also $\circphi_x(x,y) = \Tilde{f}_{x} + \frac{1}{2}\Tilde{f}_{xy}$ and $\circphi_y(x,y) = \Tilde{f}_{y} + \frac{1}{2}\Tilde{f}_{xy}$: 
\begin{equation}
\resizebox{.98\hsize}{!}{$%
    \circphi_x(x,y) = (x+\rho x^2 - \rho) + \frac{1}{2}( -\rho y + xy - \rho x^2 - \rho y^2 + \rho) = \Big[x-\frac{\rho}{2}y\Big] + \Big[\frac{xy}{2}+\frac{\rho}{2} (x^2 - y^2 - 1) \Big]$}
\nonumber
\end{equation}
\begin{equation}
\resizebox{.98\hsize}{!}{$ %
    \circphi_y(x,y) = (\rho y+\rho y^2 - \rho) + \frac{1}{2}( -\rho y + xy - \rho x^2 - \rho y^2 + \rho) = \Big[\frac{\rho}{2}y\Big] + \Big[\frac{xy}{2}+\frac{\rho}{2} (y^2 - x^2 - 1) \Big]$}
\nonumber
\end{equation}


In Figure \ref{app_fig:baby_synthetic_learned_GAM}, we can see the learned set of functions across various $\rho\in\{0.0,0.1,\dots,1.0\}$ when learning a GAM with the purified loss function.
The Shapley functions can also be calculated to be $\circphi_x = \Tilde{f}_{x} + \frac{1}{2}\Tilde{f}_{xy}$ and $\circphi_y = \Tilde{f}_{y} + \frac{1}{2}\Tilde{f}_{xy}$ which can be seen in the fourth and fifth rows from the Figure.
We can furthermore see that the additive models using the purified loss align with the true purified ANOVA decomposition and that hence the Shapley value functions can be computed in constant time given the purified GAM model.
Further, going left to right, we see how the strength of the 1D effects (first and second rows) increase, whereas the strength of the 2D effects (third row) decreases as the amount of correlation increases.
This corresponds to the deterioration of the feature interaction due to the increase in feature correlation.
At the halfway mark, the 2D function is no longer positively correlated with the true outcome.
This is most obvious in the far right ($\rho=1.0$) plots where $\Tilde{f}_{x} = \Tilde{f}_{x} = -\Tilde{f}_{xy}$, meaning $\circphi_x = \frac{1}{2}\Tilde{f}_{x}$ and $\circphi_y = \frac{1}{2}\Tilde{f}_{y}$.

Interestingly, we can see that for $|\rho|> \frac{1}{2}$, we actually have that $\bbC_{xy} < 0$.
That is to say, the interaction term alone is no longer positively correlated with the function we are trying to learn.
This further implies that the purified interaction is actually negatively correlated with our target, and adding it to the model somehow reduces the performance as measured with MSE.
This must be juxtaposed with the fact that our function $f(x,y) = x+xy$ clearly demonstrates a feature interaction in the term $xy$.

This can however be resolved by not thinking of the purified interaction alone but in conjunction with other features when it is added to the model.
For example, if we started with $y$ and added $x$, then we could consider $\bbC_x+\bbC_{xy}=2-2\rho^2 \geq 0$ as the improvement to the model.
Conversely, if we started with $x$ and added $y$, we could consider $\bbC_y+\bbC_{xy}=1-\rho^2 \geq 0$ as the improvement to the model.
Intuitively, it is not the fact that the interaction is detrimental to the model performance, as clearly it is necessary for $|\rho|<1$, but rather that it is overshadowed by the information which is gained from either $x$ or $y$ alone.
``The redundant information from knowing $x$ or $y$ outweighs the synergistic information from knowing $x$ and $y$.''

\subsection{10D Synthetic}

For our major synthetic experiments where we benchmark the ability of FastSHAP and InstaSHAP to recover the Shapley value effects,
we create a dataset similar to the simple ones in Figures \ref{fig:simple_2D_example_Shapley_fnl_calculation_synergy_and_redundancy} and  \ref{app_fig:baby_synthetic_learned_GAM}.
We generate ten features from a correlated pairs structure on the covariance matrix.
In practice, this will mean our low-dimensional synthetic target variable will remain having a low-dimensional functional ANOVA decomposition because of this simplistic correlation structure.
Moreover, this allows us to relatively easily calculate the exact Shapley functions even in this 10-dimensional dataset.

\[
\Sigma = 
\begin{pmatrix}
1 & \rho &   \\
\rho & 1 &  \\
 &  & 1 & \rho & \\
 &  & \rho & 1 & \\
 &  &  &  & \ddots &  \\
 &  &  &  &  & 1 & \rho \\
 &  &  &  &  & \rho  & 1\\
\end{pmatrix}
\]

We then take the target variable to be 
\begin{align}
    f(x) = \sum_{S\in\cI^{\leq k*}} \beta_S \cdot \prod_{i\in S} x_i
\end{align}
for some $k*\in\bbN$ and for some $\beta_S$ drawn from the normal distribution $\cN(0,1)$ or the Laplace distribution $\text{Laplace}(0,1)$.
We finally divide by a constant to normalize the output response such that the total variance of the output is equal to one.

\subsection{Real-World Tabular Datasets}

We follow the methods of SIAN \citep{enouen2022sian} to train GAM models for tabular datasets.
After the training of a surrogate model,
the Archipelago \citep{tsang2020archipelago} interaction detection method is applied to choose the most important feature interactions from the dataset.
After a small number of feature interactions are chosen from the dataset, a neural network which obeys this low-dimensional GAM structure is trained under the same loss function objective as in the main paper.

\paragraph{Bikeshare}
This dataset predicts the expected bike demand each hour given some relevant features like the day of the week, time of day, and current weather.
There is a total of thirteen different input features predicting a single continuous output variable.

There is a gap in accuracy from GAM-1 to an MLP where the GAM-1 achieves an $R^2$ error of $17.4\%$, whereas an MLP achieves an $R^2$ error of $6.59\%$.
Using the techniques of SIAN,
we select $20$ tuples of size three or less to train a GAM-3 model which achieves $6.23\%$ $R^2$ error,
closing the gap between GAM-1$\equiv$SHAP through the usage of feature interactions.

In particular, 
it is well known that on this dataset there is a strong interaction between the hour variable and workday variable (since people's schedules change on the weekend vs. a workday.)
The ability to capture this particular feature interaction is critical for accurately understanding the dataset, as seen in Figure \ref{fig:bikeshare_spectrum_of_interpretability}.
Also in Figure \ref{fig:bikeshare_spectrum_of_interpretability},
it can be viewed how the interpretability-uninterpretability spectrum along the axis of additive models supports the hypothesis of \cite{rudin2019stopExplaining} by demonstrating that training an accurate GAM model is sufficient to explain SHAP; however, training an accurate SHAP score is not sufficient for training accurate GAM models.

\paragraph{Treecover}
The dataset consists of predicting the types of trees covering a specific forest area from a selection of 7 tree species (Spruce, Lodgepole Pine, Ponderosa Pine, Cottonwood, Aspen, Douglas-Fir, or Krummholz) in a Colorado national park based on
10 numerical features and 1 categorical feature of the area.

However, this misses the fact that both the elevation and soil type are additionally correlated with one another.
Indeed, the soils are grouped according to climatic zone which generally correspond to different altitude climates.
For convenience, we keep these soil classes in the same orders as their expected elevation, named: `lower montane', `upper montane', `subalpine', and `alpine'.
One notes that the Krummholz tree can be found at high altitudes but also in alpine (often rocky) soil.
Similarly, Cottonwoods, Douglas-firs, and Ponderosas are expected to be found at lower altitudes, but also to be found in montane soils.
Without an understanding that these two features are correlated with one another, it might a priori seem like these are two independent contributions to the prediction.
Yet again, it turns out that these two facts are indeed correlated with one another and {hence the 1D projections} alone may not be sufficient to yield a good explanation.

In this case, a lot can be gleaned by viewing the 2D shape function which depends on both the soil and the elevation.
In Figure \ref{fig:treecover_elev_and_or_soil_PDP}, we visualize this 2D shape function as a scatterplot with colored heatmap.
Through the density of points, we can see there is indeed a strong positive correlation between the soil type and the elevation.
Furthermore, using the colors for each tree species, we can see that there is a lot of redundant information carried by both the soil and the elevation, but also that there is some non redundant information.

We train an MLP on this dataset to achieve $80.4\%$ validation accuracy and we train a GAM-1 to achieve $72.4\%$ validation accuracy.
This once again shows a gap in the feature interactions which discredits the ability of SHAP to provide an adequate explanation of what is being learned by the MLP model.
Once again following the techniques of training lower-order GAMs, we are able to train a GAM-5 on 50 tuples to achieve $82.2\%$ accuracy.
This shows that likely there is some information which the low-order GAM with interactions can understand that GAM-1 and SHAP are missing.

\subsection{Computer Vision}

We perform experiments on the CUB dataset consisting of 200 different species of birds and containing over 6000 labeled images \citep{wah2011cubDataset}.
In addition to the species level information, we construct a coarser-grained class label out of the taxonomic family of each of those bird species.
This results in 37 coarse-grained labels for each bird.
For our main CNN we train a ResNet-50 model \citep{he2016resnet} on the masked surrogate objective and for our GAM-$K$x$K$ architecture we train a modififed resnet to only allow for communication between adjacent patches of size 16x16, further details in code.
Both models are initialized with mostly pretrained weights and fine-tuned for 300 epochs on the CUB dataset.

Our vanilla resnet is able to get to $65.0\%$ fine accuracy and $81.8\%$ coarse accuracy.
Compare this to the $33.2\%$ fine and $53.7\%$ coarse accuracies of the GAM-$1$x$1$.
There is clearly a sizable gap in performance between the full complexity ResNet and the GAM-restricted Resnet,
indicating the importance of feature interactions and emphasizing the potential deceptiveness of SHAP on this dataset.
The GAM-$2$x$2$ and GAM-$3$x$3$ achieve fine accuracies of $45.8\%$ and $46.8\%$ as well as coarse accuracies of $66.3\%$ and $66.8\%$.
This once again indicates that even with some feature interactions,
the performance of the Resnet is dependent on even higher order feature interactions or longer-range feature interactions.
As discussed, it is not impossible that these conclusions are only true for the training method used in the GAM-$K$x$K$ and that some novel GAM architecture would not be able to achieve higher performance.
Nonetheless, the issues in training modern neural network will be present also in alternative approaches like FastSHAP, implying that these conclusions are valid regardless.

Another major challenge of applying to domains like computer vision and natural language processing is the prolific usage of pretrained models for downstream tasks.
As discussed in \cite{covert2023shapleyForVIT},
this brings up the important question of how to do surrogate-based modeling to compute the conditional expectation $\cM$.
In principle, we would like to also do the pretraining stage with surrogate masking, however, in practice previous works on this domain \citep{jethani2022fastSHAP,covert2023shapleyForVIT} will instead use the pretrained models which are available and do the fine tuning stage with the masked objectives as we presented in the paper.

We briefly discuss the application of SHAP to classification and how it is different from its application to regression.
In particular, it is often common to train on the cross-entropy or $D_{KL}$ objective,
but still use the SHAP for regression directly on the logits.
There are some potential questions raised about how well this address the nuanced differences between the $D_{KL}$ objective and the $\| \cdot \|_2$ objective,
it is the choice made in previous works \citep{covert2023shapleyForVIT}.
Alternatives for classiffication like Shapley-Shubik \citep{enouen2024textGenSHAP} or Deegan-Packel \citep{biradar2024abudctiveExplanationsWithDeeganPackel} tend to focus on the one-hot scenario,
limiting their application for calibrated prediction.
Accordingly, all of our explanations are done on the logits or log-probabilites of the output prediction.
%


\newpage
\section{Additive Models}
\label{app_sec:additive_models_and_stuff}

\subsection{Purified Loss Equation}

We first reiterate the Fast-Faith-SHAP-k, GAM-k, and Insta-SHAP-GAM-k equations as:
\begin{align}
\argmin_{ \{\circphi_T\}_{T\in\scalebox{0.55}{$\cI_{\leq k}$}} }
\bigg\{ &
\bbE_{x\sim p(x)} \bigg[
\bbE_{\blue{S\sim p^{\text{SHAP}}(S)}} \bigg[
\Big\| 
f(x;S) - \sum_{\substack{T\subseteq[d], |T|\leq k}} \ind({T\subseteq S})\cdot \circphi_T(\blue{x})
\Big\|^2
\bigg]
\bigg] &
\bigg\}  \\
%
%
%
\argmin_{ \{\circphi_T\}_{T\in\scalebox{0.55}{$\cI_{\leq k}$}} }
\bigg\{ &
\bbE_{x\sim p(x)} \bigg[
\bbE_{\blue{S\sim p^{\text{GAM}}(S)}} \bigg[
\Big\| 
f(x;S) - \sum_{\substack{T\subseteq[d], |T|\leq k}} \circphi_T(\blue{x_T})
\Big\|^2
\bigg]
\bigg] &
\bigg\}  \\
%
%
%
\argmin_{ \{\circphi_T\}_{T\in\scalebox{0.55}{$\cI_{\leq k}$}} }
\bigg\{ &
\bbE_{x\sim p(x)} \bigg[
\bbE_{\blue{S\sim p^{\text{SHAP}}(S)}} \bigg[
\Big\| 
f(x;S) - \sum_{\substack{T\subseteq[d], |T|\leq k}} \ind({T\subseteq S})\cdot \circphi_T(\blue{x_T})
\Big\|^2
\bigg]
\bigg] &
\bigg\}    
\end{align}

From the Fast-Faith-SHAP-k perspective,
the major modification we make is to remove the pointwise flexibility of each of the SHAP-k estimators $\circphi_T(x)$ and instead replace each functional approximator with a GAM-like functional approximator $\circphi_T(x_T)$.
This restricts the capacity of the functional amortizer but as discussed extensively this may give a more accurate representation of the true behavior and also is able to demonstrate improved convergence on synthetic datasets.

From the GAM-k perspective,
the major modification is to replace the typical unmasked distribution $p^{\text{GAM}}(S)$ (or sometimes nontrivial in fitting techniques like backfitting), with the masking distribution coming from the Shapley kernel distribution $p^{\text{SHAP}}(S)$.
The second key modification we include is the Instant Mask which only allows the additive influence of each function $\circphi_T(x_T)$ to flow to the final output so long as all of its constituents have been included in the observed mask $S$.
It follows that we may easily calculate downstream explanations of interest like the SHAP value because of the automatic purification of such effects.

\subsection{Extension to Arbitrary Frontiers}
Although the extension of GAMs to higher-order interactions of size 3D and larger is simple to write down as
\begin{alignat}{3}
    F_{\leq k}(x)
    &\oct =\oct &
    \sum_{S\in\cI_{\leq k}} f_S(x_S),
\label{app_eqn:fnl_GAM_k_condensed}
\end{alignat}
the exploration of these higher-order GAMs delayed because
it is typical to be unable to explicitly model all higher-order interaction sets.
For instance, the size of $\cI_{\leq k}$ grows like $O(d^k)$  which is untenable for most practical purposes.
Instead, it has recently been proposed to select only a portion of these interactions as important enough to be included in the model \cite{yang2020gamiNet,dubey2022scalablePolynomials,enouen2022sian}.
We can consider these more general additive models by first choosing a candidate collection of interactions $\cI\subseteq\cP([d])$ and then writing the similar equation:
\begin{align}
    F_\cI(x) = \sum_{S\in\cI} f_S(x_S)
    \label{app_eqn:gam_via_subscollections}
\end{align}
Once again, we will say the order is the size of the largest subset $k = \max\{ |S| : S\in\cI \}$; however, there is now a much richer set of choices compared with the original hyperparameter selection of $k$.
Hence, despite its simplicity by choosing a small number of feature interactions, it does not provide a reduction in complexity unless we can also answer the question of which feature interactions to include.


\subsection{Sobol Solution with Independent Variables}
If we briefly return to the case of independent variables,
we find that the aforementioned decomposition of variance allows a precise answer to the question of feature interaction selection.
Moreover, this means that the functional ANOVA space and the GAM spaces are exactly connected with each other.
\begin{align}
\argmin_{ \{\circphi_T\}_{T\in\cI} }
\bigg\{ 
\bbE_{x\sim p(x)} \bigg[
\Big\| 
F(x) -  \sum_{\substack{T\in\cI}} \circphi_T({x_T})
\Big\|^2
\bigg] 
\bigg\} 
& = 
\sum_{T'\notin \cI} \bbV_{T'}
\end{align}

This of course means that if we have a good way to approximate the Sobol indices,
then we have an easy way to select for interaction tuples by choosing the largest Sobol indices.

In the case of correlated input variables, we are not so lucky.
Although the Sobol covariances \citep{rabitz2010correlatedSobolIndices,hart2018sobolCovariancesDependentVariables} are still able to give a decomposition of the variance of a function
\begin{align}
    \bbV
    =
    \sum_{S\subseteq[d]} \bbC_S
\label{app_eqn:sobel_decomp_of_covar}
\end{align}
where again $\bbC_S := \bbC\text{ov}_{X}[ F(X), \Tilde{f}_S(X_S)  ]$,
they no longer provide an answer to the effectiveness of an additive model with an arbitrary collection of feature interactions $\cI\subseteq\cP([d])$.
In particular, $\bbC_S$ may indeed be negative whereas adding an interaction to an additive model can never decrease its represntational capacity.
Intuitively, this corresponds to the case where the `constructive' information provided by allowing a feature interaction is overshadowed by the `destructive' information created by the redundancies of a feature correlation.

It follows that we must be able to measure the efficacy of an additive model to represent a function under some distribution in an alternative way.
In the sections which follow, we will take a variational perspective to represent the efficacy of the interaction collection for an additive model and to begin to answer the question of how to distinguish the multiple types of feature interactions.

\subsection{Additive Model Solution for Arbitrary Frontiers}

Because of the need to further granulate to the level of synergistic interactions and dissonant interactions, 
we find that it is necessary to study the entire set of possibilities for additive models.
We find that only then can one distinguish between the synergistic interactions and redundant interactions of Figure \ref{fig:simple_2D_example_Shapley_fnl_calculation_synergy_and_redundancy}
For $d=3$, we list all representatives (`frontiers') of nontrivial additive models in Figure \ref{fig:all_possible_feature_interactions_frontiers}.
For example, the function in Figure \ref{fig:simple_2D_example_Shapley_fnl_calculation_synergy_and_redundancy}d would be covered by $1,2 \equiv \{\emptyset,\{1\},\{2\}\}$ whereas the function from \ref{fig:simple_2D_example_Shapley_fnl_calculation_synergy_and_redundancy}a would need to be covered by $12 \equiv \{\emptyset,\{1\},\{2\},\{1,2\}\}$.

\begin{figure}[h]
    \centering
    \includegraphics[width=0.8\linewidth]{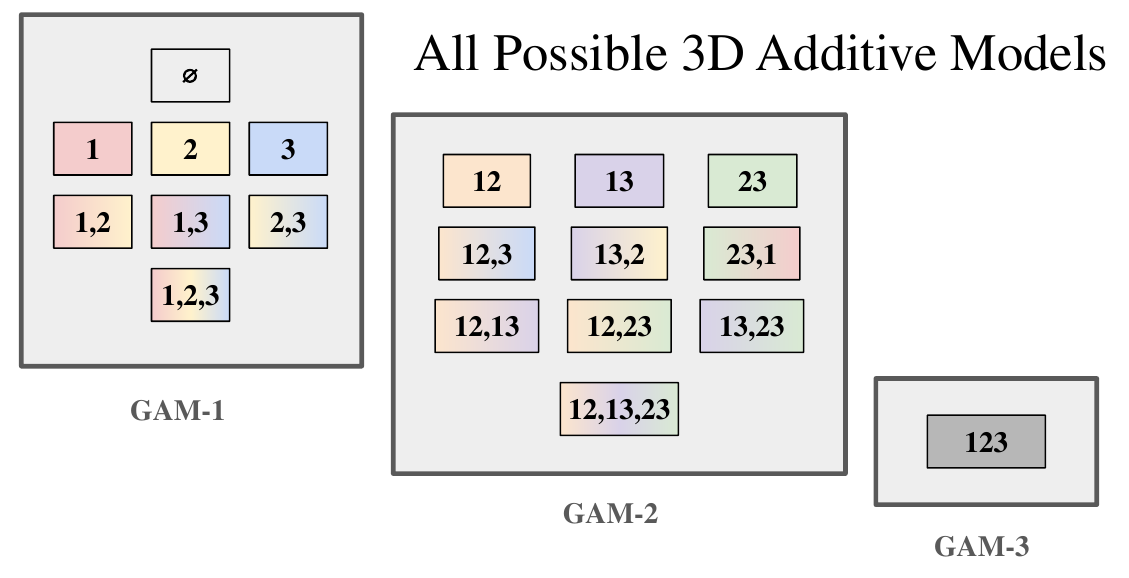}
    \caption{All possible frontiers for a GAM model when $d=3$.}
    \label{fig:all_possible_feature_interactions_frontiers}
\end{figure}

As mentioned in the main text, one of the critical issues for using additive models to learn a particular target function, is solving the meta-optimization to find an optimal frontier for the additive model.
As of yet, there is seemingly no known measurements for the correlated input case paralleling the Sobol indices in the indepdendent input case.
In particular,
for a given frontier $\cI\subseteq\cP([d])$ and a candidate interaction $S\subseteq[d]$ not yet in the frontier,
there is seemingly no work trying to estimate the differences in errors between these two learned additive models.
Moreover, it is noted in the main body that the measurements $C_S$ are in general insufficient to measure these differences in all cases, and must only be used as an approximation.

Herein, we describe the solution to the additive model training procedure as the solution the Euler-Lagrange equation from calculus of variations.
Thereafter, we simplify our solution into a single matrix-operator functional equation defined by the projection operators $\cN_S$ in the function space $\cH$.
We then provide a formal solution to the matrix-operator equation and show how it can be approximated through repeated projections.

\begin{theorem}
Fix an input distribution $X\sim p(X)$ and a function $y=F(x)$.
Consider a collection $\cI = \{S_1,\dots,S_L\} \subseteq \cP([d])$ and consider the solution to the additive model training equation:
\begin{align}
\{g_T^*\}_T := \argmin_{\{g_T\}_T}\bigg\{ \bbE_X\bigg[
\Big\| F(X) - \sum_{T\in\cI} g_T(X_T) \Big\|^2
\bigg]\bigg\}    
\label{app_eqn:general_GAM_variational_X_equation}
\end{align}

Recalling the conditional expectation projection operators
\begin{align}
[\cM_S\circ F](x) :=
\bbE_{\Bar{X}_{-S} \sim p(X_{-S}|X_S=x_S)} 
\Big[ 
F(x_S,\Bar{X}_{-S})
\Big]
    \label{app_eqn:conditional_projection_once_again}
\end{align}
where we drop the subscript denoting the distribution $p(x)$.

The solution to the variational GAM training equation in Equation \ref{app_eqn:general_GAM_variational_X_equation} obeys the matrix equation:
\begin{align}
\begin{pmatrix}
        e     & \cM_{S_1} & \dots & \cM_{S_1} \\
    \cM_{S_2} &     e     & \dots & \cM_{S_2} \\
    \vdots &   \vdots  & \ddots & \vdots \\
    \cM_{S_L} & \cM_{S_L} & \dots & e \\
\end{pmatrix}
\begin{pmatrix}
g^*_{S_1} \\ g^*_{S_2} \\ \vdots \\ g^*_{S_L}
\end{pmatrix}
=
\begin{pmatrix}
f_{S_1} \\ f_{S_2} \\ \vdots \\ f_{S_L}
\end{pmatrix}
\label{app_eqn:matrix_operator_eqn}
\end{align}

\end{theorem}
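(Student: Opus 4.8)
The plan is to treat Equation~\ref{app_eqn:general_GAM_variational_X_equation} as a minimization over the Hilbert space $L^2(p)$ of square-integrable functions, with the additive component $g_T$ ranging over the closed subspace $\mathcal{H}_T := \{ g : \cX \to \bbR \ \text{depending only on}\ x_T \}$. First I would note that the objective $\bbE_X\big[ \| F(X) - \sum_{T\in\cI} g_T(X_T) \|^2 \big]$ is convex and coercive in the tuple $(g_{S_1},\dots,g_{S_L})$, so a minimizer exists, and the first-order optimality conditions are both necessary and sufficient. Taking the Gateaux derivative with respect to a variation $g_{S_\ell} \mapsto g_{S_\ell} + \epsilon\, h$ for arbitrary $h\in\mathcal{H}_{S_\ell}$ yields the stationarity condition
\begin{align}
\bbE_X\Big[ h(X_{S_\ell}) \cdot \Big( F(X) - \sum_{T\in\cI} g^*_T(X_T) \Big) \Big] = 0
\qquad \forall h\in\mathcal{H}_{S_\ell},\ \forall \ell\in[L].
\label{app_eqn:proof_stationarity}
\end{align}

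The key step is then to recognize Equation~\ref{app_eqn:proof_stationarity} as the assertion that the residual $R := F - \sum_{T} g^*_T$ is orthogonal to $\mathcal{H}_{S_\ell}$ for every $\ell$, which by the defining property of conditional expectation is exactly $\cM_{S_\ell}\circ R \equiv 0$. Expanding this using linearity of the projection operator gives, for each $\ell$,
\begin{align}
\cM_{S_\ell}\circ F = \sum_{T\in\cI} \cM_{S_\ell}\circ g^*_T .
\label{app_eqn:proof_projection_row}
\end{align}
Now I would use the crucial observation that $\cM_{S_\ell}$ acts as the identity on functions already lying in $\mathcal{H}_{S_\ell}$: since $g^*_{S_\ell}$ depends only on $x_{S_\ell}$, we have $\cM_{S_\ell}\circ g^*_{S_\ell} = g^*_{S_\ell}$, which is the diagonal entry $e$ (the identity operator) in the matrix. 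The off-diagonal terms $\cM_{S_\ell}\circ g^*_{S_m}$ for $m\neq\ell$ remain as genuine conditional-expectation operators applied to the other components, and using $f_{S_\ell} := \cM_{S_\ell}\circ F$ as shorthand for the left-hand side, Equation~\ref{app_eqn:proof_projection_row} is precisely the $\ell$-th row of the claimed matrix-operator system~\ref{app_eqn:matrix_operator_eqn}. Assembling the $L$ rows gives the result; conversely any solution of~\ref{app_eqn:matrix_operator_eqn} satisfies~\ref{app_eqn:proof_stationarity} and hence, by convexity, is a global minimizer, establishing the equivalence.

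The main obstacle I anticipate is subtlety around well-definedness and uniqueness rather than the algebra: the subspaces $\mathcal{H}_{S_\ell}$ overlap (e.g.\ their pairwise intersections contain lower-order and constant functions), so the minimizer is not unique as a tuple even though $\sum_T g^*_T$ is, and one must be careful that the operator matrix is understood modulo this kernel — the paper's own caveat about working ``modulo differences on a null set'' is relevant here. A second point requiring care is that $\cM_S$ is only a bounded operator (an orthogonal projection) on $L^2(p)$, so interchanging expectation and summation and invoking the projection characterization should be justified in that Hilbert-space setting; I would state the theorem with the standing assumption $F\in L^2(p)$ and each $g_T\in L^2(p)$, under which every manipulation above is routine. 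I do not expect the purification/Instant-Mask discussion to enter this particular proof — it concerns the downstream use of the solution, not its characterization.
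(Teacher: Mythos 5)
Your proposal is correct and follows essentially the same route as the paper's proof: the paper derives the same stationarity condition via the Euler--Lagrange first variation with a perturbation $\epsilon\,\delta_{S_i}(X_{S_i})$, uses the tower property to conclude $\bbE_{X_{-S_i}\mid X_{S_i}}\big[F(X)-\sum_T g_T(X_T)\big]\equiv 0$, and then reads off each row as $f_{S_i}=g^*_{S_i}+\sum_{T\neq S_i}\cM_{S_i}\circ g^*_T$, exactly as you do in the Hilbert-space/orthogonal-projection language. Your additional remarks on convexity (sufficiency of the first-order condition) and on non-uniqueness of the tuple due to overlapping subspaces go slightly beyond what the paper states, but they are consistent with it and do not change the argument.
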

\begin{proof}
Let the objective functional be defined
\begin{align}
    \label{app_eqn:gam_variational_objective_functional}
J(\{g_T\}_T) := 
\bbE_X\bigg[
\Big\| F(X) - \sum_{T\in\cI} g_T(X_T) \Big\|^2
\bigg]\bigg\}
\end{align}
    Recall from calculus of variations the Euler-Lagrange equation:
    \begin{align}
        \frac{\delta J}{\delta g_{S_i}} \equiv 0 \nonumber
    \end{align}
    for each possible $S_i\in\cI$, which then implies:
    \begin{align}
        \lim_{\eps_{S_i}\to 0} \frac{1}{\eps_{S_i}}\Big[ J(\{g_T+\eps_T\cdot\delta_T\}) - J(\{g_T\}_T)\Big] \equiv 0 \nonumber  \\
        \lim_{\eps\to 0} \frac{1}{\eps}
\bbE_X\bigg[ \Big\| F(X) - \sum_{T\in\cI} g_T(X_T) - \eps\cdot \delta_{S_i}(X_{S_i}) \Big\|^2 - \Big\| F(X) - \sum_{T\in\cI} g_T(X_T) \Big\|^2  \bigg]  \equiv 0 \nonumber  \\
        \lim_{\eps\to 0} 
\bbE_X\bigg[  2 \delta_{S_i}(X_{S_i}) \cdot \Big[ F(X) - \sum_{T\in\cI} g_T(X_T) \Big]  + \frac{1}{\eps} \delta_{S_i}^2(X_{S_i})  \bigg]  \equiv 0 \nonumber  \\
\bbE_X\bigg[  2 \delta_{S_i}(X_{S_i}) \cdot \Big[ F(X) - \sum_{T\in\cI} g_T(X_T) \Big]   \bigg]  \equiv 0 \nonumber  \\
        \bbE_{X_{S_i}}\bigg[ \delta_{S_i}(X_{S_i}) \cdot \bbE_{X_{-{S_i}} | X_{S_i}}\bigg[ F(X) -  \sum_{{S_i}\in\cI} g_{T}(X_T) \bigg]\bigg]  \equiv 0 \nonumber \\ 
        \bbE_{X_{-{S_i}} | X_{S_i}}\bigg[ F(X) -  \sum_{{T}\in\cI} g_{T}(X_T) \bigg] \equiv 0 \nonumber 
    \end{align}

    This can simply be rewritten as:
    \begin{align}
        f_{S_i}(X) \equiv \cM_{{S_i}} \circ \sum_{T\in\cI} g_T(X) \nonumber \\ 
        f_{S_i}(X) \equiv \cM_{{S_i}} \circ g_{S_i}(X) + \sum_{T\in\cI-{S_i}} \cM_{{S_i}} \circ g_T(X) \nonumber \\ 
        f_{S_i}(x) \equiv g_{S_i}(x) + \sum_{T\in\cI-{S_i}} [{S_i}\circ g_T](x) \nonumber \\
        f_{S_i} =  g_{S_i} + \sum_{T\in\cI-{S_i}} [{S_i}\circ g_T] \nonumber 
    \end{align}
    Hence, it can be seen that each row of the matrix equation corresponds to a partial gradient from the Euler-Lagrange equation as desired.
    Thus, any possible solution to minimization of the quadratic functional $J$ must obey the above matrix equation.
\end{proof}

It is moreover the case that we can reduce a collection of feature interactions to its `frontier' or its solution only on the largest subsets which have no supersets included.
From the perspective of the poset $\cP([d])$, this corresponds to the set of maximal elements.
It can be seen in the above proof that any solution on a projection $S_1 \subseteq S_2$ must automatically be obeyed by the operator equation for the larger set $S_2$.

Accordingly, identify a {frontier} $\cI$ with its set of maximal elements $T_1, \dots, T_{L'}$.
Following from Equation \ref{app_eqn:matrix_operator_eqn}, we can ensure that it is enough to solve the matrix equation:
\begin{align}
\begin{pmatrix}
        e     & \dots & \cM_{T_1} \\
    \vdots & \ddots & \vdots \\
    \cM_{T_{L'}} & \dots & e \\
\end{pmatrix}
\begin{pmatrix}
g^*_{T_1} \\ \vdots \\ g^*_{T_{L'}}
\end{pmatrix}
=
\begin{pmatrix}
f_{T_1} \\ \vdots \\ f_{T_{L'}}
\end{pmatrix}
\label{app_eqn:matrix_operator_eqn_mountainTops}
\end{align}
Which we can then take the formal inverse of the operator matrix to yield a solution
\begin{align}
\begin{pmatrix}
g^*_{T_1} \\ \vdots \\ g^*_{T_{L'}}
\end{pmatrix}
\text{``}=\text{''}
\begin{pmatrix}
        e     & \dots & \cM_{T_1} \\
    \vdots & \ddots & \vdots \\
    \cM_{T_{L'}} & \dots & e \\
\end{pmatrix}^{-1}
\begin{pmatrix}
f_{T_1} \\ \vdots \\ f_{T_{L'}}
\end{pmatrix}
\label{app_eqn:matrix_operator_eqn_mountainTops_inverted}
\end{align}
so long as we take care with the determinant in realizing that a matrix of non-commutative elements does not have a well-defined matrix determinant as it does in the commutative case.

Nonetheless, let us now illustrate the usefulness of such a formal inverse in a simple case with our synthetic example from earlier.
\begin{align}
\begin{pmatrix}
        e     & \cM_{x} \\
    \cM_{y}  & e \\
\end{pmatrix}
\begin{pmatrix}
g^*_{x} \\ g^*_{y}
\end{pmatrix}
=
\begin{pmatrix}
f_{x} \\  f_{y}
\end{pmatrix}
\nonumber 
\end{align}
\begin{align}
\begin{pmatrix}
        e     & -\cM_{x} \\
    -\cM_{y}  & e \\
\end{pmatrix}
\begin{pmatrix}
        e     & \cM_{x} \\
    \cM_{y}  & e \\
\end{pmatrix}
\begin{pmatrix}
g^*_{x} \\ g^*_{y}
\end{pmatrix}
=
\begin{pmatrix}
        e     & -\cM_{x} \\
    -\cM_{y}  & e \\
\end{pmatrix}
\begin{pmatrix}
f_{x} \\  f_{y}
\end{pmatrix}
\nonumber 
\end{align}
\begin{align}
\begin{pmatrix}
        e-\cM_{x}\cM_{y}     & 0 \\
    0  & e-\cM_{y}\cM_{x} \\
\end{pmatrix}
\begin{pmatrix}
g^*_{x} \\ g^*_{y}
\end{pmatrix}
=
\begin{pmatrix}
f_{x} - \cM_{x}\circ f_{y} \\  f_{y}-\cM_{y}\circ f_{x}
\end{pmatrix}
\nonumber
\end{align}
\begin{align}
\begin{pmatrix}
e -\cM_{x}\cM_{y}\\ e -\cM_{y}\cM_{x}
\end{pmatrix}
\odot
\begin{pmatrix}
g^*_{x} \\ g^*_{y}
\end{pmatrix}
=
\begin{pmatrix}
f_{x} - \cM_{x}\circ f_{y} \\  f_{y}-\cM_{y}\circ f_{x}
\end{pmatrix}
\nonumber
\end{align}
\begin{align}
\begin{pmatrix}
g^*_{x} \\ g^*_{y}
\end{pmatrix}
=
\begin{pmatrix}
[e -\cM_{x}\cM_{y}]^{-1} \circ [f_{x} - \cM_{x}\circ f_{y}] \\  [ e -\cM_{y}\cM_{x}]^{-1} \circ [f_{y}-\cM_{y}\circ f_{x}]
\end{pmatrix}
\nonumber
\end{align}
We can then use the formal Taylor series expansion of the operator inverse to yield:
\begin{align}
g^*_{x} = \sum_{n=0}^\infty (\cM_{x}\cM_{y})^n \circ \Big[f_{x} - \cM_{x}\circ f_{y}\Big] \nonumber \\
g^*_{y} = \sum_{n=0}^\infty (\cM_{y}\cM_{x})^n \circ \Big[f_{y} - \cM_{y}\circ f_{x} \Big] \nonumber 
\end{align}
If we choose to denote repeated projections with semicolons, we can then write our solutions as 
\begin{align}
g^*_{x} = f_x - f_{y;x} + f_{x;y;x} - f_{y;x;y;x} + f_{x;y;x;y;x} - f_{y;x;y;x;y;x} + \dots  \nonumber \\
g^*_{y} = f_y - f_{x;y} + f_{y;x;y} - f_{x;y;x;y} + f_{y;x;y;x;y} - f_{x;y;x;y;x;y} + \dots  \nonumber 
\end{align}
So then we can caclulate this to be
\begin{align}
g^*_{x} &= (x + \rho x^2 - \rho \nonumber) - (\rho^2 x + \rho^3 x^2 + \rho(1-\rho^2) - \rho) + (\rho^2 x + \rho^5 x^2 + \rho^3(1-\rho^2) + \rho(1-\rho^2) - \rho \nonumber) - \dots \\
 &= [x] + [-\rho + \rho^3 - \rho^5 + \dots] + [\rho - \rho^3 + \rho^5 - \dots] x^2 \nonumber\\
 &= x + \frac{\rho}{1+\rho^2}[x^2-1]\nonumber\\
g^*_{y} &= (\rho y + \rho y^2 - \rho ) - (\rho y + \rho^3 y^2 + \rho(1-\rho^2) - \rho) + (\rho^3 y + \rho^5 y^2 + \rho^3(1-\rho^2) + \rho(1-\rho^2) - \rho \nonumber) - \dots \\
 &= 0 + [-\rho + \rho^3 - \rho^5 + \dots] + [\rho - \rho^3 + \rho^5 - \dots] y^2 \nonumber\\
 &= 0 + \frac{\rho}{1+\rho^2}[y^2-1]\nonumber
\end{align}

It may be checked that this solution agrees with that of directly solving Equation \ref{app_eqn:matrix_operator_eqn}:
\begin{align}
g^*_{x} = x + \frac{\rho}{1+\rho^2}[x^2-1]\nonumber
\quad
g^*_{y} = \frac{\rho}{1+\rho^2}[y^2-1]\nonumber
\end{align}

It should at the very least be cautioned that these operator manipulations, especially that of the inverse are done only in the formal sense.
For instance, considerations of the limit point $\rho=1$ are not able to demonstrate local convergence in the inversion; however, the formula still remains true in this case.
It is considered very likely that these matrix equations are, in most cases, easily able to be solved by the suggested formal manipulations but at least some caution should be exercised.

\subsection{Implications}
We reiterate how the additive model's ability to distinguish between synergistic feature interactions and redundant feature interactions is a key strength which has yet to be fully utilized in either the literature on SHAP or the literature on GAMs.
By the introduction to the characterization of the set of GAM solutions and drawing parallels with where this aligns and misaligns with the ever-popular functional ANOVA decomposition,
we provide a further set of tools to explore SHAP which goes beyond the `feature-only' perspective of functional ANOVA alone, and begins to explore the `feature interaction' perspective which adequately handles the intimate complexities which are introduced in the case of correlated variables.

This spectrum of additive models which operates over the entire combinatorially large set of frontiers of additive models is able to give a much more nuanced picture of the underlying structure of both the underlying statistical manifold of the input $X$ variables, but in conjunction with the mapping to the output $Y$ variables.
Compared with existing theory in functional ANOVA which spans the entirety of the exponential feature interaction space, this variational formulation covers the range of structures living in the doubly exponential space of all frontiers.
As demonstrated in this work, such structure can be directly accessed with relatively simple machine learning approaches, that is additive models and feature masking.
It is envisioned that there yet remains many directions of further theoretical exploration to more succinctly and understandably represent the underlying structures of a statistical mapping between data, while simultaneously there still exists abundant opportunities in the application of these learnings directly to machine learning, particularly in bridging the gaps from supervised learning to semi-supervised learning and semi-supervised learning to unsupervised learning.

\newpage
\section{Additional Results}

\subsection{Additional Synthetic Results}

\begin{figure}[h]
    \centering
    \includegraphics[width=0.31\linewidth]{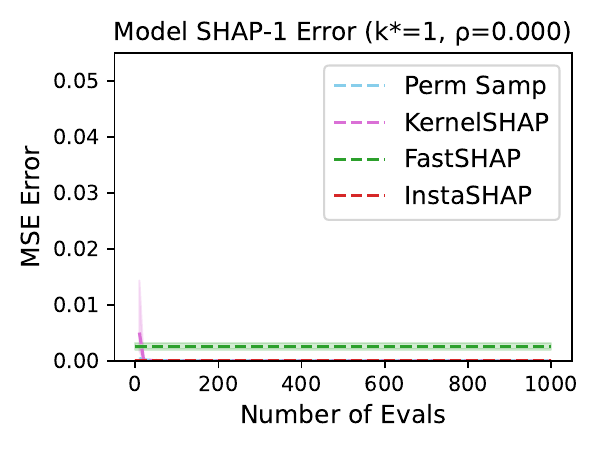}
    \includegraphics[width=0.31\linewidth]{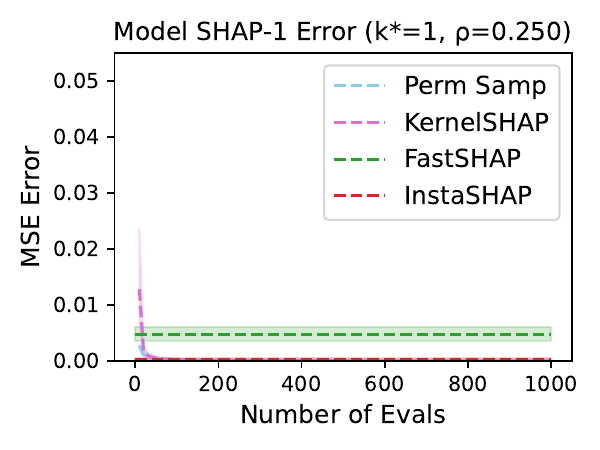}
    \includegraphics[width=0.31\linewidth]{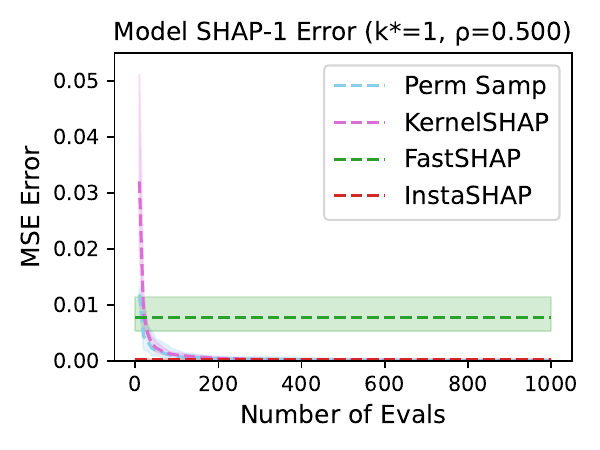}
    \includegraphics[width=0.31\linewidth]{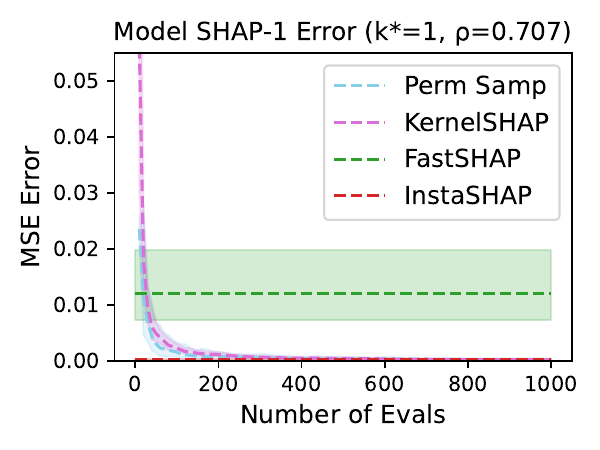}
    \includegraphics[width=0.31\linewidth]{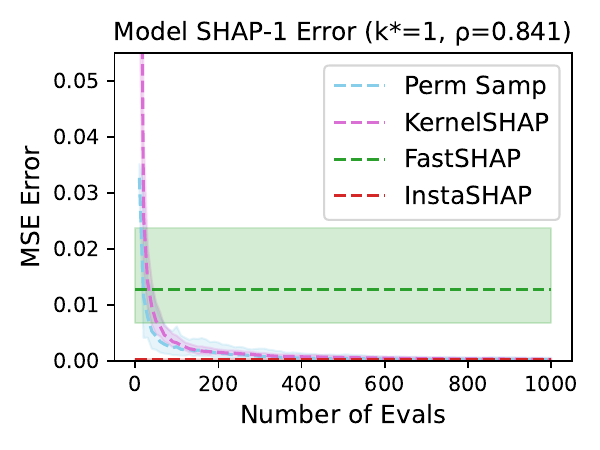}
    \includegraphics[width=0.31\linewidth]{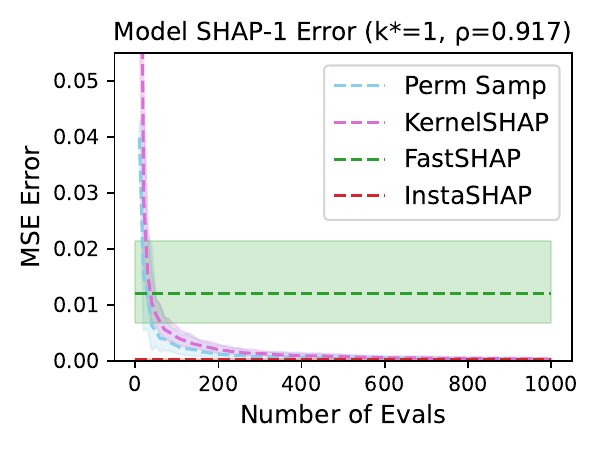}
    \caption{Model MSE Error of SHAP values.  Comparison with test-time permutation sampling. $k^*=1$.}
    \label{app_fig:fast_shap_vs_insta_shap_error_curves_k1}
\end{figure}
\begin{figure}[h]
    \centering
    \includegraphics[width=0.31\linewidth]{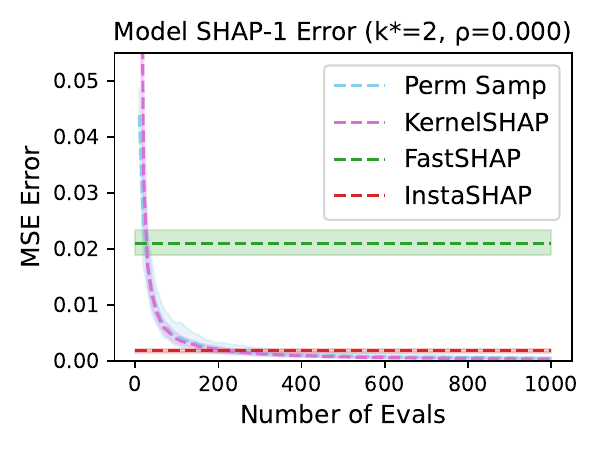}
    \includegraphics[width=0.31\linewidth]{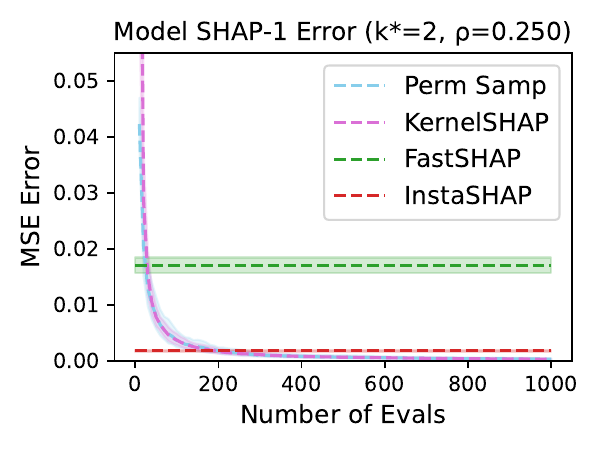}
    \includegraphics[width=0.31\linewidth]{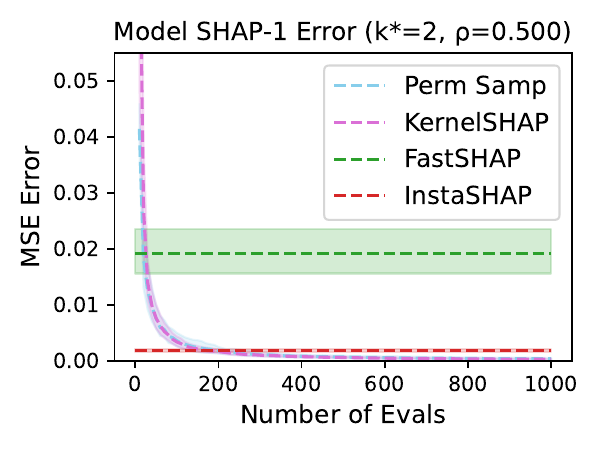}
    \includegraphics[width=0.31\linewidth]{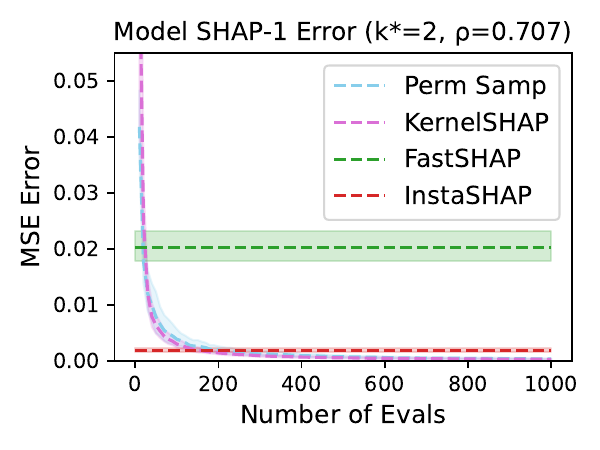}
    \includegraphics[width=0.31\linewidth]{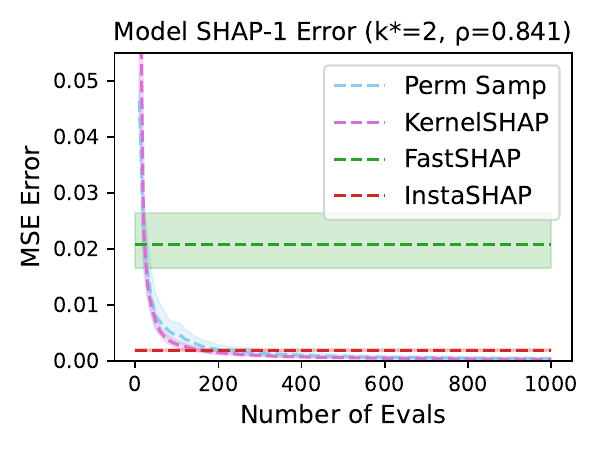}
    \includegraphics[width=0.31\linewidth]{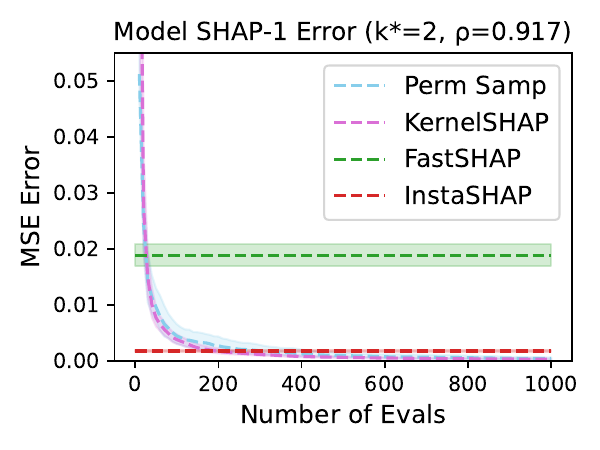}
    \caption{Model MSE Error of SHAP values.  Comparison with test-time permutation sampling. $k^*=2$.}
    \label{app_fig:fast_shap_vs_insta_shap_error_curves_k2}
\end{figure}

\begin{figure}[h]
    \centering
    \includegraphics[width=0.31\linewidth]{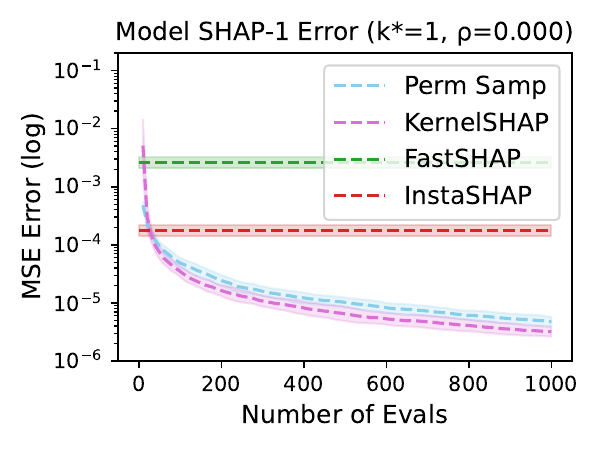}
    \includegraphics[width=0.31\linewidth]{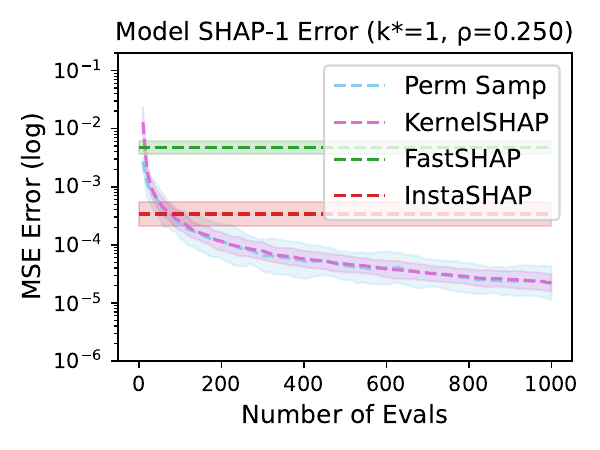}
    \includegraphics[width=0.31\linewidth]{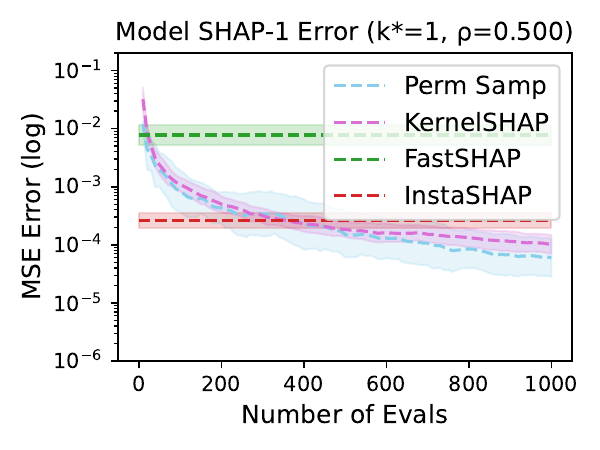}
    \includegraphics[width=0.31\linewidth]{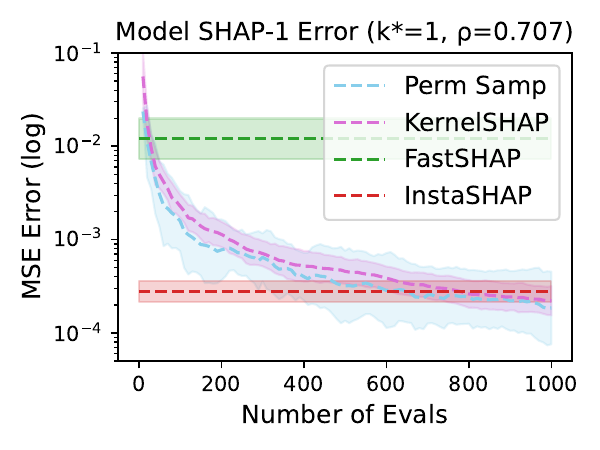}
    \includegraphics[width=0.31\linewidth]{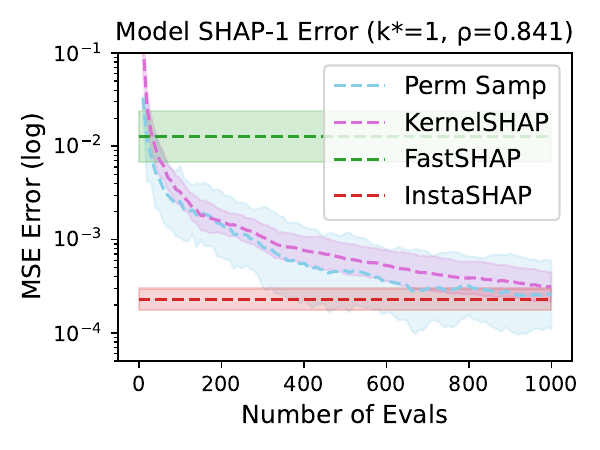}
    \includegraphics[width=0.31\linewidth]{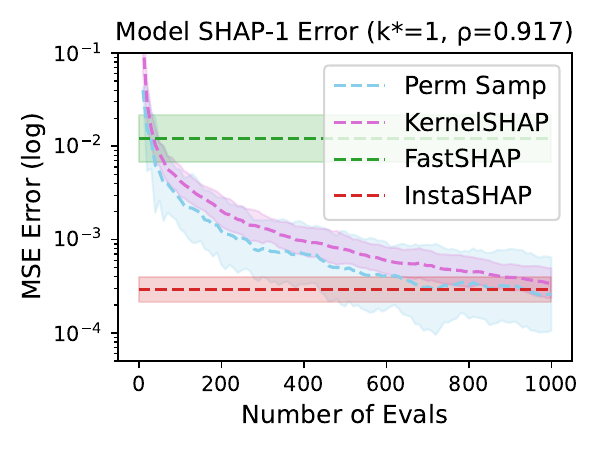}
    \caption{Model MSE Error of SHAP values (logarithmic scale).  Comparison with test-time permutation sampling. $k^*=1$.}
    \label{app_fig:fast_shap_vs_insta_shap_error_curves_k1_log}
\end{figure}
\begin{figure}[h]
    \centering
    \includegraphics[width=0.31\linewidth]{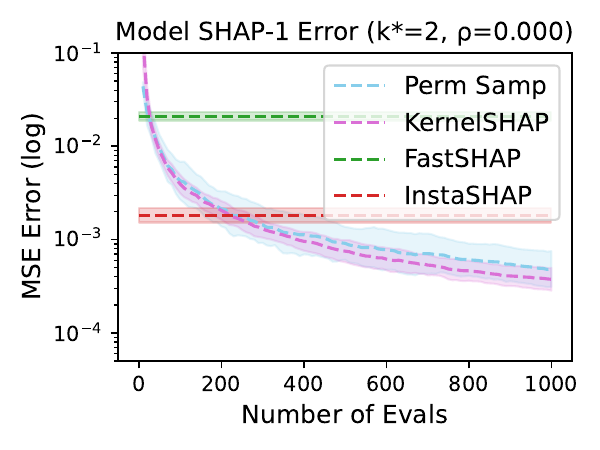}
    \includegraphics[width=0.31\linewidth]{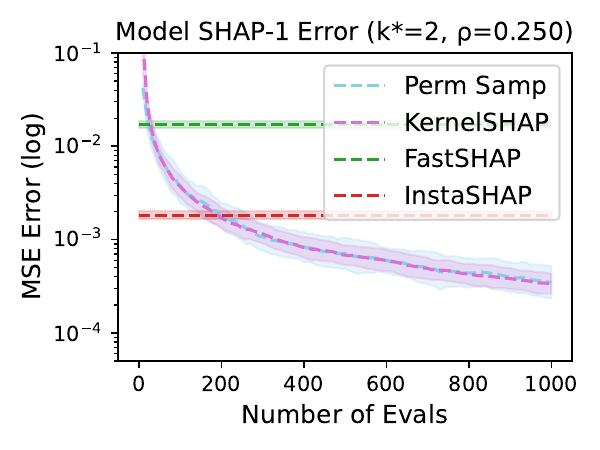}
    \includegraphics[width=0.31\linewidth]{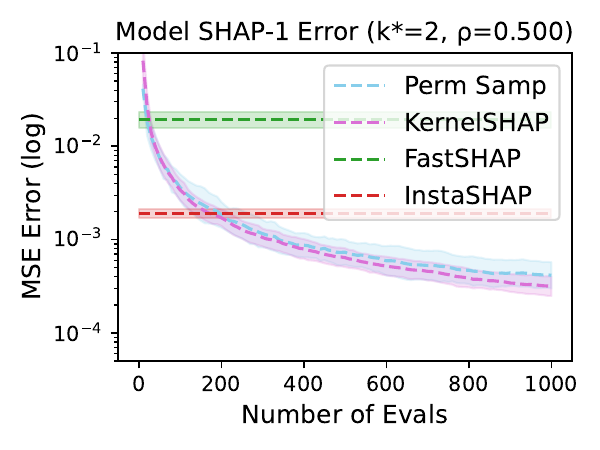}
    \includegraphics[width=0.31\linewidth]{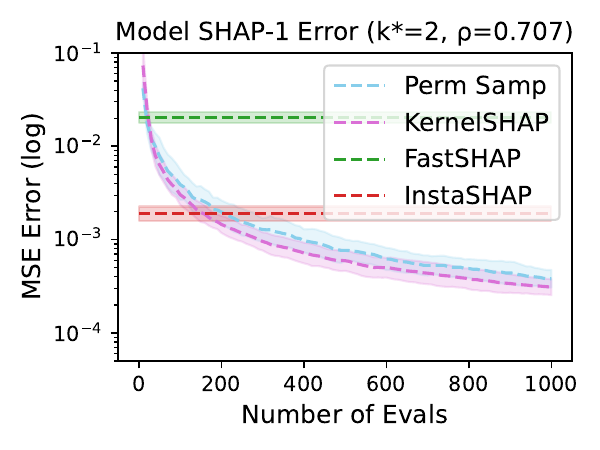}
    \includegraphics[width=0.31\linewidth]{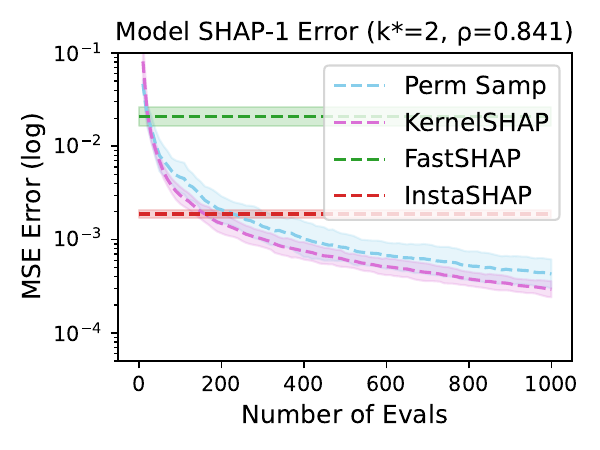}
    \includegraphics[width=0.31\linewidth]{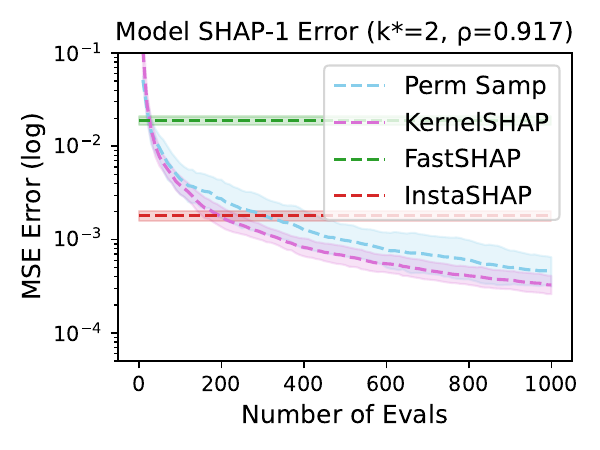}
    \caption{Model MSE Error of SHAP values (logarithmic scale).  Comparison with test-time permutation sampling. $k^*=2$.}
    \label{app_fig:fast_shap_vs_insta_shap_error_curves_k2_log}
\end{figure}

\begin{figure}[h]
    \centering
    \includegraphics[width=0.31\linewidth]{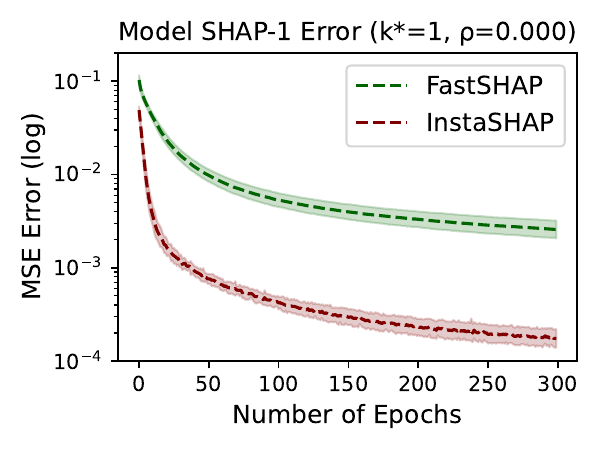}
    \includegraphics[width=0.31\linewidth]{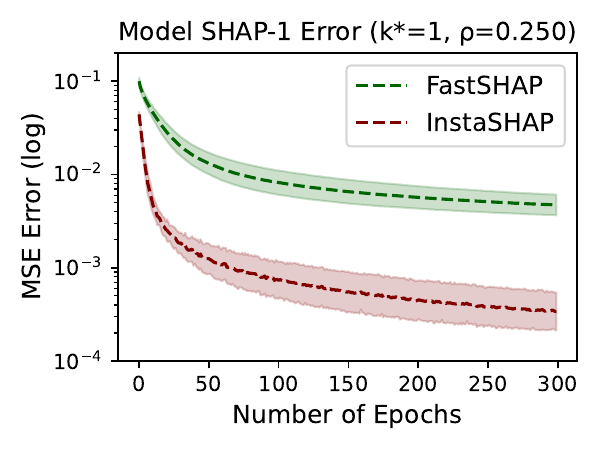}
    \includegraphics[width=0.31\linewidth]{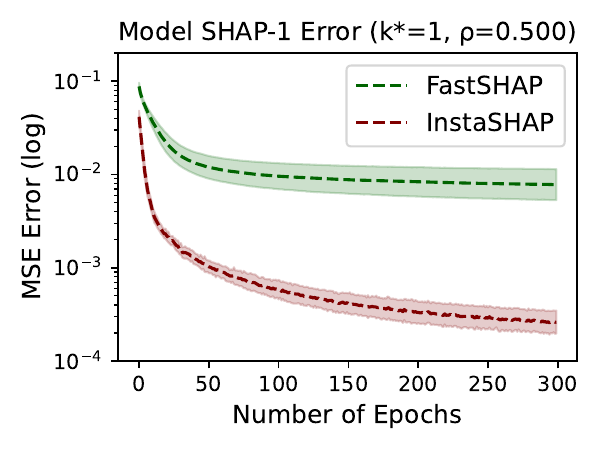}
    \includegraphics[width=0.31\linewidth]{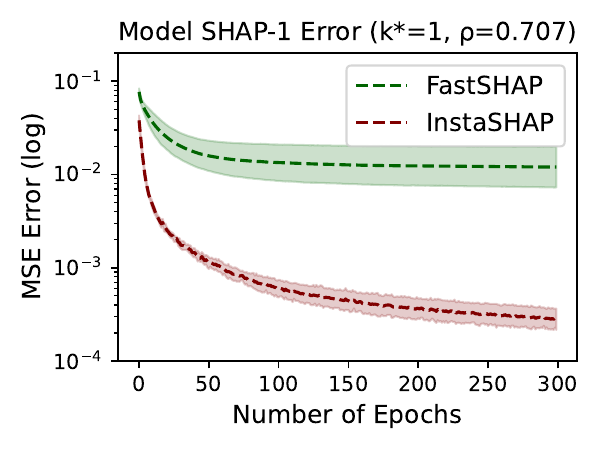}
    \includegraphics[width=0.31\linewidth]{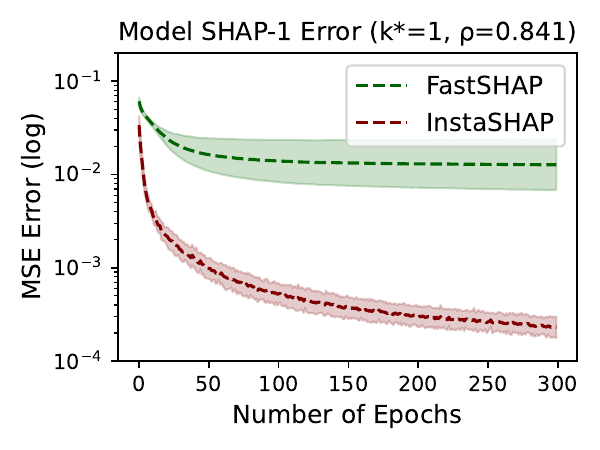}
    \includegraphics[width=0.31\linewidth]{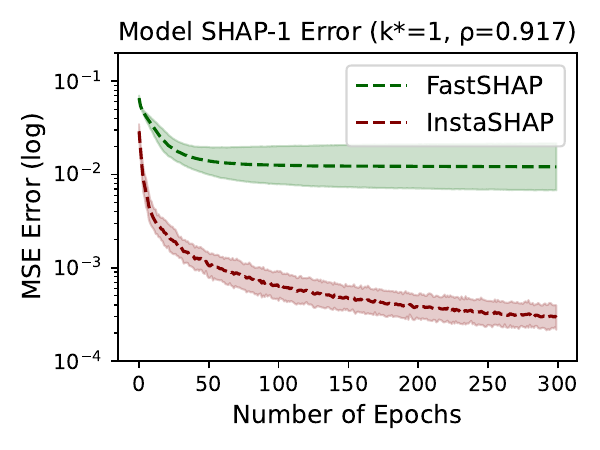}
    \caption{Model MSE Error of SHAP values (logarithmic scale).  Comparison with pre-test-time functional amortization. $k^*=1$.}
    \label{app_fig:fast_shap_vs_insta_shap_error_curves_k1_log_epochs}
\end{figure}
\begin{figure}[h]
    \centering
    \includegraphics[width=0.31\linewidth]{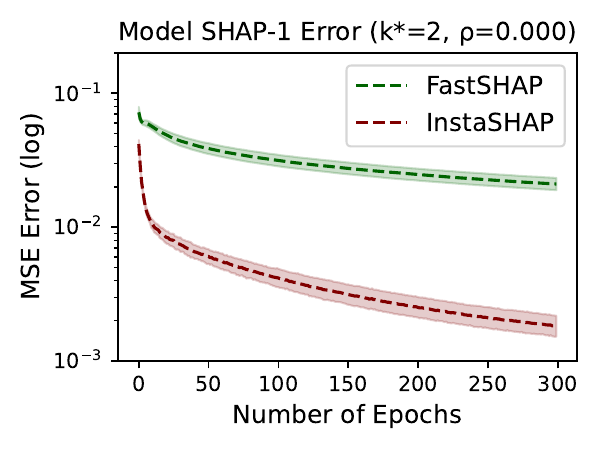}
    \includegraphics[width=0.31\linewidth]{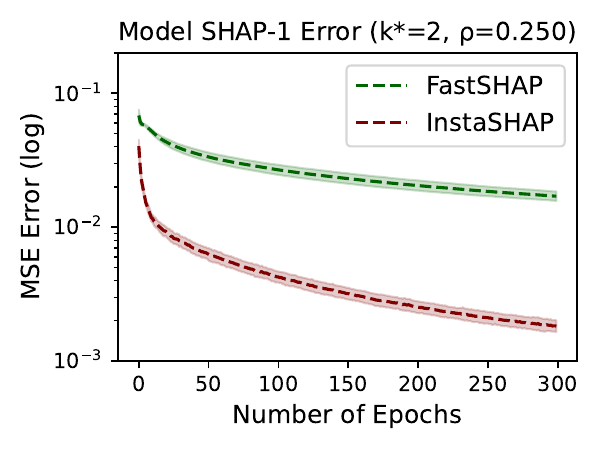}
    \includegraphics[width=0.31\linewidth]{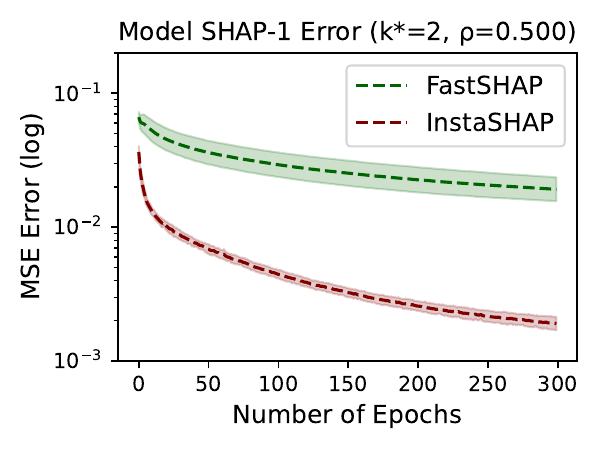}
    \includegraphics[width=0.31\linewidth]{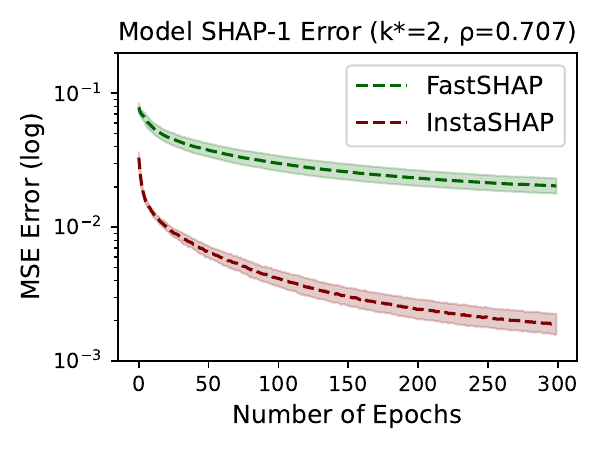}
    \includegraphics[width=0.31\linewidth]{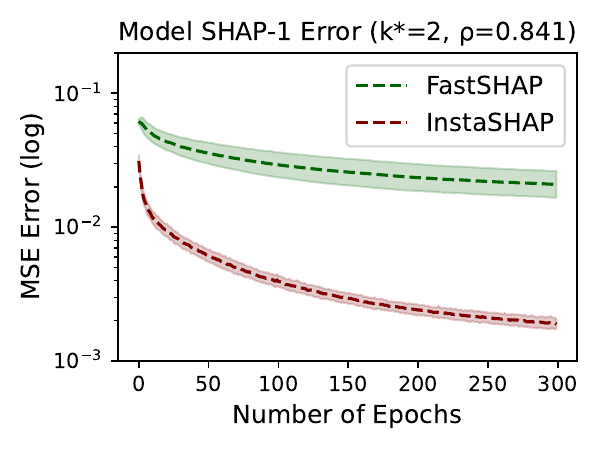}
    \includegraphics[width=0.31\linewidth]{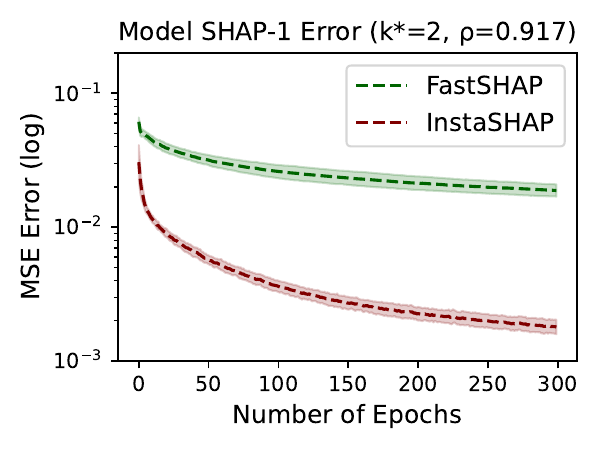}
    \caption{Model MSE Error of SHAP values (logarithmic scale).  Comparison with pre-test-time functional amortization. $k^*=2$.}
    \label{app_fig:fast_shap_vs_insta_shap_error_curves_k2_log_epochs}
\end{figure}

\begin{figure}[h]
    \centering
    \includegraphics[width=0.31\linewidth]{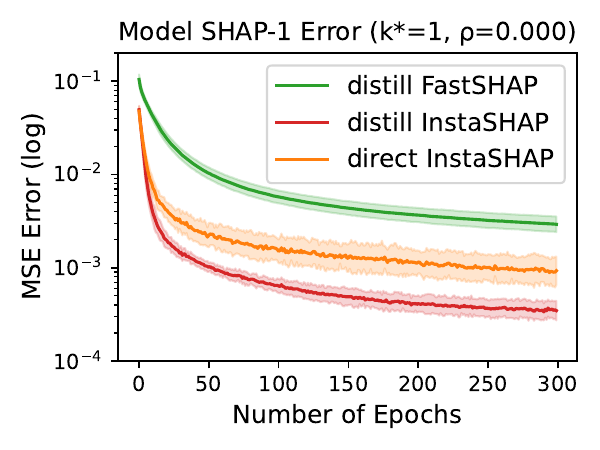}
    \includegraphics[width=0.31\linewidth]{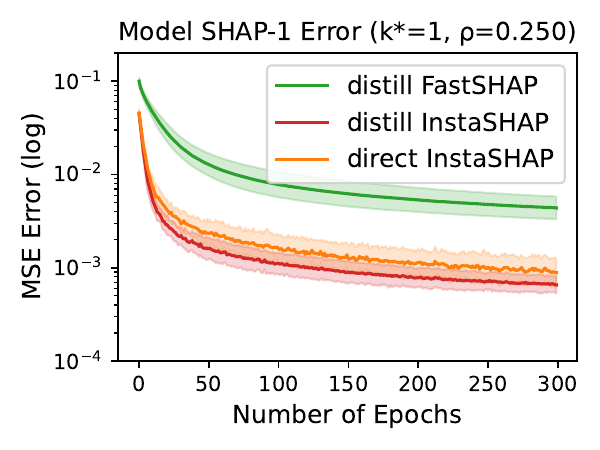}
    \includegraphics[width=0.31\linewidth]{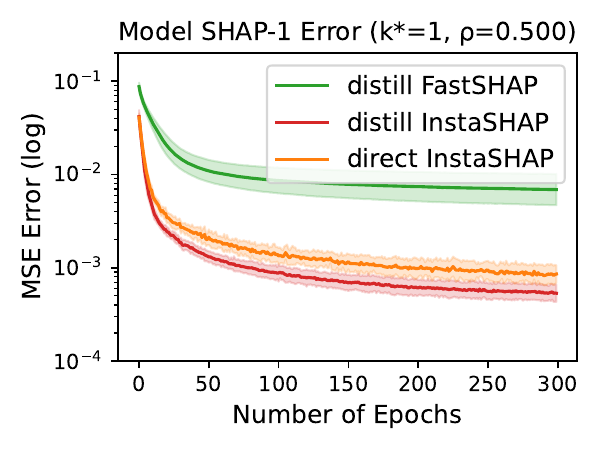}
    \includegraphics[width=0.31\linewidth]{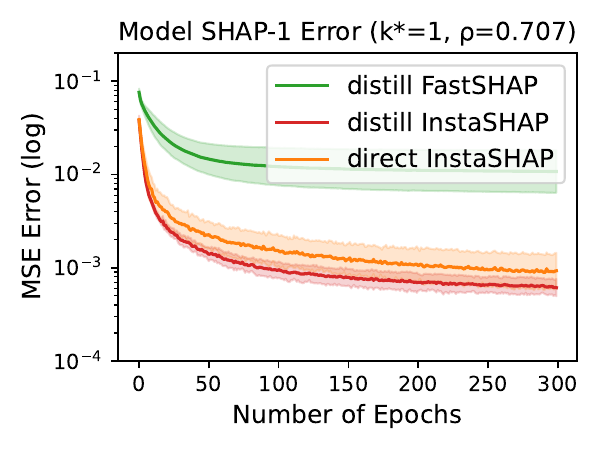}
    \includegraphics[width=0.31\linewidth]{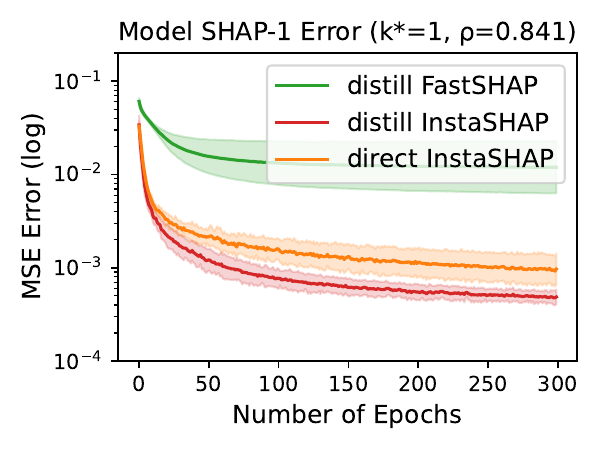}
    \includegraphics[width=0.31\linewidth]{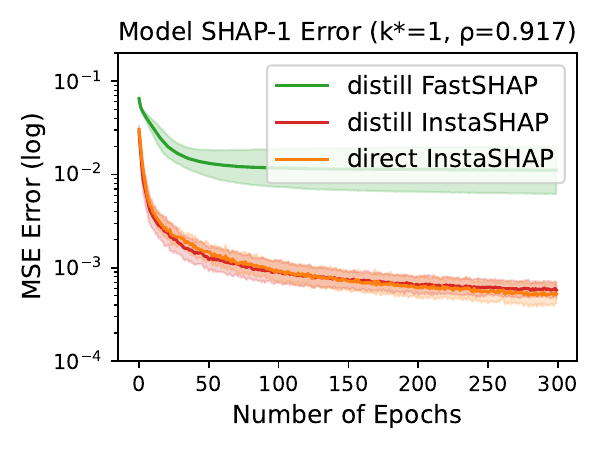}
    \caption{True MSE Error of SHAP values (logarithmic scale).  Comparison with pre-test-time functional amortization. $k^*=1$.}
    \label{app_fig:fast_shap_vs_insta_shap_error_curves_k1_log_epochs_direct}
\end{figure}
\begin{figure}[h]
    \centering
    \includegraphics[width=0.31\linewidth]{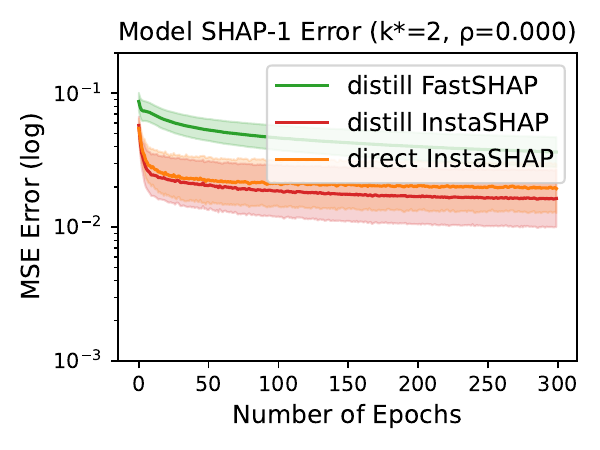}
    \includegraphics[width=0.31\linewidth]{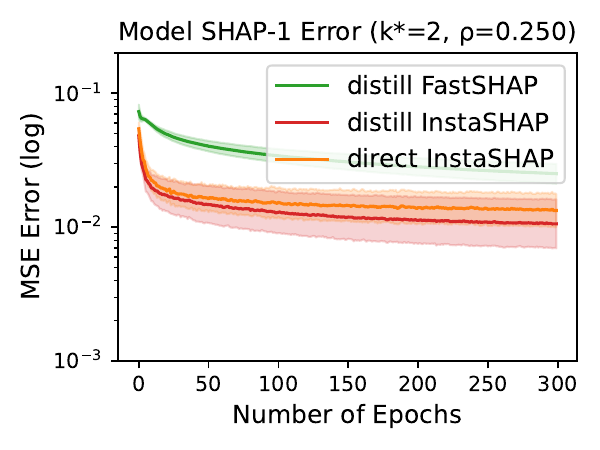}
    \includegraphics[width=0.31\linewidth]{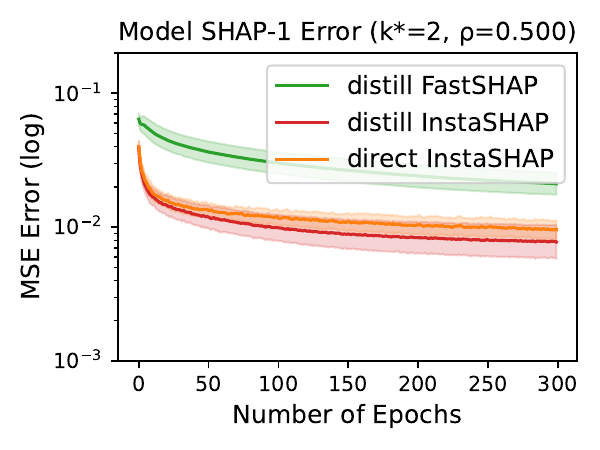}
    \includegraphics[width=0.31\linewidth]{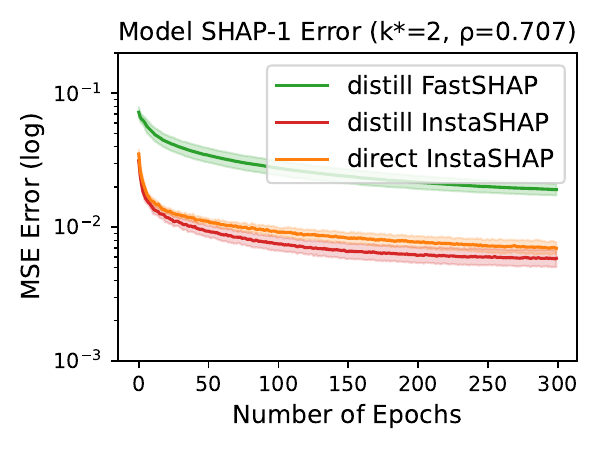}
    \includegraphics[width=0.31\linewidth]{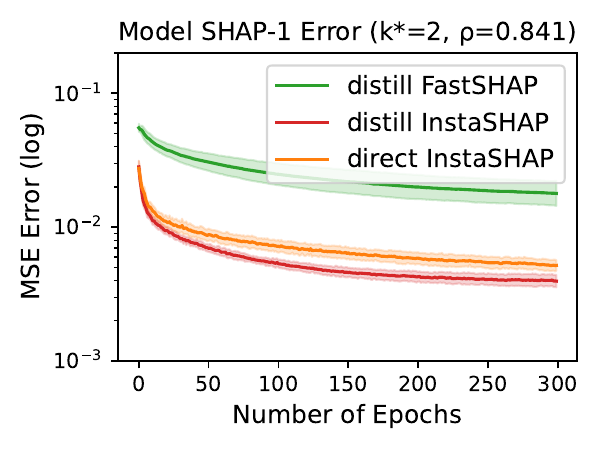}
    \includegraphics[width=0.31\linewidth]{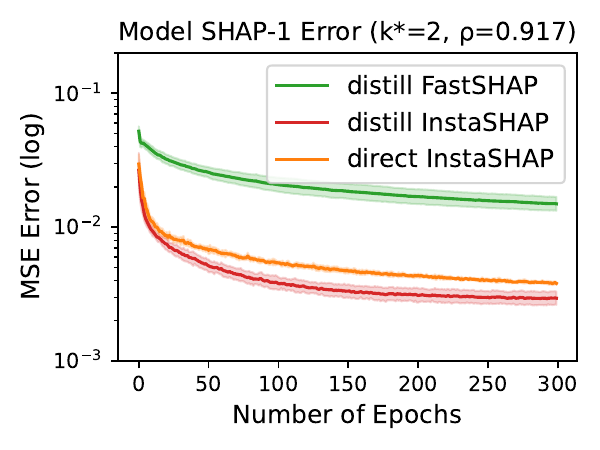}
    \caption{True MSE Error of SHAP values (logarithmic scale).  Comparison with pre-test-time functional amortization. $k^*=2$.}
    \label{app_fig:fast_shap_vs_insta_shap_error_curves_k2_log_epochs_direct}
\end{figure}

\phantom{xd}
\newpage
\phantom{xd}
\newpage
\phantom{xd}
\newpage
\phantom{xd}

\newpage
\subsection{Additional Vision Results}

\begin{figure}[h]
    \centering
    \includegraphics[width=1.00\linewidth]{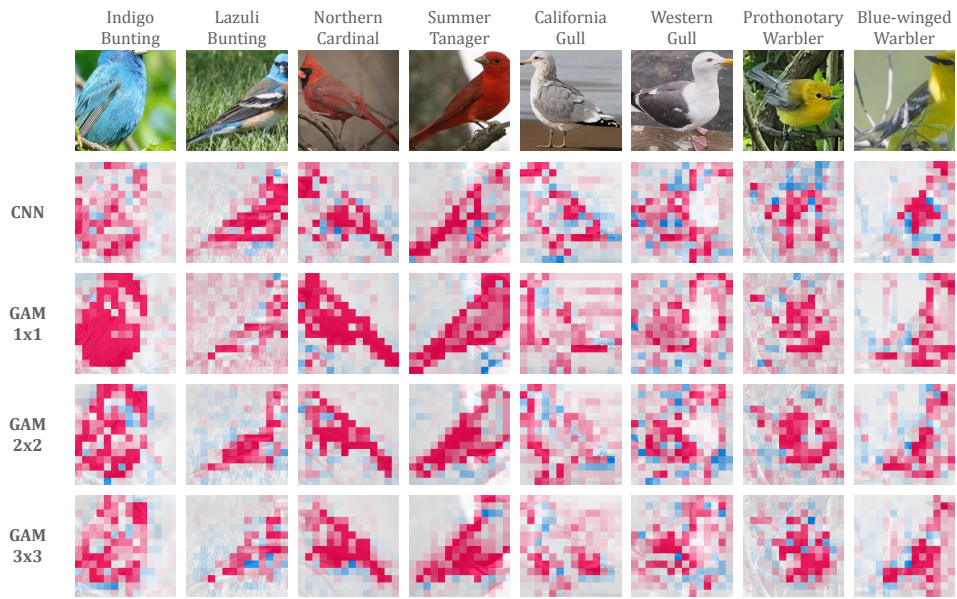}
    \caption{A repeat of Figure \ref{fig:CUB_with_SHAP_across_multiple_different_species}.
    Provided for closer comparison with Figure \ref{app_fig:CUB_with_contrastive_SHAP_across_multiple_different_species} below.
    }
    \label{app_fig:CUB_with_SHAP_across_multiple_different_species}
\end{figure}

\newpage
\phantom{.}
\vspace{0.05in}
\begin{figure}[h]
    \centering
    \includegraphics[width=1.00\linewidth]{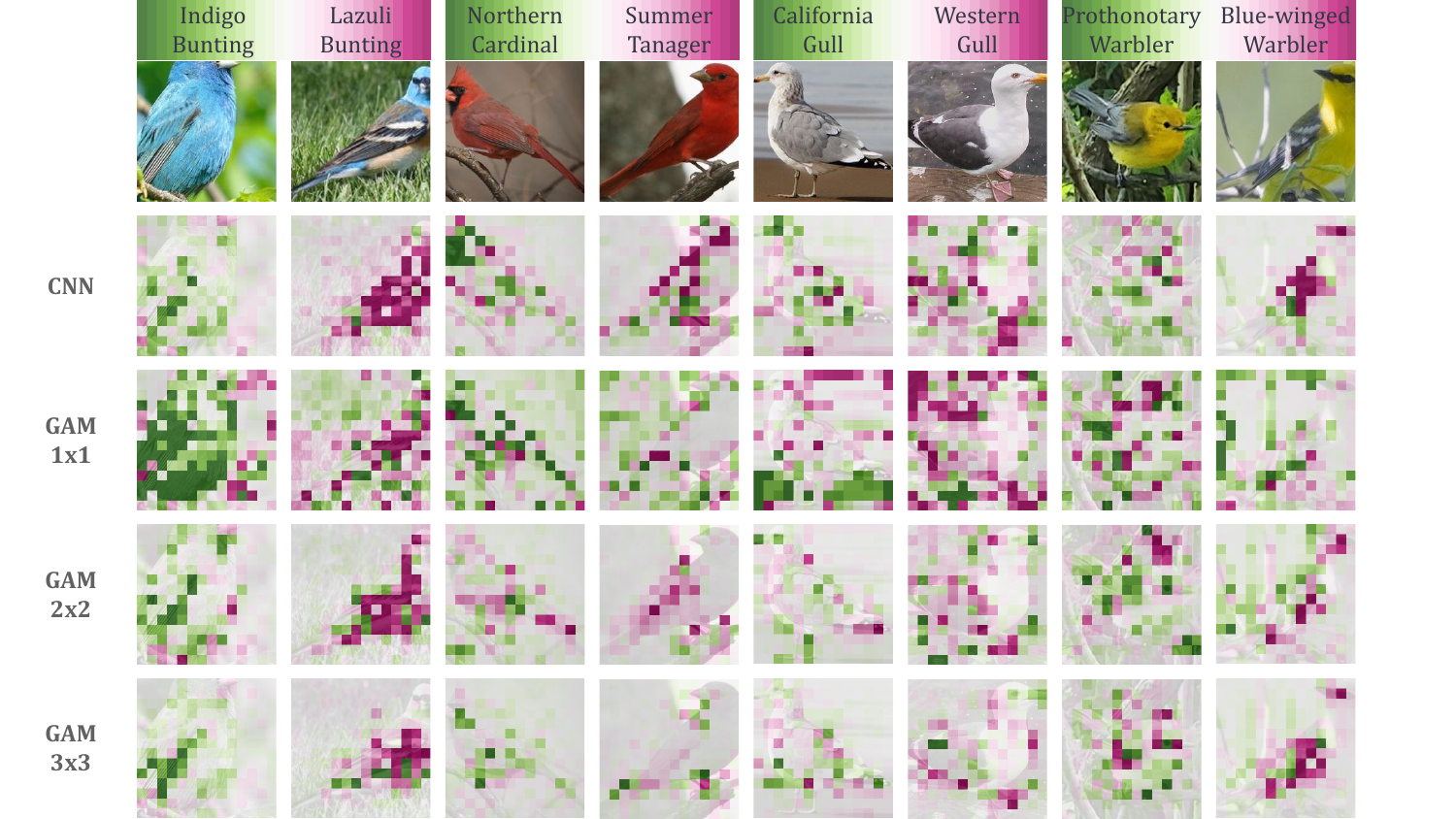}
    \caption{An alternate version of Figure \ref{fig:CUB_with_SHAP_across_multiple_different_species} which includes the contrastive Shapley value.  In each panel, two similar-looking species are directly compared against one another.  For instance, the first two columns compare the Indigo Bunting species (more green) versus the Lazuli Bunting species (more magenta).}
    \label{app_fig:CUB_with_contrastive_SHAP_across_multiple_different_species}
\end{figure}

In Figure \ref{app_fig:CUB_with_contrastive_SHAP_across_multiple_different_species},
we can see some more granular details about the model explanations with respect to certain species.
For instance, the bluer and browner feathers of the two Bunting species, especially in the GAM-1x1 model.
Some other characteristics which seem to be picked up by some of the models are the orange beak and black mask of the Cardinal vs. the ordinary beak and face of the Summer Tanager; the yellow feet of the California Gull vs. the orange feet of the Western Gull; and the different upper backs and eye areas for the Prothonotary and Blue-winged Warblers.


\newpage
\subsection{Additional Healthcare Results}
We additionally train on a tabular version of the MIMIC healthcare dataset which consists of thirty features used to predict hospital outcomes.
We train an ensemble of five additive models using the vanilla GAM training procedure and the InstaSHAP masked training procedure.
In Figure \ref{app_fig:mimic_all_shapes} below, we display all shape functions learned by the 1D additive model.
We plot the mean and one standard deviation according to the ensemble of five models.

The vanilla GAM models achieve accuracies of 
$91.0\%$, $91.5\%$, $90.6\%$, $91.1\%$,  and $91.2\%$ 
for an average accuracy of $91.1\%$.
The InstaSHAP GAM models achieve accuracies of
$91.5\%$, $91.3\%$, $91.2\%$, $91.3\%$, $91.0\%$ 
for an average accuracy of $91.3\%$.
Generally, the InstaSHAP models have a more consistent interpretation of the dataset and achieve tighter confidence intervals than the typical training procedure.
It can then be assumed a significant amount of the variance between the vanilla ensemble is due to overinterpretaion or sensitivity to the natural correlations of the dataset.


\phantom{xd}
\newpage

\begin{figure}[h]
    \centering
    \includegraphics[width=0.23\textwidth]{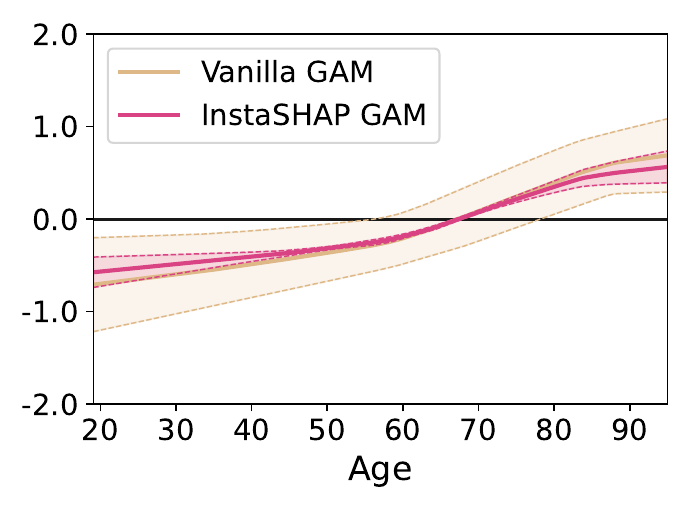}
    \quad\quad\quad
    \quad\quad\quad
    \quad\quad\quad    
    \includegraphics[width=0.23\textwidth]{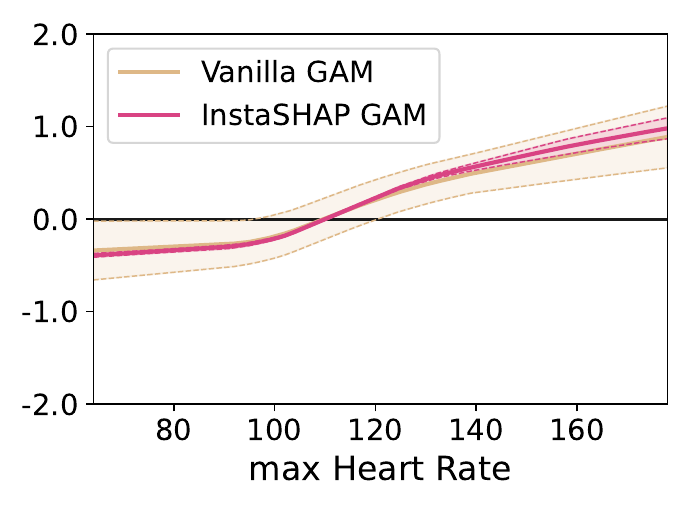}
    \includegraphics[width=0.23\textwidth]{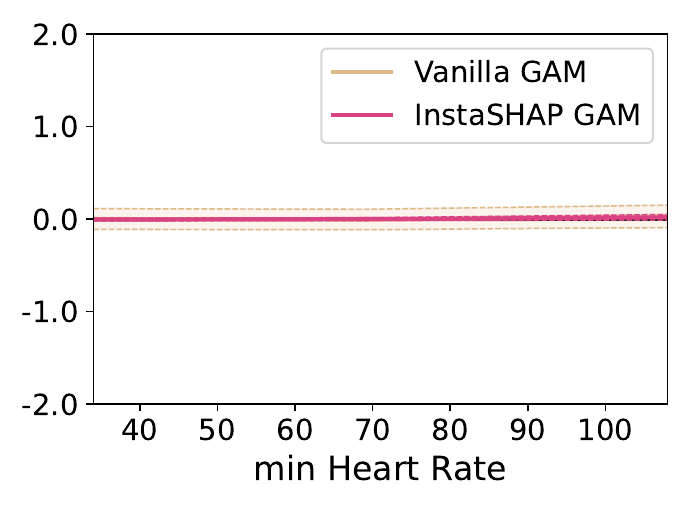}
    \includegraphics[width=0.23\textwidth]
    {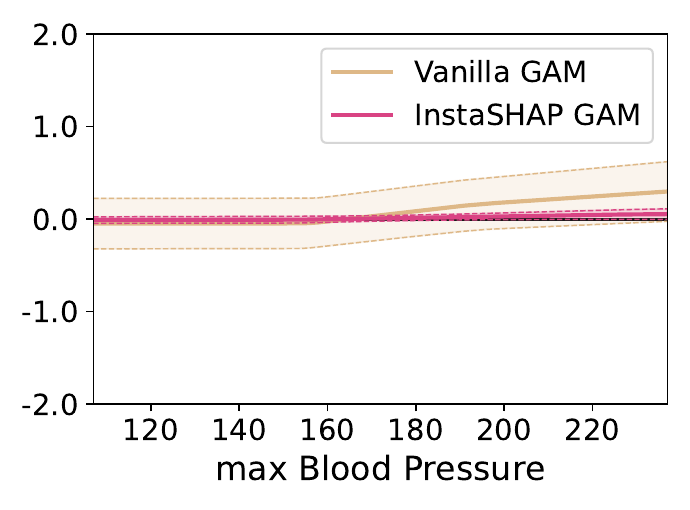}
    \includegraphics[width=0.23\textwidth]{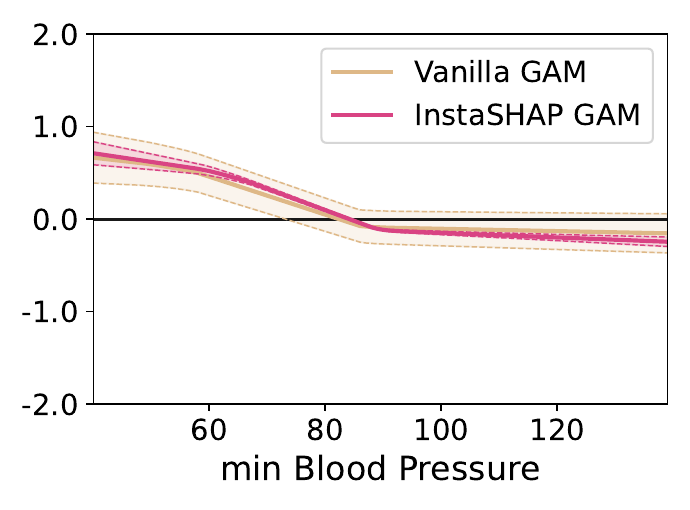}
    \includegraphics[width=0.23\textwidth]{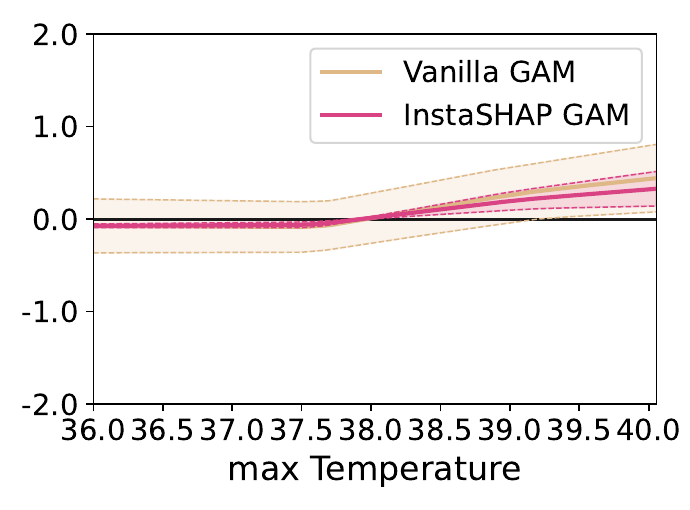}
    \includegraphics[width=0.23\textwidth]{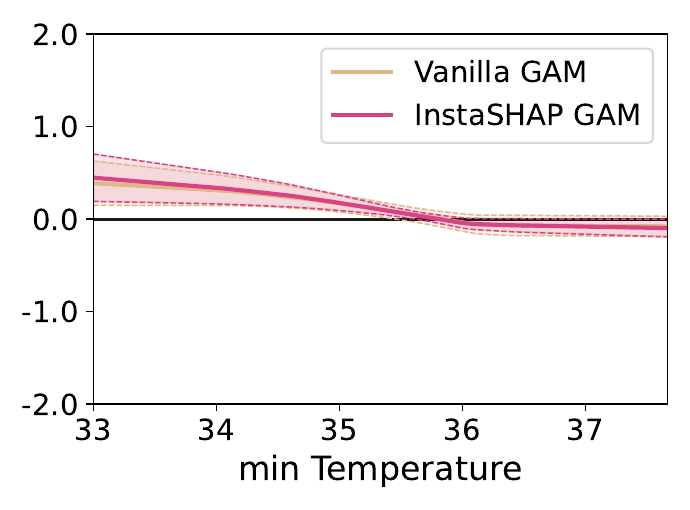}
    \centering
    \includegraphics[width=0.23\textwidth]{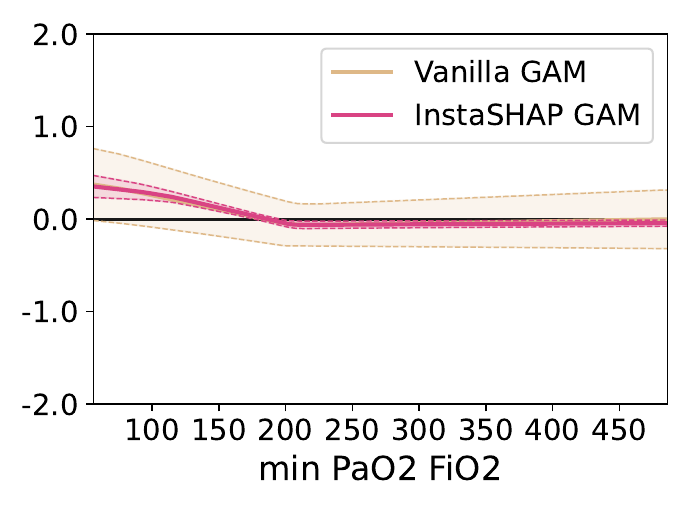}
    \includegraphics[width=0.23\textwidth]{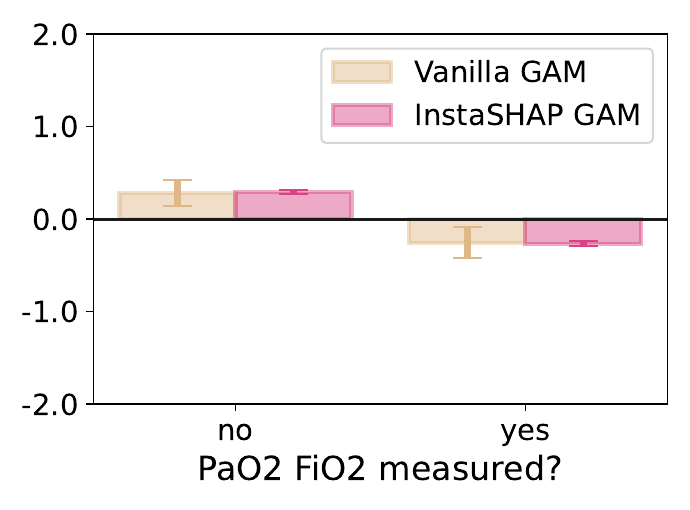}
    \includegraphics[width=0.23\textwidth]{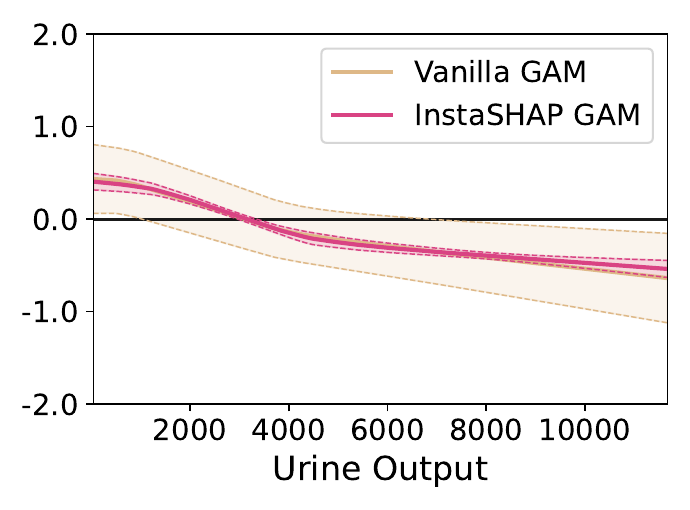}
    \includegraphics[width=0.23\textwidth]{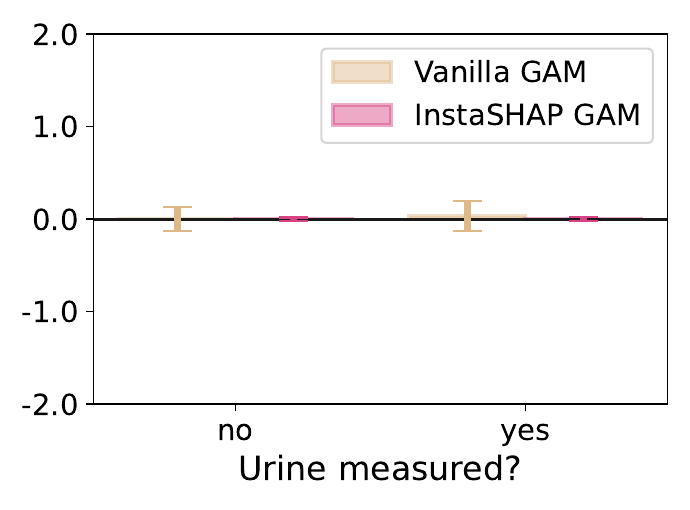}
    \includegraphics[width=0.23\textwidth]{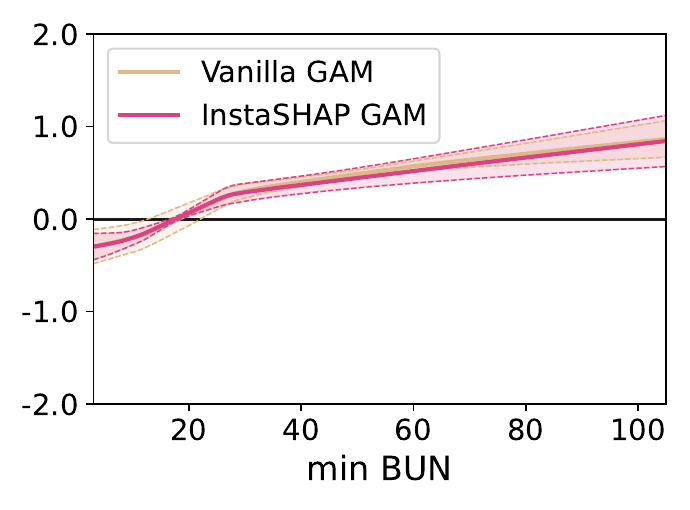}
    \includegraphics[width=0.23\textwidth]{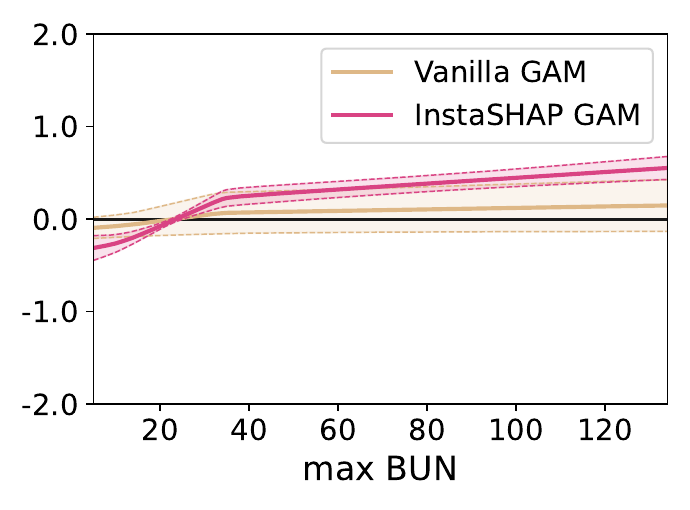}
    \includegraphics[width=0.23\textwidth]{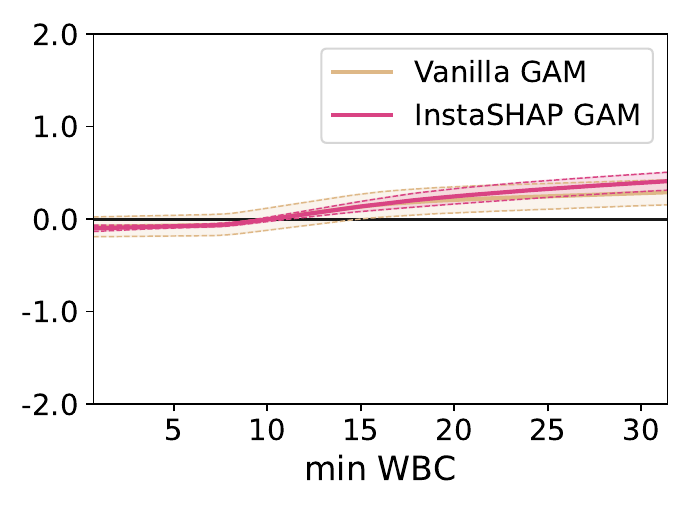}
    \includegraphics[width=0.23\textwidth]{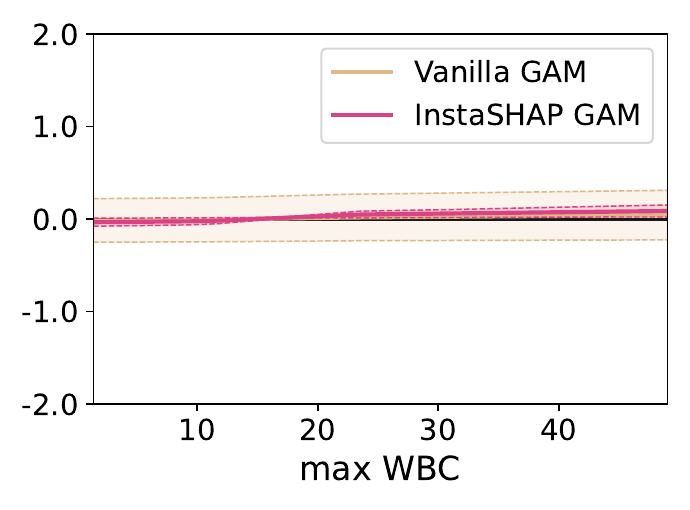}
    \includegraphics[width=0.23\textwidth]{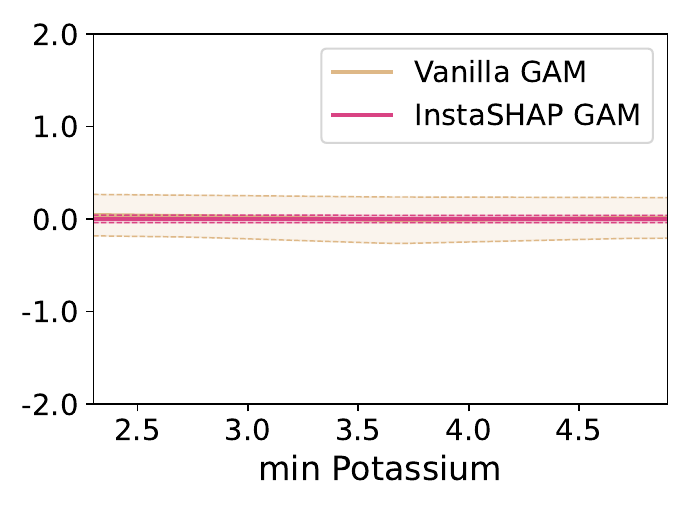}
    \includegraphics[width=0.23\textwidth]{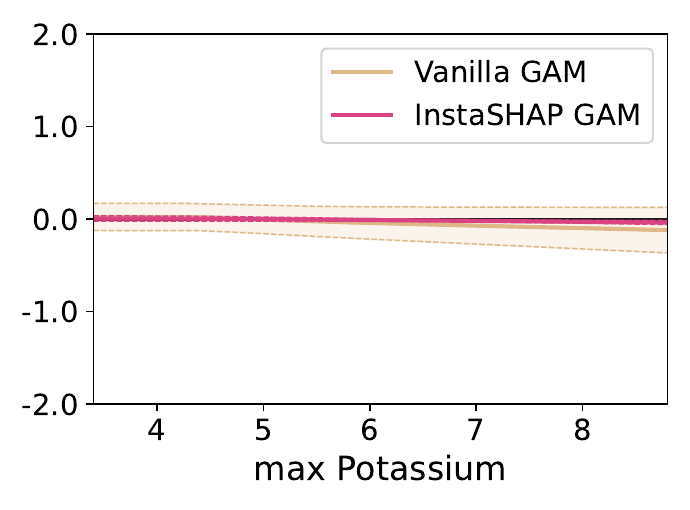}
    \includegraphics[width=0.23\textwidth]{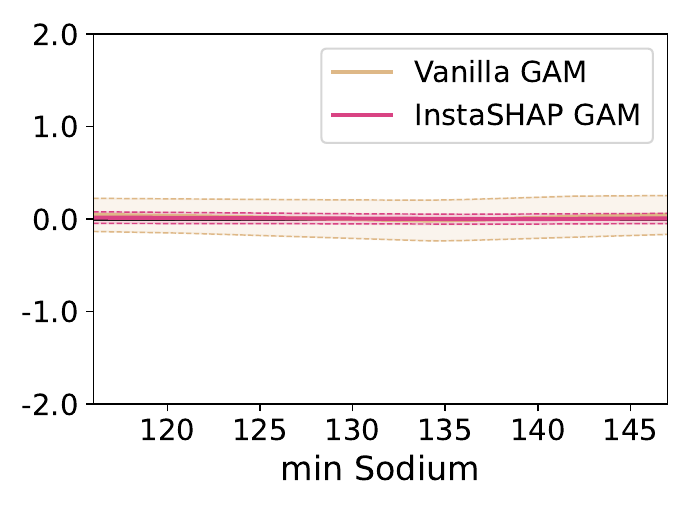}
    \includegraphics[width=0.23\textwidth]{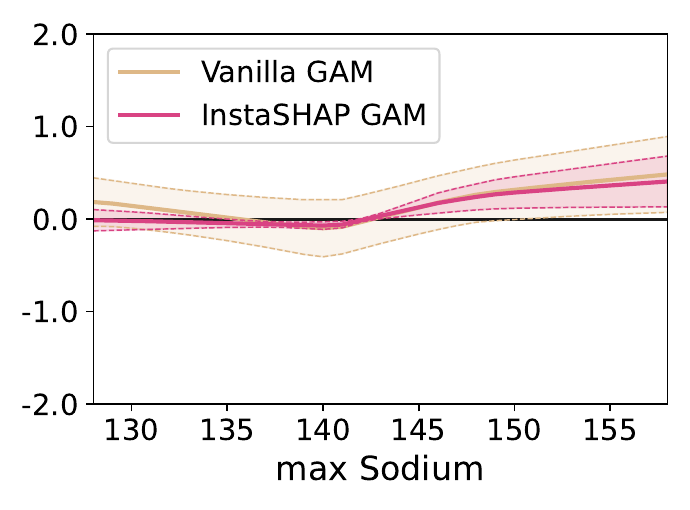}
    \includegraphics[width=0.23\textwidth]{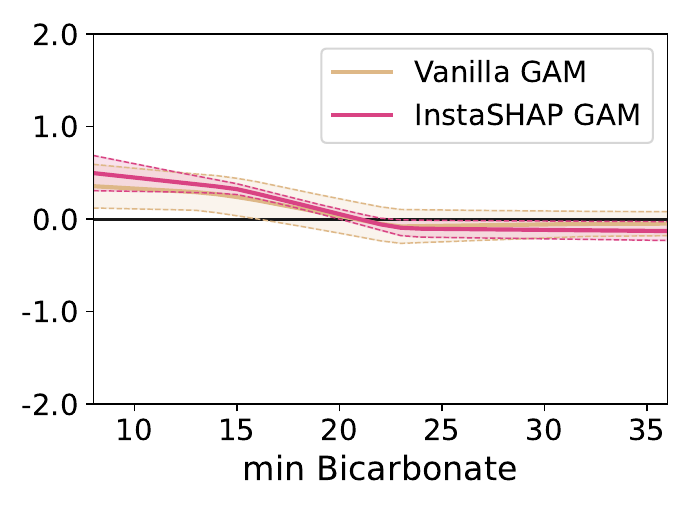}
    \includegraphics[width=0.23\textwidth]{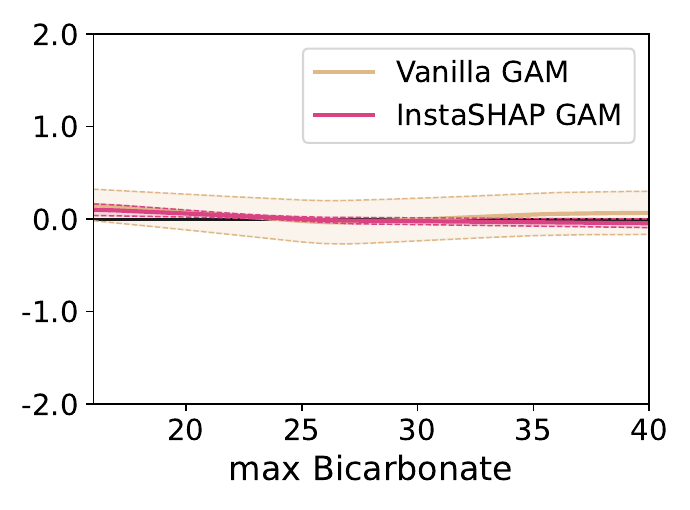}
    \includegraphics[width=0.23\textwidth]{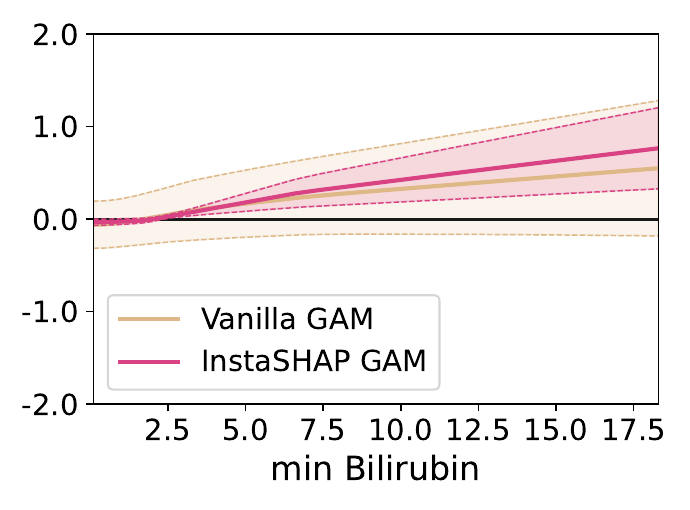}
    \includegraphics[width=0.23\textwidth]{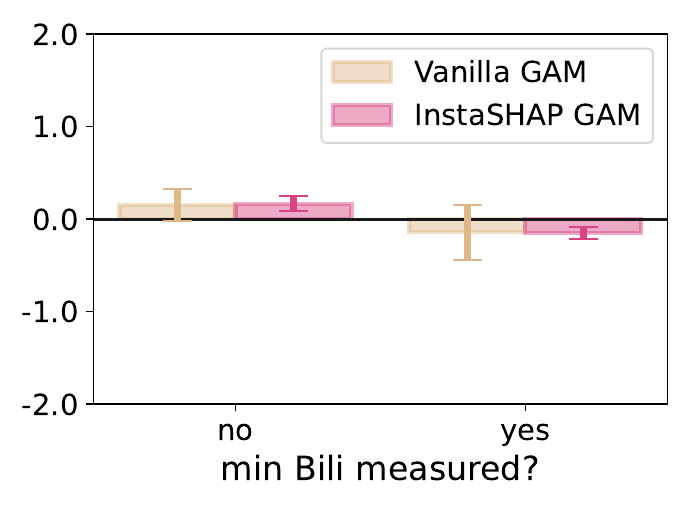}
    \includegraphics[width=0.23\textwidth]{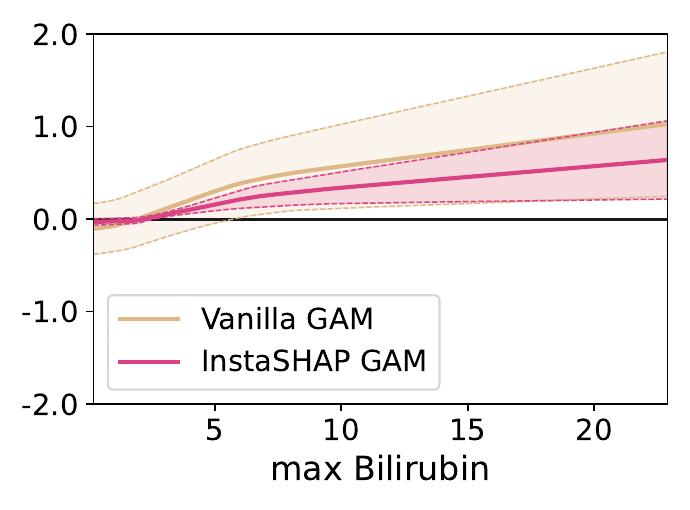}
    \includegraphics[width=0.23\textwidth]{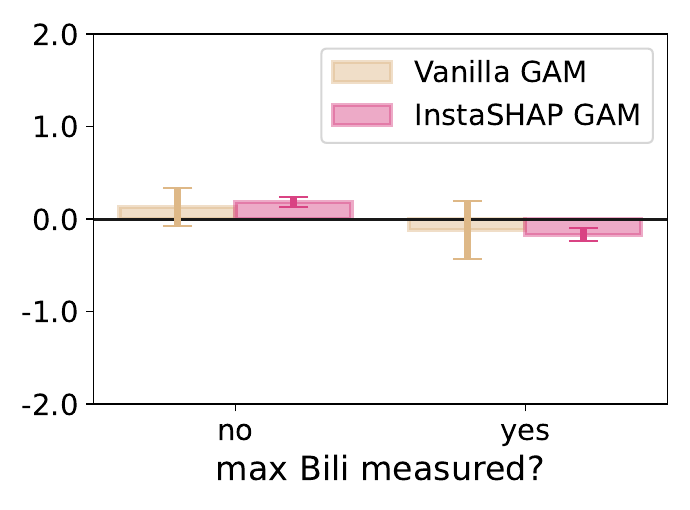}
    \quad\quad\quad
    \quad\quad\quad
    \quad\quad\quad
    \includegraphics[width=0.23\textwidth]{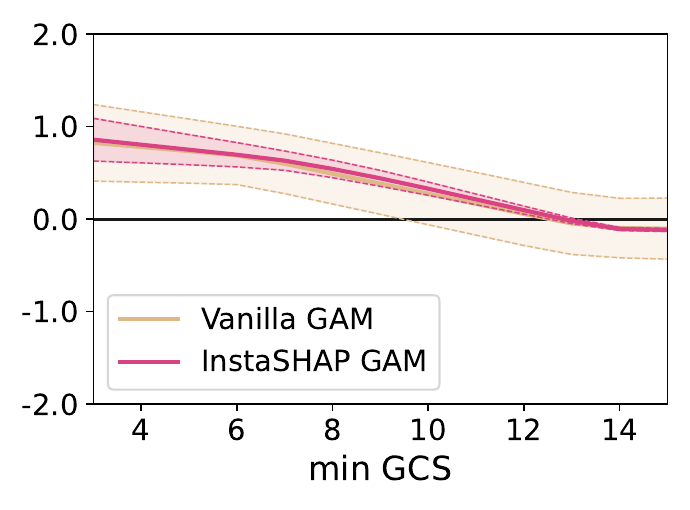}
    \includegraphics[width=0.23\textwidth]{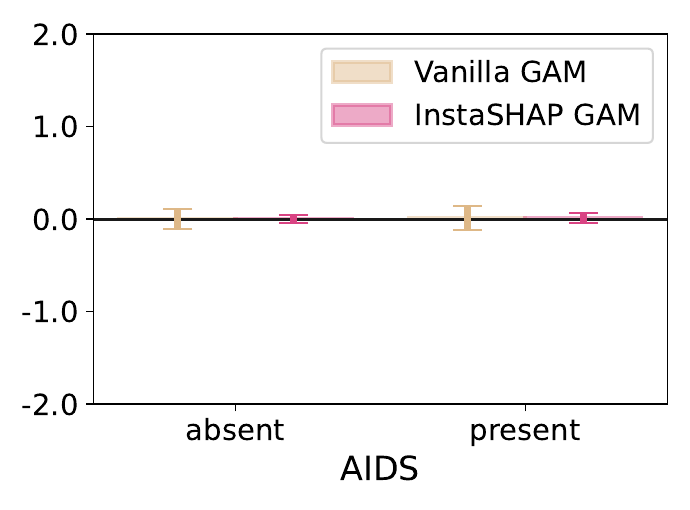}
    \includegraphics[width=0.23\textwidth]{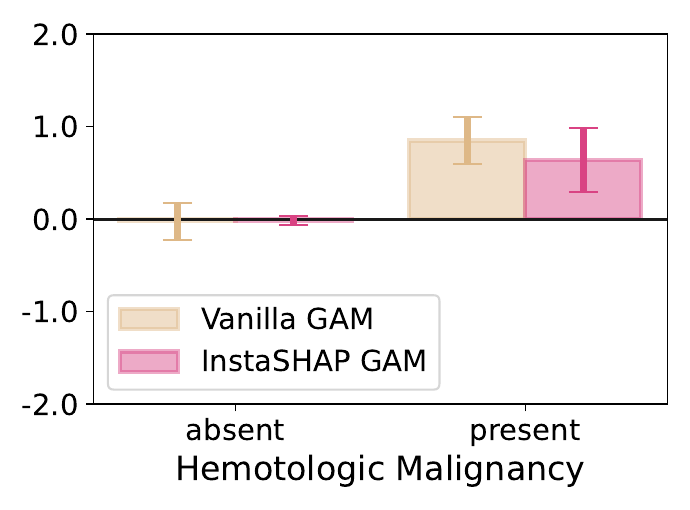}
    \includegraphics[width=0.23\textwidth]{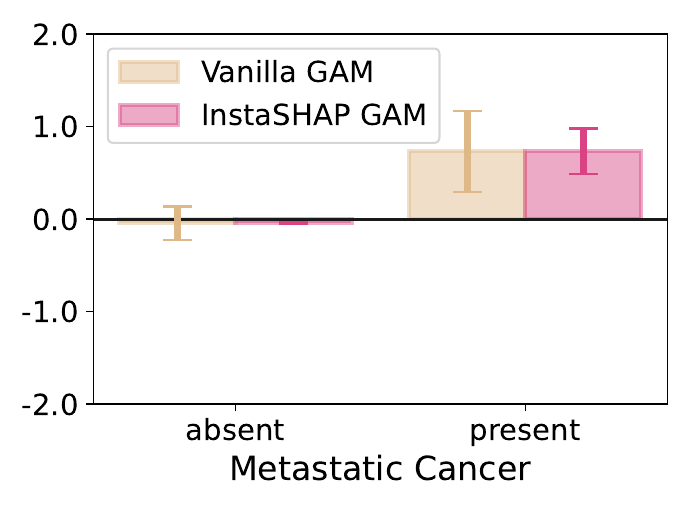}
    \includegraphics[width=0.23\textwidth]{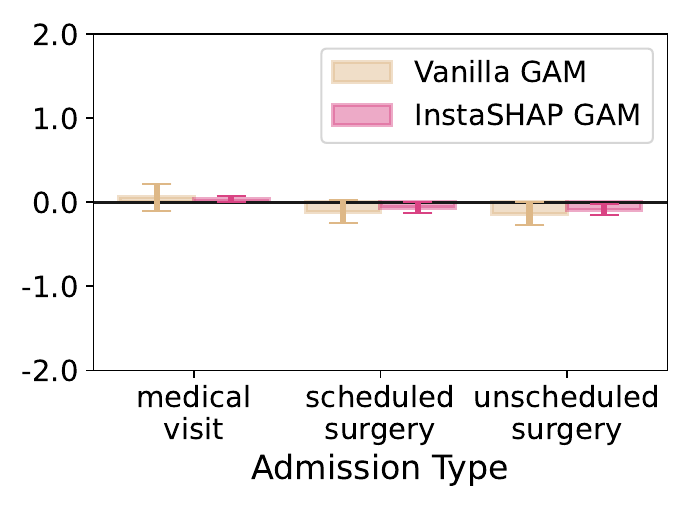}
    \caption{Shape Functions for the MIMIC dataset}
    \label{app_fig:mimic_all_shapes}
\end{figure}

\newpage
\subsection{Additional Finance Results}
We train on the census income dataset which consists of thirteen features used to predict whether or not a person's income surpasses a certain level (\$50,000 annually).
We train an ensemble of five additive models using the vanilla GAM training procedure and the InstaSHAP masked training procedure.
In Figure \ref{app_fig:adults_all_shapes} below, we display all shape functions learned by the 1D additive model.
We plot the mean and one standard deviation according to the ensemble of five models.

The vanilla GAM models achieve accuracies of 
$82.1\%$,  $84.6\%$, $85.1\%$, $84.4\%$, $84.7\%$, 
for an average accuracy of $84.2\%$.
The InstaSHAP GAM models achieve accuracies of
$82.1\%$,  $85.4\%$,  $84.4\%$, $84.7\%$, $84.7\%$, 
for an average accuracy of $84.3\%$.
We again find that the InstaSHAP models have a more consistent interpretation of the dataset via tighter confidence intervals over the ensemble.
Once again, it is assumed that the variance in typically trained GAMs is coming from the inability to consistently interpret the correlations which exist in the dataset.


\begin{figure}[h]
    \centering
    \includegraphics[width=0.23\textwidth]{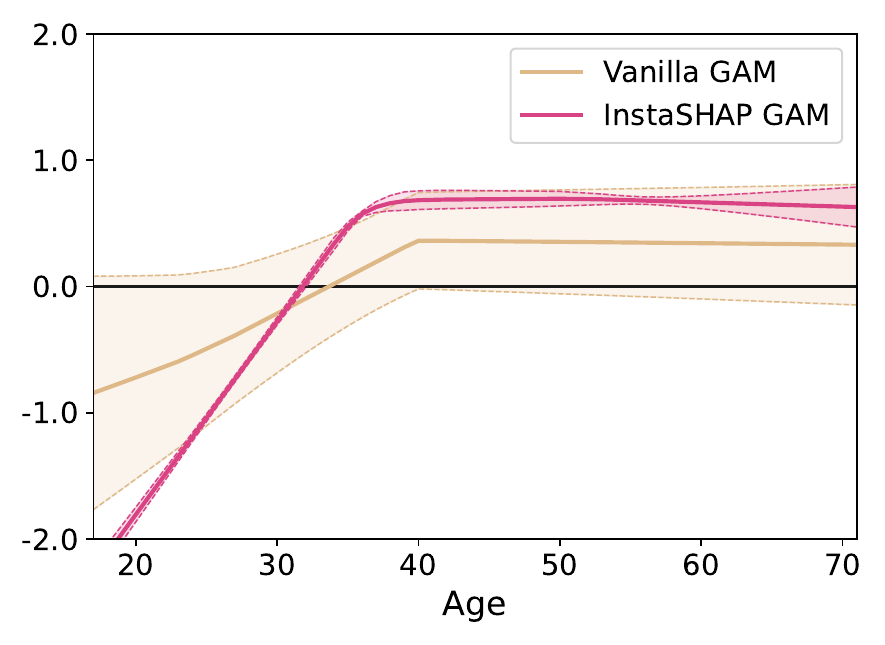}
    \includegraphics[width=0.23\textwidth]{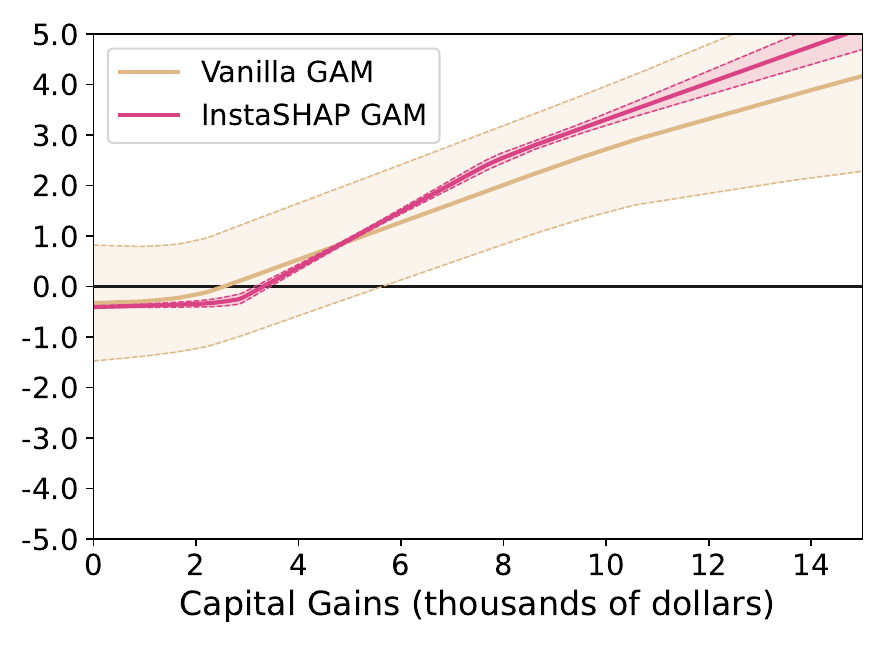}
    \includegraphics[width=0.23\textwidth]{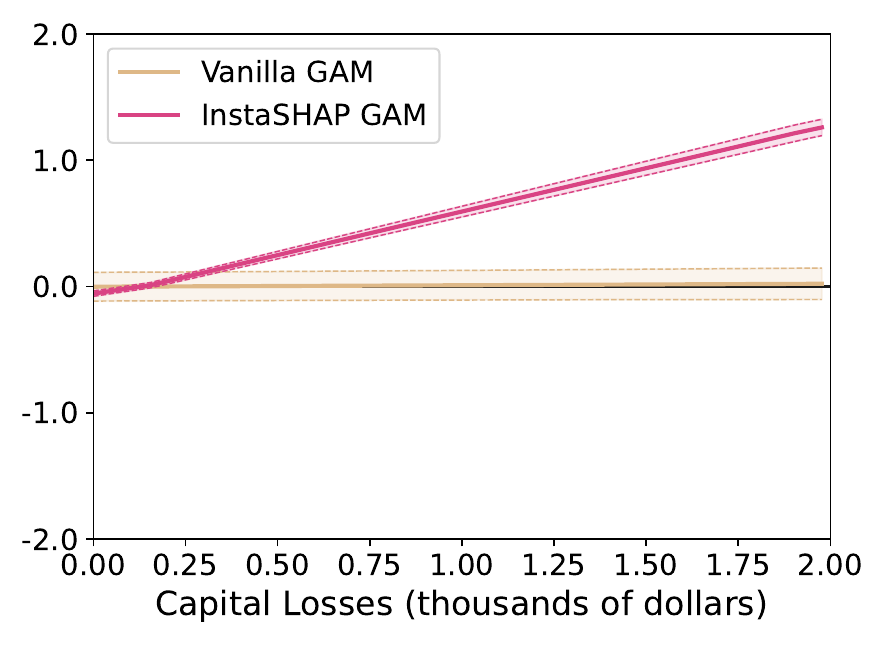}
    \includegraphics[width=0.23\textwidth]{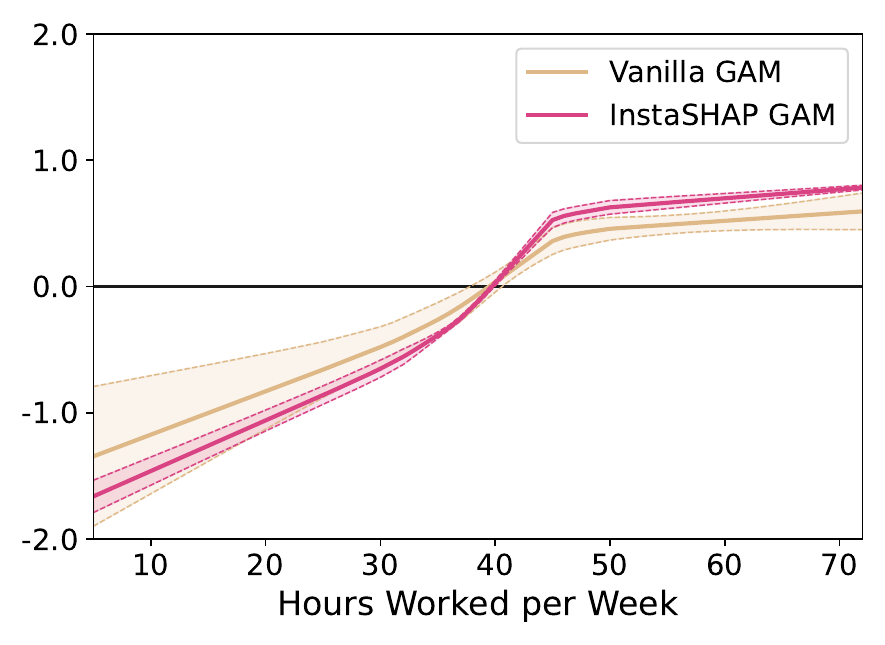}
    \includegraphics[width=0.23\textwidth]{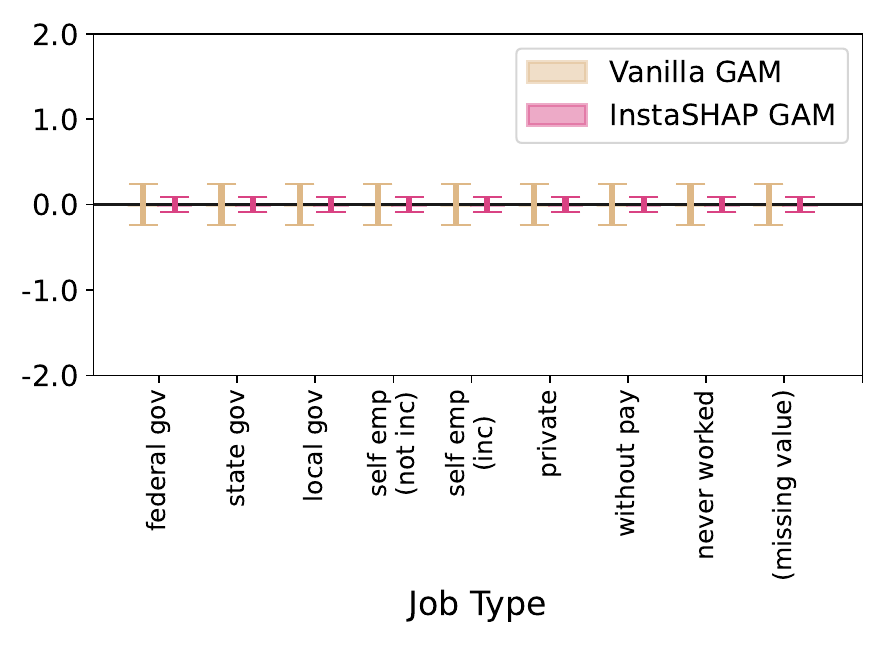}
    \includegraphics[width=0.23\textwidth]
    {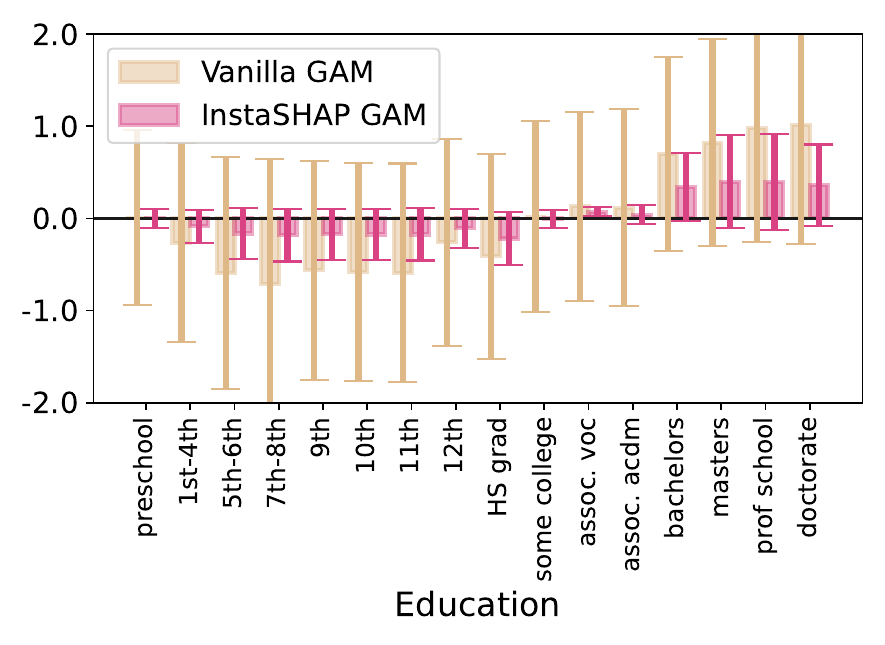}
    \includegraphics[width=0.23\textwidth]{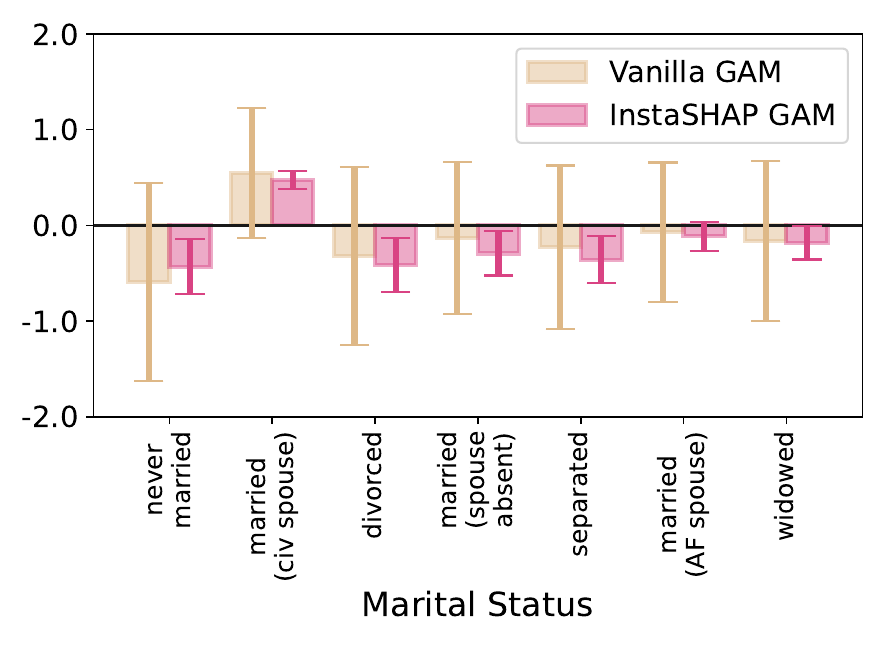}
    \includegraphics[width=0.23\textwidth]{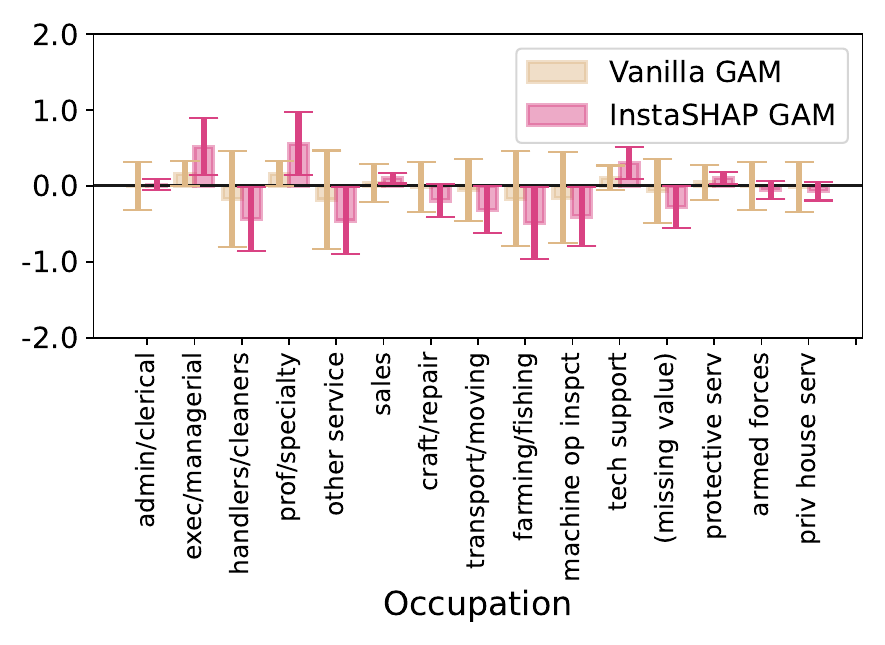}
    \includegraphics[width=0.23\textwidth]{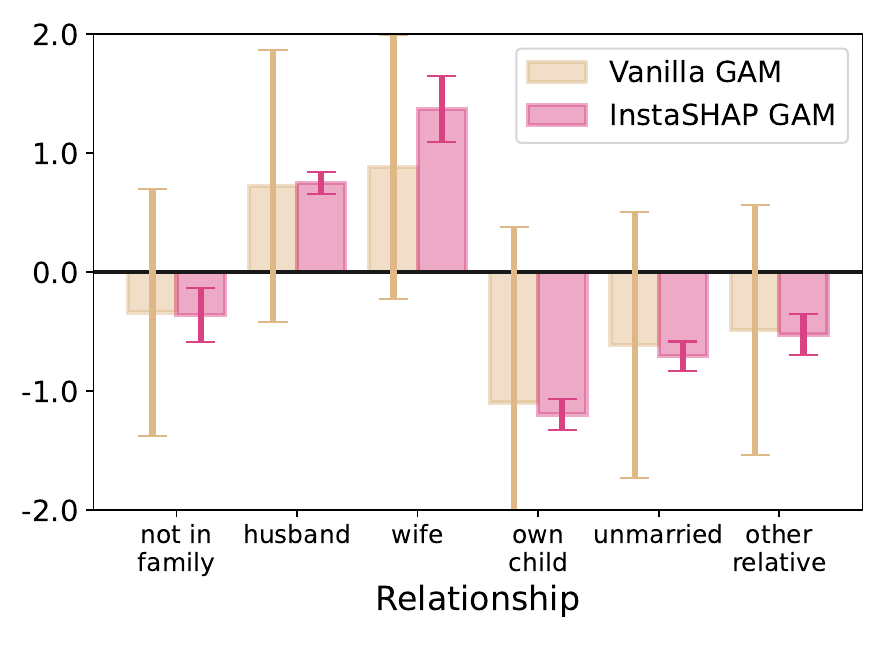}
    \includegraphics[width=0.23\textwidth]{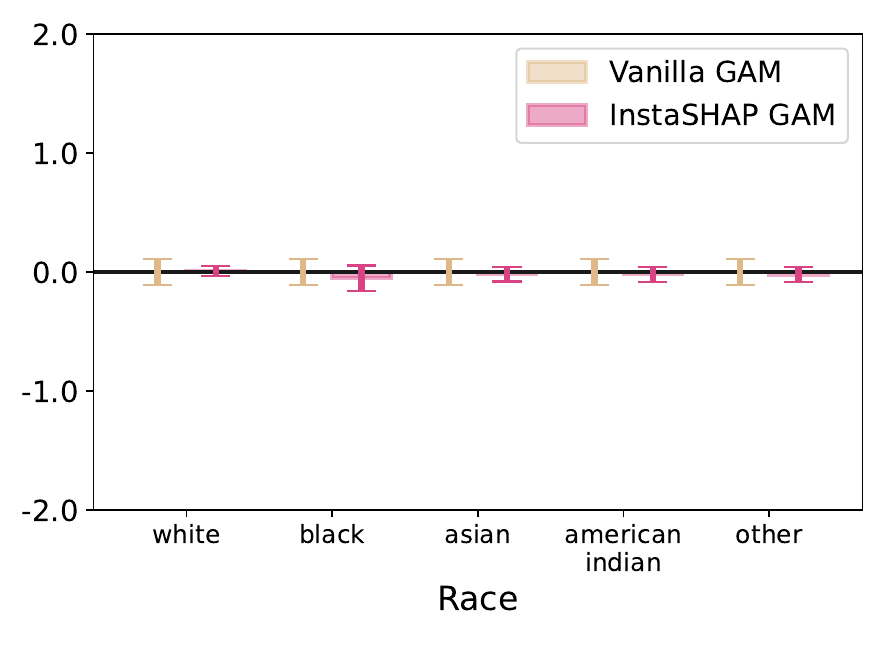}
    \includegraphics[width=0.23\textwidth]{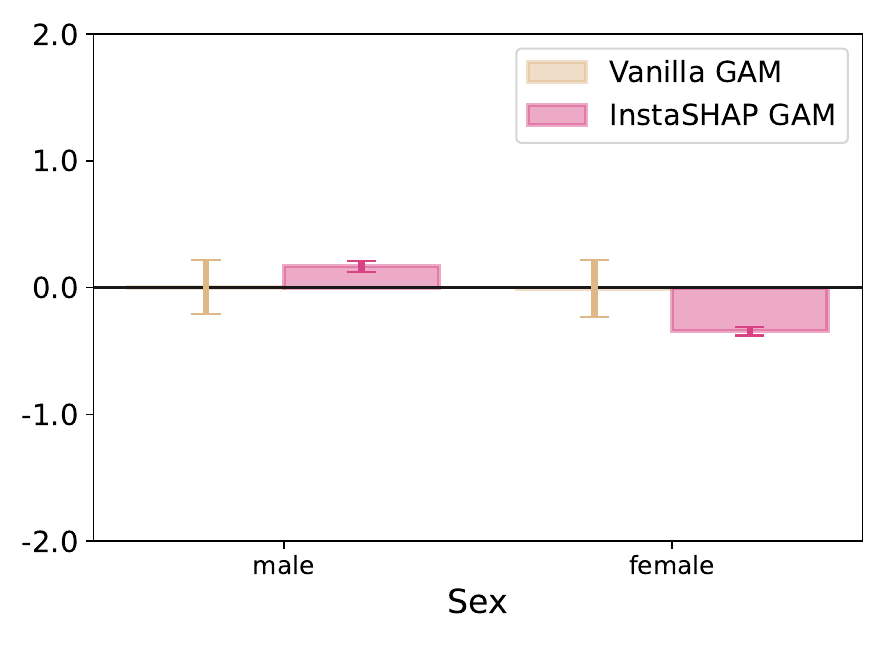}
    \quad\quad\quad
    \quad\quad\quad
    \quad\quad\quad
    \includegraphics[width=0.23\textwidth]{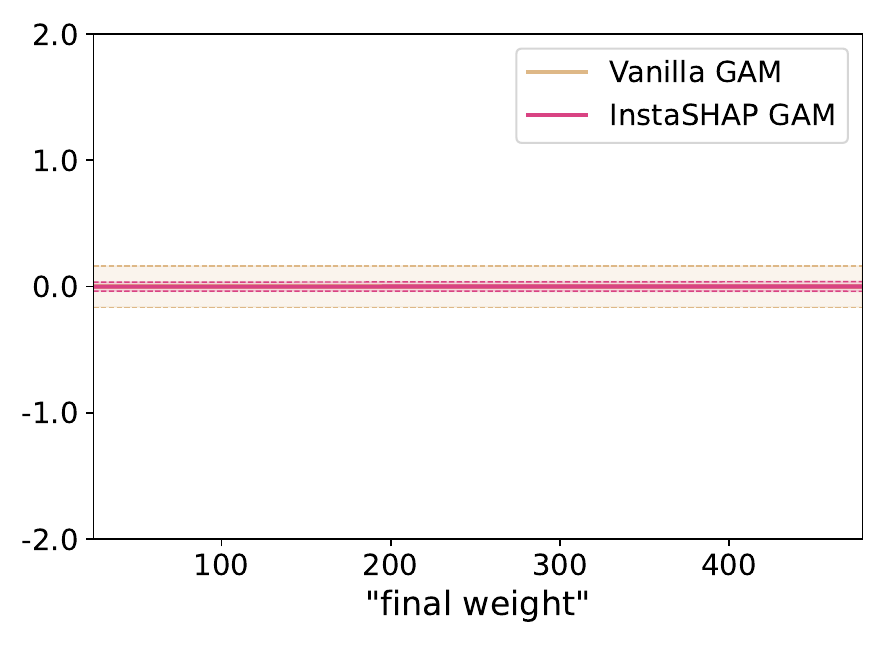}
    \includegraphics[width=0.23\textwidth]{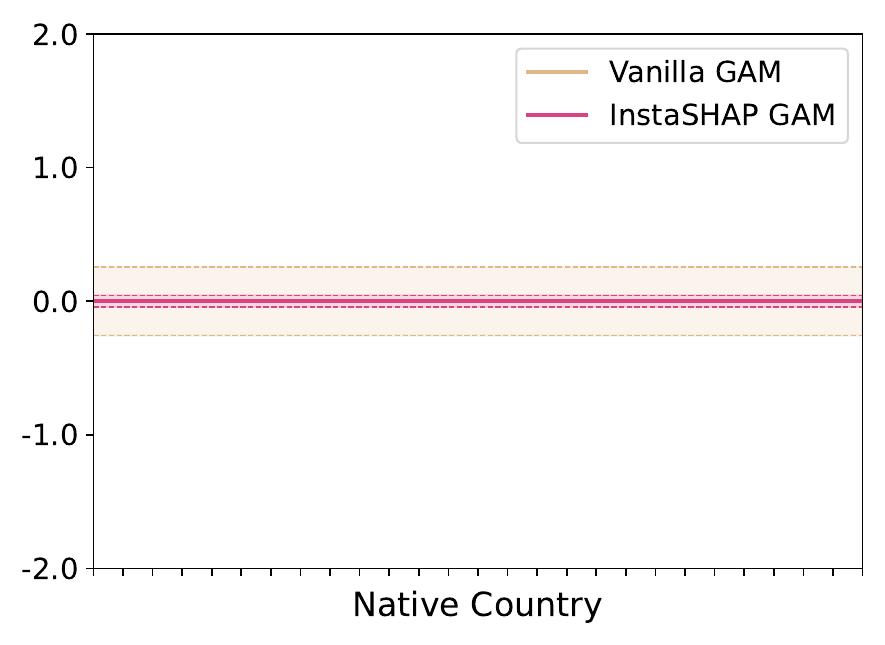}
    \caption{Shape Functions for the Income dataset}
    \label{app_fig:adults_all_shapes}
\end{figure}



\end{document}